%
%
%
%
%
%


%
\documentclass[
  smallextended,
  envcountsect,
]{svjour3}
\smartqed  
\usepackage{graphicx}
%
%
\usepackage{subfigure}

\usepackage{times}  
\usepackage{helvet} 
\usepackage{courier}  
\usepackage{graphicx} 
\frenchspacing  
\setlength{\pdfpagewidth}{8.5in}  
\setlength{\pdfpageheight}{11in}  
\usepackage{wrapfig}
\usepackage{booktabs}       
\usepackage{amsfonts}       
\usepackage{nicefrac}       
\usepackage{microtype}      
\usepackage{dblfloatfix}
\usepackage{pifont}
\usepackage{array}\usepackage{makecell} 
\usepackage{color}
\usepackage{mathrsfs}
\usepackage{amssymb}
\usepackage{amsmath}
\usepackage[page,header]{appendix}
\usepackage{dsfont}
\usepackage{hyperref}
\usepackage[capitalize]{cleveref}
\usepackage{mathtools} 
\usepackage{enumitem}
\usepackage{algorithm}
\usepackage{algorithmic}
\usepackage{multirow}
\usepackage{natbib}

\definecolor{darkgreen}{rgb}{0,0.5,0}
\definecolor{darkred}{rgb}{0.8,0,0}
\definecolor{purple}{rgb}{0.643,0.259,0.859}
\DeclareMathOperator*{\argmax}{arg\,max}
\DeclareMathOperator*{\argmin}{arg\,min}
\graphicspath{ {./images/} }
\DeclarePairedDelimiter\ceil{\lceil}{\rceil}
\newcommand{\norm}[1]{\left\lVert#1\right\rVert}
\usepackage{times}
\newcommand{\cmark}{\ding{51}}%
\newcommand{\xmark}{\ding{55}}%

\setcounter{secnumdepth}{2} 

\usepackage{subfigure}

\newcommand{\stav}[1]{\textcolor{blue}{\{Stav\}}}
%
%
\journalname{Inverse Reinforcement Learning in Contextual MDPs}
\begin{document}

\title{Inverse Reinforcement Learning in Contextual MDPs 
}

\titlerunning{Inverse Reinforcement Learning in Contextual MDPs}        

\author{Stav Belogolovsky* \and
Philip Korsunsky* \and
Shie Mannor \and Chen Tessler \and Tom Zahavy
}


\institute{S. Belogolovsky \at
          \email{stav.belo@gmail.com}
          \and
          P. Korsunsky \at
          \email{philip.korsunsky@gmail.com}
          \and
          * Equal contribution.
}

\date{Received: date / Accepted: date}

\maketitle

\begin{abstract}
We consider the task of Inverse Reinforcement Learning in Contextual Markov Decision Processes (MDPs). In this setting, contexts, which define the reward and transition kernel, are sampled from a distribution. In addition, although the reward is a function of the context, it is not provided to the agent. Instead, the agent observes demonstrations from an optimal policy. The goal is to learn the reward mapping, such that the agent will act optimally even when encountering previously unseen contexts, also known as zero-shot transfer. We formulate this problem as a non-differential convex optimization problem and propose a novel algorithm to compute its subgradients. Based on this scheme, we analyze several methods both theoretically, where we compare the sample complexity and scalability, and empirically. Most importantly, we show both theoretically and empirically that our algorithms perform zero-shot transfer (generalize to new and unseen contexts). Specifically, we present empirical experiments in a dynamic treatment regime, where the goal is to learn a reward function which explains the behavior of expert physicians based on recorded data of them treating patients diagnosed with sepsis.

\end{abstract}

\section{Introduction}\label{intro}
\begin{figure*}[t]
\centering
\includegraphics[width=0.9\linewidth]{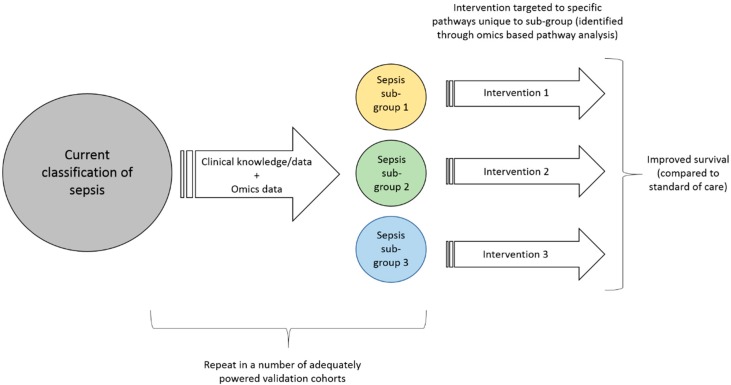}
\caption{Personalized medicine in sepsis treatment. Credit: \cite{itenov2018sepsis}.}
\label{fig: personalized medicine}
\end{figure*}

\noindent
Real-world sequential decision making problems often share three important properties --- (1) \emph{the reward function is often unknown}, yet (2) expert demonstrations can be acquired, and (3) the reward and/or dynamics often depend on a \emph{static parameter}, also known as the context. For a concrete example, consider a dynamic treatment regime \citep{chakraborty2014dynamic}, where a clinician acts to improve a patient's medical condition. While the patient's dynamic measurements, e.g., heart rate and blood pressure, define the state, there are static parameters, e.g., age and weight, which determine how the patient reacts to certain treatments and what form of treatment is optimal.\\

\noindent
The contextual model is motivated by recent trends in personalized medicine, predicted to be one of the technology breakthroughs of 2020 by MIT's Technology Review \citep{mit_technology_review_2020}. As opposed to traditional medicine, which provide a treatment for the ``average patient", in the contextual model, patients are separated into different groups for which the medical decisions are tailored (\cref{fig: personalized medicine}). This enables the decision maker to provide tailored decisions (e.g., treatments) which are more effective, based on these static parameters. \\

\noindent
For example, in \citet{wesselink2018intraoperative}, the authors study organ injury, which may occur when a specific measurement (mean arterial pressure) decreases below a certain threshold. They found that this threshold varies across different patient groups (contextual behavior). In other examples, clinicians set treatment goals for the patients, i.e., they take actions to drive the patient measurements towards some predetermined values. For instance, in acute respiratory distress syndrome (ARDS), clinicians argue that these treatment goals should depend on the static patient information (the context) \citep{berngard2016personalizing}.\\

\noindent
In addition to the contextual structure, we consider the setting where the reward itself is unknown to the agent. This, also, is motivated by real-world problems, in which serious issues may arise when manually attempting to define a reward signal. For instance, when treating patients with sepsis, the only available signal is the mortality of the patient at the end of the treatment \citep{komorowski2018artificial}. While the goal is to improve the patients' medical condition, minimizing mortality does not necessarily capture this objective. This model is illustrated in \cref{fig: coirl diagram}. The agent observes expert interactions with the environment, either through pre-collected data, or through interactive expert interventions. The agent then aims to find a reward which \emph{explains} the behavior of the expert, meaning that the experts policy is optimal with respect to this reward.\\

\noindent
To tackle these problems, we propose the \textbf{Contextual Inverse Reinforcement Learning (COIRL)} framework. Similarly to Inverse Reinforcement Learning \citep[IRL]{ng2000algorithms}, provided expert demonstrations, the goal in COIRL is to learn a reward function which explains the expert's behavior, i.e., a reward function for which the expert behavior is optimal. In contrast to IRL, in COIRL the reward is not only a function of the state features but also the context. Our aim is to provide theoretical analysis and insights into this framework. As such, throughout most of the paper we consider a reward which is linear in both the context and the state features. This analysis enables us to propose algorithms, analyze their behavior and provide theoretical guarantees. We further show empirically in \cref{sec:experiments} that our method can be easily extended to mappings which are non-linear in the context using deep neural nets.  \\

\begin{figure*}[t]
\centering
\includegraphics[width=0.6\linewidth]{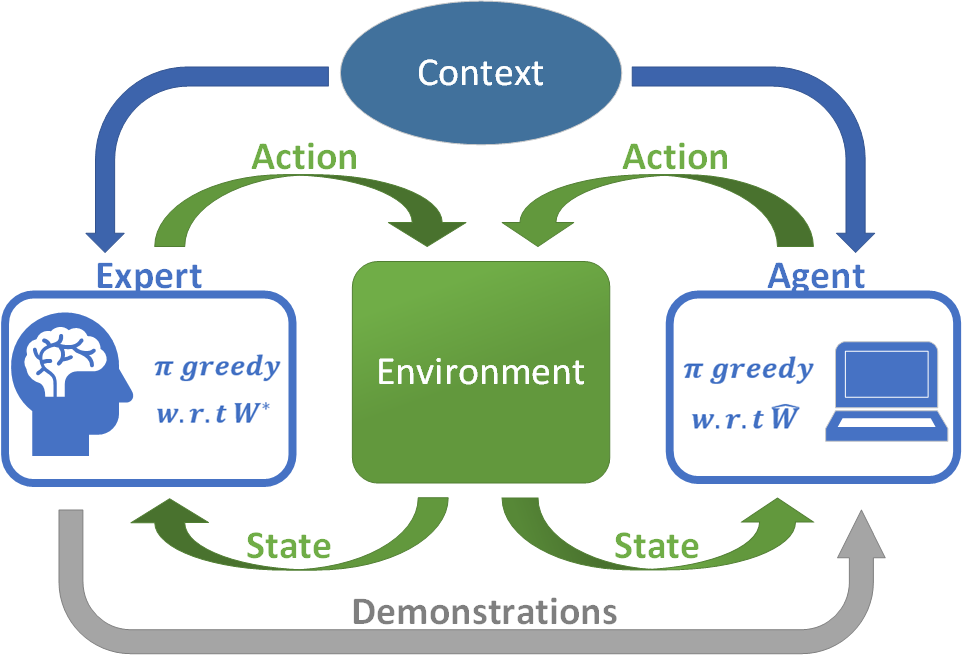}
\caption{The COIRL framework: a context vector parametrizes the environment. For each context, the expert uses the true mapping from contexts to rewards$, W^*,$ and provides demonstrations. The agent learns an estimation of this mapping $\hat{W}$ and acts optimally with respect to it.}
\label{fig: coirl diagram}
\end{figure*}

\noindent
The paper is organized as follows. In \cref{sec:prelim} we introduce the Contextual MDPs and provide relevant notation. In \cref{sec:mda} we formulate COIRL, with a linear mapping, as a convex optimization problem. We show that while this loss is not differentiable, it can be minimized using subgradient descent and provide methods to compute subgradients. We propose algorithms based on Mirror Descent (MDA) and Evolution Strategies (ES) for solving this task and analyze their sample complexity. In addition, in \cref{subsec:ellips_method}, we adapt the cutting plane (ellipsoid) method to the COIRL domain. In \cref{sec:related_work} we discuss how existing IRL approaches can be applied to COIRL problems and their limitations. Finally, in \cref{sec: transfer} we discuss how to efficiently (without re-solving the MDP) perform zero-shot transfer to unseen contexts. \\

\noindent 
These theoretical approaches are then evaluated, empirically, in \cref{sec:experiments}. We perform extensive testing of our methods and the relevant baselines both on toy problems and on a dynamic treatment regime, which is constructed from real data. We evaluate the run-time of IRL vs COIRL, showing that when the structure is indeed contextual, standard IRL schemes are computationally inefficient. We show that COIRL is capable of generalizing (zero-shot transfer) to unseen contexts, while behavioral cloning (log-likelihood action matching) is sub-optimal and struggles to find a good solution. These results show that in contextual problems, COIRL enables the agent to quickly recover a reward mapping that explains the expert's behavior, outperforming previous methods across several metrics and can thus be seen as a promising approach for real-life decision making.\\

\noindent
Our contribution is three fold: First, the formulation of COIRL problem as a convex optimization problem, and the novel adaptation of the descent methods to this setting. Second, we provide theoretical analysis for the \emph{linear} case for all of the proposed methods. Third, we bridge between the theoretical results and real-life application through a series of experiments that aim to apply COIRL to sepsis treatment (\cref{sec:experiments}).

\section{Preliminaries}\label{sec:prelim}
\textbf{Contextual MDPs:}
A Markov Decision Process \cite[\textbf{MDP}]{puterman1994markov} is defined by the tuple $(\mathcal{S},\mathcal{A},P,\xi,R,\gamma)$ where $\mathcal{S}$ is a finite state space, $\mathcal{A}$ a finite action space, $P : S \times S \times A \to [0,1]$ the transition kernel, $\xi$ the initial state distribution, $R: \mathcal{S} \to \mathbb{R}$ the reward function and $\gamma\in[0,1)$ is the discount factor. A Contextual MDP \cite[\textbf{CMDP}]{hallak2015contextual} is an extension of an MDP, and is defined by $(\mathcal{C},\mathcal{S},\mathcal{A},\mathcal{M},\gamma)$ where $\mathcal{C}$ is the context space, and $\mathcal{M}$ is a mapping from contexts $c \in \mathcal{C}$ to MDPs: $\mathcal{M}(c) = (\mathcal{S}, \mathcal{A}, P_c, \xi, R_c, \gamma)$. For consistency with prior work, we consider the discounted infinite horizon scenario. We emphasize here that all the results in this paper can be easily extended to the episodic finite horizon and the average reward criteria.\\

\noindent
We consider a setting in which each state is associated with a feature vector $\phi : \mathcal{S} \rightarrow[0,1]^k$, and the reward for context $c$ is a linear combination of the state features: $R^*_c(s) = f^*(c)^T\phi(s)$. The goal is to approximate $f^*(c)$ using a function $f_W(c)$ with parameters $W$. This notation allows us to present our algorithms for any function approximator $f_W(c)$, and in particular a deep neural network (DNN).\\

\noindent
For the theoretical analysis, we will further assume a \textit{linear setting}, in which the reward function and dynamics are linear in the context. Formally:
$$f^*(c)= c^TW^*,\:  f_W (c) = c^T W,\: W^* \in \mathcal{W},\: \text{and } P_c(s'|s,a) = c^T \begin{bmatrix} P_1(s'|s,a) \\ \vdots \\ P_d(s'|s,a) \end{bmatrix}$$
for some convex set $\mathcal{W}$. In order for the contextual dynamics to be well-defined, we assume the context space is the standard $d-1$ dimensional simplex: $\mathcal{C} = \Delta_{d-1}$. One interpretation of this model is that each row in the mapping $W^*$ along with the corresponding transition kernels defines a base MDP, and the MDP for a specific context is a convex combination of these base environments.\\

\noindent
We focus deterministic policies $\pi: \mathcal{S} \to \mathcal{A}$ which dictate the agent's behavior at each state. The value of a policy $\pi$ for context $c$ is:
\begin{equation*}
    V^\pi_c = E_{\xi,P_c,\pi}\left[\sum_{t=0}^{\infty} \gamma^t R^*_c(s_t)\right] =  f^*(c)^T\mu^\pi_c \enspace ,
\end{equation*}
where $\mu^\pi_c:=E_{\xi,P_c,\pi}[\sum_{t=0}^{\infty} \gamma^t \phi(s_t)]\in\mathbb{R}^k$ is called the \emph{feature expectations} of $\pi$ for context $c$. For other RL criteria there exist equivalent definitions of feature expectations; see  \citet{zahavy2019average} for the average reward.
We also denote by $V^\pi_c(s), \mu^\pi_c(s)$ the value and feature expectations for $\xi = \mathds{1}_s$. The action-value function, or the Q-function, is defined by: $Q_c^\pi(s,a) = R^*_c(s) + \gamma E_{s' \sim P_c(\cdot|s,a)}V^\pi_c(s')$. For the optimal policy with respect to (w.r.t.) a context $c$, we denote the above functions by $V^*_c, Q^*_c,\mu^*_c$. For any context $c$, $\pi^*_c$ denotes the optimal policy w.r.t. $R_c^*$, and $\hat{\pi}_c(W)$ denotes the optimal policy w.r.t. $\hat R_c(s) = f_W (c)^T\phi(s)$.\\

\noindent
For simpler analysis, we define a "flattening" operator, converting a matrix to a vector: $\mathbb{R}^{d\times k}\rightarrow \mathbb{R}^{d\cdot k}$ by $\underbar{$W$}=\begin{bmatrix} w_{1,1}, \hdots ,w_{1,k}, \hdots ,w_{d,1}, \hdots ,w_{d,k}\end{bmatrix}$. We also define the operator $\odot$ to be the composition of the flattening operator and the outer product: $u \odot v = \begin{bmatrix} u_1v_1, \hdots ,u_1v_k, \hdots ,u_dv_1, \hdots ,u_dv_k\end{bmatrix}$.
Therefore, the value of policy $\pi$ for context $c$ is given by $V^\pi_{c} = c^TW^*\mu^\pi_c = \underbar{$W^*$}^T(c\odot\mu^\pi_c),$ where $||c\odot\mu^\pi_c||_1 \leq \frac{k}{1-\gamma}$.\\

\subsection{Apprenticeship Learning and Inverse Reinforcement Learning}

In Apprenticeship Learning (AL), the reward function is unknown, and we denote the MDP without the reward function (also commonly called a controlled Markov chain) by  MDP\textbackslash R. Similarly, we denote a CMDP without a mapping of context to reward by \textbf{CMDP\textbackslash M}.
\\

\noindent
Instead of manually tweaking the reward to produce the desired behavior, the idea is to observe and mimic an expert. The literature on IRL is quite vast and dates back to \citep{ng2000algorithms, abbeel2004apprenticeship}. In this setting, the reward function (while unknown to the apprentice) is a linear combination of a set of known features as we defined above. The expert demonstrates a set of trajectories that are used to estimate the feature expectations of its policy $\pi_E$, denoted by $\mu_E $. The goal is to find a policy $\pi$, whose feature expectations are close to this estimate, and hence will have a similar return with respect to any weight vector $w$.\\

\noindent
Formally, AL is posed as a two-player zero-sum game, where the objective is to find a policy $\pi$ that does at least as well as the expert with respect to any reward function of the form $r(s) = w\cdot \phi(s), w\in\mathcal{W}$. That is we solve 
\begin{equation}
\label{eq:maxmin} 
   \max_{\pi \in \Pi}\min_{w\in \mathcal{W}}  \left[ w \cdot \mu(\pi) - w \cdot \mu_E \right]
\end{equation}
where $\Pi$ denotes the set of mixed policies \citep{abbeel2004apprenticeship}, in which a deterministic policy is sampled according to a distribution at time 0, and executed from that point on. Thus, this policy class can be represented as a convex set of vectors -- the distributions over the deterministic policies.\\

\noindent
They define the problem of approximately solving \cref{eq:maxmin} as AL, i.e., finding $\pi$ such that 
\begin{equation}
\label{eq:al}
\forall w\in \mathcal{W}: w\cdot \mu(\pi) \ge w\cdot \mu_E - \epsilon + f^\star.
\end{equation}

\noindent
If we denote the value of \cref{eq:maxmin}  by $f^\star$ then, due to the von-Neumann minimax theorem we also have that
\begin{equation}
    \label{eq:minmax} 
    f^\star = \min_{w\in \mathcal{W}} \max_{\pi\in \Pi} \left[ w \cdot \mu(\pi) - w \cdot \mu_E \right].
\end{equation}
We will later use this formulation to define the IRL objective, i.e., finding $w \in \mathcal{W}$ such that 
\begin{equation}
\label{eq:irl}
\forall \pi \in \Pi: w\cdot \mu_E \ge w\cdot \mu(\pi) - \epsilon - f^\star;
\end{equation}

\noindent
\citet{abbeel2004apprenticeship} suggested two algorithms to solve \cref{eq:al} for the case that $\mathcal{W}$ is a ball in the euclidean norm; one that is based on a maximum margin solver and a simpler projection algorithm. The latter starts with an arbitrary policy $\pi_0$ and computes its feature expectation $\mu_0$. At step $t$ they define a reward function using weight vector $w_t = \mu_E-\bar\mu_{t-1}$ and find the policy $\pi_t$ that maximizes it. $\Bar \mu_t$ is a convex combination of feature expectations of previous (deterministic) policies $\bar{\mu}_t = \sum^t _{j=1}\alpha_j \mu(\pi_j).$ They show that in order to get that $\norm{\bar{\mu}_T-\mu}\le \epsilon$, it suffices to run the algorithm for $T=O(\frac{k}{(1-\gamma)^2\epsilon^2}\log(\frac{k}{(1-\gamma)\epsilon}))$ iterations. 
\\

\noindent
Recently, \citet{zahavy2019apprenticeship} showed that the projection algorithm is in fact equivalent to a Frank-Wolfe method for finding the projection of the feature expectations of the expert on the feature expectations polytope – the convex hull of the feature expectations of all the deterministic policies in the MDP. The Frank-Wolfe analysis gives the projection method of \cite{abbeel2004apprenticeship} a slightly tighter bound of $T=O(\frac{k}{(1-\gamma)^2\epsilon^2}).$ In addition, a variation of the FW method that is based on taking “away steps” \citep{garber2016linearly,jaggi2013revisiting} achieves a linear rate of convergence, i.e., it is logarithmic in $\epsilon.$
\\

\noindent
Another type of algorithms, based on online convex optimization, was proposed by \citet{syed2008game}. In this approach, in each round the ``reward player" plays an online convex optimization algorithm on losses $l_t(w_t) = w_t\cdot (\mu_E - \mu(\pi_t))$; and the ``policy player" plays the best response, i.e, the policy $\pi_t$ that maximizes the return $\mu(\pi_t)\cdot w_t$ at time $t$. The results in \citep{syed2008game} use a specific instance of MDA where the optimization set is the simplex and distances are measured w.r.t $\norm{\cdot}_1.$ This version of MDA is known as multiplicative weights or Hedge.  The algorithm runs for $T$ steps and returns a mixed policy $\psi$ that draws with probability $1/T$ a policy $\pi_t, t=1,\ldots,T$. Thus, 
\begin{align}
f^\star &\le \frac{1}{T}\sum\nolimits_{t=1}^T\max_{\pi\in\Pi} \left[ w_t\cdot\mu(\pi)-w_t\cdot \mu_E\right] \nonumber \\
& =     \frac{1}{T}\sum\nolimits_{t=1}^T \left[ w_t\cdot\mu(\pi_t)-w_t\cdot\mu_E\right] \label{line:ineq}  \\
& \le   \min_{w\in\mathcal{W}}\frac{1}{T}\sum_{t=1}^T  w\cdot\left[ \mu(\pi_t)-\mu_E\right] +
{O}\left(\frac{\sqrt{\log(k)}}{(1-\gamma)\sqrt{T}}\right) \label{line:noregret}\\
& =  \min_{w\in\mathcal{W}} w \cdot \left( \mu(\psi) - \mu\right) + {O}\left(\frac{\sqrt{\log(k)}}{(1-\gamma)\sqrt{T}}\right),\label{eq:mwal}
\end{align}
where \cref{line:ineq} follows from the fact that the policy player plays the best response, that is, $\pi_t$ is the optimal policy w.r.t the reward  $w_t;$ \cref{line:noregret} follows from the fact that the reward player plays a no-regret algorithm, e.g., online MDA. Thus, they get that $\forall w\in \mathcal{W}: w\cdot \mu(\psi) \ge w\cdot \mu + f^\star - {O}\left(\frac{1}{\sqrt{T}}\right)$. \footnote{The $O$ notation hides the dependency in $k$ and $\gamma$ .} \\

\noindent
\textbf{Learned dynamics:} Finally, we note that majority of AL papers consider the problem of learning the transition kernel and initial state distribution as an orthogonal 'supervised learning' problem to the AL problem. That is, the algorithm start by approximating the dynamics form samples and then follows by executing the AL algorithm on the approximated dynamics \citep{abbeel2004apprenticeship,syed2008game}. In this paper we adapt this principle. We also note that it is possible to learn a transition kernel and an initial state distribution that are parametrized by the context. Existing methods, such as in \citet{modi2018markov}, can be used to learn contextual transition kernels. Furthermore, in domains that allow access to the real environment, \citet{10.1145/1102351.1102352} provides theoretical bounds for the estimated dynamics of the frequently visited state-action pairs. Thus, we assume $P_c$ is known when discussing suggested methods in \cref{sec:methods}, which enables the computation of feature expectations for any context and policy. In \cref{subsec:med_real} we present an example of this principle, where we use a context-dependent model to estimate the dynamics.

\vspace{-0.5cm}
\section{Methods}\label{sec:methods}

In the previous section we have seen AL algorithms for finding a policy that satisfies \cref{eq:al}. In a CMDP this policy will have to be a function of the context, but unfortunately, it is not clear how to analyze contextual policies. Instead, we follow the approach that was taken in the CMDP literature and aim to learn the linear mapping from contexts to rewards \citep{hallak2015contextual, modi2018markov, modi2019contextual}. This requires us to design an IRL algorithm instead of an AL algorithm, i.e., to solve \cref{eq:irl} rather than \cref{eq:al}. Concretely, the goal in Contextual IRL is to approximate the mapping $f^*(c)$ by observing an expert (for each context $c$, the expert provides a demonstration from $\pi^*_c$). \\


\noindent
This Section is organized as follows. We begin with \cref{subsec:mda}, where we formulate COIRL as a convex optimization problem and derive subgradient descent algorithms for it based on the Mirror Descent Algorithm (MDA). Furthermore, we show that MDA can learn efficiently even when there is only a single expert demonstration per context. This novel approach is designed for COIRL but can be applied to standard IRL problems as well.\\

\noindent
In \cref{subsec:ellips_method} we present a cutting plane method for COIRL that is based on the ellipsoid algorithm. This algorithm requires, in addition to demonstrations, that the expert evaluate the agent's policy and provide its demonstration only if the agent's policy is sub-optimal. \\

\noindent
In \cref{sec:related_work} we discuss how existing IRL algorithms can be adapted to the COIRL setting for domains with finite context spaces and how they compare to COIRL, which we later verify in the experiments section. Finally, in \cref{sec: transfer} we explore methods for efficient transfer to unseen contexts without additional planning. \\




\subsection{Mirrored Descent for COIRL}\label{sec:mda}
\label{subsec:mda}
\subsubsection{Problem formulation}
In this section, we derive and analyze convex optimization algorithms for COIRL that minimize the following loss function, 

\begin{equation}
\label{eq:loss}
     \text{Loss}(W) = \mathbb{E}_c \max_{\pi} \left[ f_W(c) \cdot \Big(\mu^\pi_c - \mu^*_c)\Big) \right] 
     = \mathbb{E}_c \Big[ f_W (c) \cdot \big(\mu^{\hat\pi_c(
     W)}_c - \mu^*_c\big)\Big] \, . 
\end{equation}

\begin{remark}
We analyze the descent methods for the linear mapping $f(c)=c^{T}W$. It is possible to extend the analysis to general function classes (parameterized by $W$), where $\frac{\partial f}{\partial W}$ is computable and $f$ is convex. In this case, $\frac{\partial f}{\partial W}$ aggregates to the descent direction instead of the context, $c$, and similar sample complexity bounds can be achieved. 
\end{remark}

\noindent
The following lemma suggests that if $W$ is a minimizer of \cref{eq:loss}, then the expert policy is optimal w.r.t reward $\hat R_c$  for any context.
\begin{lemma}
\label{lemma:loss}
$\text{Loss}(W)$ satisfies the following properties: \textbf{(1)} For any $W$ the loss is greater or equal to zero. \textbf{(2)} If $\text{Loss}(W)=0$ then for any context, the expert policy is the optimal policy w.r.t.~reward $\hat R_c(s)=c^TW\phi(s)$.
\end{lemma}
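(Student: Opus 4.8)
\emph{Proof plan.} The plan is to establish both claims by reasoning pointwise in the context $c$ about the integrand
$g_W(c) := \max_{\pi}\big[f_W(c)\cdot(\mu^\pi_c-\mu^*_c)\big]$, and only integrating over the context distribution at the very end. For part \textbf{(1)}, I would simply note that for every fixed $c$ the expert policy $\pi^*_c$ is an admissible choice in the maximization and that, by definition, $\mu^*_c=\mu^{\pi^*_c}_c$; substituting $\pi=\pi^*_c$ gives $f_W(c)\cdot(\mu^{\pi^*_c}_c-\mu^*_c)=0$, hence $g_W(c)\ge 0$ for all $c$, and therefore $\text{Loss}(W)=\mathbb{E}_c[g_W(c)]\ge 0$. (This also re-derives the second equality in \cref{eq:loss}, since the maximizing policy is by definition $\hat\pi_c(W)$.)

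For part \textbf{(2)}, the key observation is that $\hat V^\pi_c := f_W(c)\cdot\mu^\pi_c$ is exactly the value of $\pi$ in the MDP $\mathcal{M}(c)$ under the reward $\hat R_c(s)=f_W(c)^T\phi(s)=c^TW\phi(s)$ (this is the value formula of \cref{sec:prelim} with $f_W$ in place of $f^*$), and that $\max_\pi \hat V^\pi_c=\hat V^*_c$ is attained by the deterministic policy $\hat\pi_c(W)$, with mixing over deterministic policies unable to exceed that value. Hence $g_W(c)=\hat V^*_c-\hat V^{\pi^*_c}_c$. Now assume $\text{Loss}(W)=\mathbb{E}_c[g_W(c)]=0$: since $g_W(c)\ge 0$ for every $c$ by part (1), a nonnegative quantity with zero mean vanishes for (almost) every context, i.e.\ $\hat V^{\pi^*_c}_c=\hat V^*_c$, which means precisely that the expert policy $\pi^*_c$ attains the optimal value under $\hat R_c$ and is therefore optimal w.r.t.\ $\hat R_c$. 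To upgrade ``almost every'' to ``every'' context I would invoke continuity: $\mu^\pi_c$ depends continuously on $c$ (the kernel $P_c$ and reward are affine in $c$, and value/feature-expectation vectors depend continuously on the MDP parameters), there are finitely many deterministic policies, and $f_W(c)$ is linear in $c$, so $g_W$ is continuous on the simplex $\Delta_{d-1}$; combined with the assumption that the context distribution has full support, $g_W\equiv 0$ on $\mathcal{C}$.

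The only genuinely delicate point is this last measure-theoretic step in part (2): passing from ``$g_W$ has zero expectation'' to ``$g_W$ vanishes at \emph{every} context'' rather than merely almost everywhere. If one prefers not to assume full support of the context distribution, the statement should simply be read as holding for every $c$ in the support of the context distribution, which suffices for all downstream results; the continuity argument above closes the gap whenever full support is assumed. Everything else is a direct unrolling of the definitions of feature expectations, the value function, and optimality, so I do not expect any obstacle there. \qed
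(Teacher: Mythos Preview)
Your proposal is correct and follows essentially the same approach as the paper: both argue pointwise that the integrand is nonnegative (because $\hat\pi_c(W)$ maximizes $f_W(c)\cdot\mu^\pi_c$, or equivalently because $\pi^*_c$ is a feasible choice in the max), and both deduce from $\text{Loss}(W)=0$ that the integrand vanishes for every $c$, whence $\pi^*_c$ attains the optimal value under $\hat R_c$. The only notable difference is that you are more careful than the paper on the measure-theoretic step---the paper simply writes ``$\text{Loss}(W)=0$ implies $\forall c,\ f_W(c)\cdot(\mu^{\hat\pi_c(W)}_c-\mu^*_c)=0$'' without justifying the passage from zero expectation to pointwise vanishing, whereas you supply a continuity/full-support argument; conversely, the paper closes with a small case split (equal vs.\ unequal feature expectations) that you correctly recognize as unnecessary once one observes that equal values already suffice for optimality.
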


\noindent \textbf{Proof.}
We need to show that $\forall W$ , $\text{Loss}(W)\ge 0,$ and $\text{Loss}(W^*) = 0.$ Fix $W$. For any context $c,$ we have that $\mu^{\hat{\pi}_c(W)}_c$ is the optimal policy w.r.t.~reward $f_W (c),$ thus, $f_W (c) \cdot \big(\mu^{\hat\pi_c(W)}_c - \mu^*_c\big)\ge 0.$ Therefore we get that $\text{Loss}(W)\ge0.$ For $W^*,$ we have that $\mu^{\hat{\pi}_{c}(W)}_c = \mu^*_c,$ thus $\text{Loss}(W^*) = 0$. \\

\noindent
For the second statement , note that $\text{Loss}(W) = 0$ implies that $\forall c, \enspace f_W (c) \cdot \big(\mu^{\hat\pi_c(W)}_c - \mu^*_c\big)= 0.$ This can happen in one of two cases. (1) $\mu^{\hat\pi_c(W)}_c = \mu^*_c,$ in this case $\pi^*_{c},\hat{\pi}_{c}(W)$ have the same feature expectations. Therefore, they are equivalent in terms of value. (2) $\mu^{\hat{\pi}_{c}(W)}_c \neq \mu^*_c,$ but $f_W (c) \cdot \big(\mu^{\hat{\pi}_{c}(W)}_c - \mu^*_c\big)= 0.$ In this case, $\pi^*_{c},\hat{\pi}_{c}(W)$ have different feature expectations, but still achieve the same value w.r.t.~reward $f_W (c).$ Since $\hat{\pi}_{c}(W)$ is an optimal policy w.r.t.~this reward, so is $\pi^*_{c}.$ $\blacksquare$\\

\noindent
To evaluate the loss, the optimal policy $\hat \pi_c (W)$ and its features expectations $\mu^{\hat \pi_c (W)}_c$ must be computed for all contexts. Finding $\hat \pi_c (W)$, for a specific context, can be solved using standard RL methods, e.g., value or policy iteration. In addition, computing $\mu^{\hat \pi_c (W)}_c$ is equivalent to performing policy evaluation (solving a set of linear equations).\\

\noindent
However, since we need to use an algorithm (e.g. policy iteration) to solve for the optimal policy, \cref{eq:loss} is not differentiable w.r.t. $W$. We therefore consider two optimization schemes that do not involve differentiation: (i) subgradients and (ii) randomly perturbing the loss function (finite differences). Although the loss is non-differentiable,  \cref{thm:covexloss} below shows that in the special case that $f_W(c)$ is a linear function, \cref{eq:loss} is convex and Lipschitz continuous. Furthermore, it provides a method to compute its subgradients. 

\begin{lemma}
\label{thm:covexloss}
    Let $f_W(c) = c^TW$ such that $\text{Loss}(W),$ denoted by $ L_{\text{lin}}(W)$,  is given by 
    \begin{equation*}
     L_{\text{lin}}(W) = \mathbb{E}_c \Big[ c^T W \cdot \big(\mu^{\hat{\pi}_{c}(W)}_c - \mu^*_c\big)\Big] \enspace.
    \end{equation*}
    We have that:
    \begin{enumerate}
        \item $L_{\text{lin}}(W)$ is a convex function.
        \item $g(W) = \mathbb{E}_c \left[c \odot \big(\mu^{\hat{\pi}_{c}(W)}_c - \mu^*_c\big) \right]$ is a subgradient of $ L_{\text{lin}}(W)$.
        \item $L_{\text{lin}}$ is a Lipschitz continuous function, with Lipschitz constant $L = \frac{2}{1-\gamma}$ w.r.t.~$\norm{\cdot}_\infty$ and $L = \frac{2\sqrt{dk}}{1-\gamma}$ w.r.t.~$\norm{\cdot}_2$. 
    \end{enumerate}
\end{lemma}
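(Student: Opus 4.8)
\noindent The plan is to rewrite the loss, context by context, as a pointwise maximum of finitely many \emph{linear} functions of the flattened parameter $\underline{W}$; convexity and a subgradient formula then come for free, and the Lipschitz constants are obtained by bounding the dual norm of that subgradient.

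\noindent \emph{Convexity.} Fix a context $c$ and put $\ell_c(W):=\max_{\pi}\big[f_W(c)\cdot(\mu^\pi_c-\mu^*_c)\big]$, so $L_{\text{lin}}(W)=\mathbb{E}_c[\ell_c(W)]$. By the flattening identity from the preliminaries, $f_W(c)\cdot\mu^\pi_c=\underline{W}^{T}(c\odot\mu^\pi_c)$, hence $\ell_c(W)=\max_{\pi}\,\underline{W}^{T}\big(c\odot(\mu^\pi_c-\mu^*_c)\big)$. Maximizing a linear objective over the mixed policies is attained at a deterministic policy, of which there are finitely many, so $\ell_c$ is a maximum of finitely many functions each linear in $\underline{W}$, hence convex; and $L_{\text{lin}}=\mathbb{E}_c[\ell_c]$ is convex as an expectation of convex functions. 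This is part~1.

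\noindent \emph{Subgradient.} Fix $W$ and let $\hat\pi_c(W)$ be an optimal policy w.r.t.\ $f_W(c)$, i.e.\ a maximizer in $\ell_c(W)$. The linear map $\underline{W}'\mapsto\underline{W}'^{T}\big(c\odot(\mu^{\hat\pi_c(W)}_c-\mu^*_c)\big)$ is one of the terms defining $\ell_c$, so it lies below $\ell_c$ everywhere and agrees with it at $\underline{W}$; therefore its gradient $c\odot(\mu^{\hat\pi_c(W)}_c-\mu^*_c)$ is a subgradient of $\ell_c$ at $W$, giving $\ell_c(W')\ge\ell_c(W)+\big\langle c\odot(\mu^{\hat\pi_c(W)}_c-\mu^*_c),\,\underline{W'}-\underline{W}\big\rangle$ for all $W'$. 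Selecting $\hat\pi_c(W)$ measurably in $c$ (e.g.\ the lexicographically smallest optimal policy, well defined since the policy set is finite) makes the bracket integrable; taking $\mathbb{E}_c$ and using linearity of the expectation yields $L_{\text{lin}}(W')\ge L_{\text{lin}}(W)+\langle g(W),\underline{W'}-\underline{W}\rangle$, i.e.\ $g(W)\in\partial L_{\text{lin}}(W)$. This is part~2.

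\noindent \emph{Lipschitz continuity and the main obstacle.} A convex function that has, at every point, a subgradient of dual norm at most $L$ is $L$-Lipschitz w.r.t.\ the primal norm (apply the subgradient inequality at both points and then H\"older). So it suffices to bound $\|g(W)\|$ in the relevant dual norm: Jensen gives $\|g(W)\|\le\mathbb{E}_c\big\|c\odot(\mu^{\hat\pi_c(W)}_c-\mu^*_c)\big\|$, the triangle inequality splits off the two feature-expectation terms, and each is controlled by the bounds on $c$ and $\mu^\pi_c$ from the preliminaries (notably $\|c\odot\mu^\pi_c\|_1\le\frac{k}{1-\gamma}$, and every coordinate of $c\odot\mu^\pi_c$ is at most $\frac{1}{1-\gamma}$); this produces $L=\frac{2}{1-\gamma}$ for $\|\cdot\|_\infty$, and then $L=\frac{2\sqrt{dk}}{1-\gamma}$ for $\|\cdot\|_2$ via $\|x\|_2\le\sqrt{dk}\,\|x\|_\infty$ on $\mathbb{R}^{dk}$. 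The real content sits in the first two parts --- recognizing, through the flattening operator, that $L_{\text{lin}}$ is an expectation of maxima of linear functions of $\underline{W}$, and then pushing the expectation through the subdifferential. The latter is painless precisely because we only need to exhibit one valid subgradient per point and can argue straight from the subgradient inequality rather than invoking a general $\partial\mathbb{E}=\mathbb{E}\,\partial$ statement; the one genuine subtlety is the measurable choice of $\hat\pi_c(W)$. Part~3 is then routine norm bookkeeping.
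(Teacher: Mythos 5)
Your proof is correct, and it reaches the same destination as the paper's by a slightly different packaging of the same idea. The paper works directly from the optimality inequality $c^TW\cdot\mu^{\hat\pi_c(W)}_c \ge c^TW\cdot\mu^\pi_c$: it verifies the convexity inequality for $L_{\text{lin}}(\lambda W_1+(1-\lambda)W_2)$ by splitting the convex combination and bounding each piece, and it proves the subgradient inequality $L_{\text{lin}}(z)\ge L_{\text{lin}}(W)+(z-W)\cdot g(W)$ by the same one-line bound. You instead first recast $\ell_c(W)=\max_\pi \underline{W}^T\big(c\odot(\mu^\pi_c-\mu^*_c)\big)$ as a pointwise maximum of finitely many linear functions of $\underline{W}$ and invoke the standard active-piece subgradient argument; this is the optimality inequality in structural disguise, so the content is equivalent, but your framing gets convexity "for free" and makes the subgradient step a generic fact about maxima of affine functions. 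You also add the measurable-selection remark for $\hat\pi_c(W)$ (so that the expectation of the chosen subgradient is well defined), a point the paper passes over silently. Part 3 is identical in both: bound $\norm{g(W)}_\infty\le \tfrac{2}{1-\gamma}$ via $\norm{c}_\infty\le 1$ and $\norm{\mu^\pi_c}_\infty\le\tfrac{1}{1-\gamma}$, then $\norm{\cdot}_2\le\sqrt{dk}\norm{\cdot}_\infty$. Note that you inherit the paper's own looseness in phrasing there: a bound on $\norm{g}_\infty$ strictly yields Lipschitz continuity w.r.t.\ $\norm{\cdot}_1$ (the predual), which is what the exponential-weights analysis actually uses, while the $\norm{g}_2$ bound gives the $\norm{\cdot}_2$ constant since $\ell_2$ is self-dual; this is a shared imprecision of statement, not a gap in your argument.
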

\noindent

\noindent In the supplementary material we provide the proof for the Lemma. The proof follows the definitions of convexity and subgradients, using the fact that for each $W$ we compute the optimal policy for reward $c^TW$. The Lipschitz continuity of $L_\text{Lin}(W)$ is related to the simulation lemma \citep{kearns2002near}, that is, a small change in the reward results in a small change in the optimal value.\\

\noindent
Note that $g(W)\in\mathbb{R}^{d\times k}$ is a matrix; we will sometimes refer to it as a matrix and sometimes as a flattened vector, depending on the context. Finally, $g(W)$ is given in expectation over contexts, and in expectation over trajectories (feature expectations). We will later see how to replace $g(W)$ with an unbiased estimate, which can be computed by aggregating state features from a single expert trajectory sample. 

\subsubsection{Algorithms}
\cref{thm:covexloss} identifies $L_\text{Lin}(W)$ as a convex function and provides a method to compute its subgradients. A standard method for minimizing a convex function over a convex set is the subgradient projection algorithm \citep{bertsekas1997nonlinear}. The algorithm is is given by the following iterates:
\begin{equation*}
    W_{t+1} = \text{Proj}_{\mathcal{W}} \{ W_t - \alpha_t g(W_t)\} \enspace ,
\end{equation*}
where $f(W_t)$ is a convex function,  $g(W_t)$ is a subgradient of $f(W_t)$, and $\alpha_t$ the learning rate. $\mathcal{W}$ is required to be a convex set; we will consider two particular cases, the $\ell_2$ ball \citep{abbeel2004apprenticeship} and the simplex \citep{syed2008game}\footnote{Scaling of the reward by a constant does not affect the resulting policy, thus, these sets are not restricting.}. 

\noindent Next, we consider a generalization of the subgradient projection algorithm that is called the mirror descent algorithm \cite[MDA]{nemirovsky1983problem}:
\begin{equation}
    \label{eq:mda_iter}
     W_{t+1} = \arg\min_{W\in\mathcal{W}} \left\{ 
     W\cdot \nabla_f(W_t) + \frac{1}{\alpha_t}D_\psi(W,W_t)
     \right\} \enspace ,
\end{equation}
where $D_\psi(W,W_t)$ is a Bregman distance\footnote{We refer the reader to the supplementary material for definitions of the Bregman distance, the dual norm, etc.}, associated with a strongly convex function $\psi$. The following theorem characterizes the convergence rate of MDA.

\begin{theorem}[Convergence rate of MDA]\label{thm:mda}
Let $\psi$ be a $\sigma$-strongly convex function on $\mathcal{W}$ w.r.t.~$\norm{\cdot}$, and let $D^2 = \sup_{W_1,W_2\in\mathcal{W}}D_\psi (W
_1,W_2)$. Let $f$ be convex and $L$-Lipschitz continuous w.r.t.~$\norm{\cdot}$. Then, MDA with $\alpha_t = \frac{D}{L}\sqrt{\frac{2\sigma}{t}}$ satisfies:
\begin{equation*}
    f\left(\frac{1}{T}\sum_{t=1}^T W_t\right) - f(W^*) \le DL\sqrt{\frac{2}{\sigma T}} \enspace .
\end{equation*}
\end{theorem}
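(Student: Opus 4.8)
\noindent The plan is to run the textbook mirror-descent argument: extract a one-step ``progress'' inequality from the first-order optimality condition of the proximal update~\cref{eq:mda_iter} together with the three-point property of the Bregman distance, sum it over $t=1,\dots,T$, telescope the Bregman terms, and finish with Jensen's inequality applied to the averaged iterate. A preliminary observation I would record is that, since $f$ is convex and $L$-Lipschitz w.r.t.~$\norm{\cdot}$, every subgradient satisfies $\norm{g}_*\le L$ in the dual norm $\norm{\cdot}_*$ (apply the subgradient inequality at $W_t+\delta u$ with $\norm{u}=1$ and let $\delta\downarrow 0$); and that $\mathcal{W}$ is bounded because $D^2=\sup_{W_1,W_2\in\mathcal{W}}D_\psi(W_1,W_2)<\infty$, so a minimizer $W^*$ of $f$ on $\mathcal{W}$ exists.

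For the one-step inequality, fix $t$, let $g_t\in\partial f(W_t)$ be the subgradient appearing in~\cref{eq:mda_iter}, so that $W_{t+1}=\arg\min_{W\in\mathcal{W}}\{\alpha_t\langle g_t,W\rangle+D_\psi(W,W_t)\}$, and write its optimality condition: for all $u\in\mathcal{W}$, $\langle \alpha_t g_t+\nabla\psi(W_{t+1})-\nabla\psi(W_t),\,u-W_{t+1}\rangle\ge 0$. Substituting the three-point identity $\langle \nabla\psi(W_{t+1})-\nabla\psi(W_t),\,u-W_{t+1}\rangle = D_\psi(u,W_t)-D_\psi(u,W_{t+1})-D_\psi(W_{t+1},W_t)$ turns this into $\alpha_t\langle g_t,\,W_{t+1}-u\rangle\le D_\psi(u,W_t)-D_\psi(u,W_{t+1})-D_\psi(W_{t+1},W_t)$. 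Splitting $\langle g_t,W_t-u\rangle=\langle g_t,W_t-W_{t+1}\rangle+\langle g_t,W_{t+1}-u\rangle$, bounding the first piece by Cauchy--Schwarz in the dual norm and Young's inequality, $\alpha_t\langle g_t,W_t-W_{t+1}\rangle\le\frac{\alpha_t^2\norm{g_t}_*^2}{2\sigma}+\frac{\sigma}{2}\norm{W_t-W_{t+1}}^2$, and using $\sigma$-strong convexity of $\psi$ in the form $D_\psi(W_{t+1},W_t)\ge\frac{\sigma}{2}\norm{W_{t+1}-W_t}^2$, the two $\norm{W_t-W_{t+1}}^2$ terms cancel exactly; combining with convexity $f(W_t)-f(u)\le\langle g_t,W_t-u\rangle$ and $\norm{g_t}_*\le L$ yields
\[
\alpha_t\big(f(W_t)-f(u)\big)\;\le\; D_\psi(u,W_t)-D_\psi(u,W_{t+1})+\frac{\alpha_t^2 L^2}{2\sigma}\qquad\forall\,u\in\mathcal{W}.
\]

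To finish, set $u=W^*$. Using the horizon-dependent constant step $\alpha_t\equiv\alpha=\frac{D}{L}\sqrt{2\sigma/T}$ for cleanest bookkeeping, divide by $\alpha$ and sum over $t$; the Bregman terms telescope, $D_\psi(W^*,W_1)\le D^2$, and $D_\psi\ge 0$, giving $\sum_{t=1}^T(f(W_t)-f(W^*))\le \frac{D^2}{\alpha}+\frac{\alpha L^2 T}{2\sigma}=DL\sqrt{2T/\sigma}$. Dividing by $T$ and applying Jensen's inequality $f\big(\frac1T\sum_t W_t\big)\le\frac1T\sum_t f(W_t)$ gives the stated bound. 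For the decreasing schedule $\alpha_t=\frac{D}{L}\sqrt{2\sigma/t}$ written in the statement, one instead keeps the weights $1/\alpha_t$, telescopes by Abel summation (using that $1/\alpha_t$ is increasing and every $D_\psi\le D^2$), and bounds $\sum_{t\le T}1/\sqrt t\le 2\sqrt T$ to reach the same $O(DL/\sqrt{\sigma T})$ rate. I expect the one-step inequality to be the only real obstacle — in particular, arranging the Young's-inequality/strong-convexity estimate so the quadratic terms cancel, leaving the clean telescoping form — whereas the dual-norm subgradient bound, the telescoping, and Jensen are routine.
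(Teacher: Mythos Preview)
Your argument is correct and is exactly the standard mirror-descent convergence proof. Note, however, that the paper does not actually prove this theorem: immediately after stating it, the authors write ``We refer the reader to \citet{beck2003mirror} and \citet{bubeck2015convex} for the proof.'' Your write-up is essentially the argument one finds in those references (in particular Bubeck's monograph), so there is nothing to compare against in the paper itself.
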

We refer the reader to \citet{beck2003mirror} and \citet{bubeck2015convex} for the proof. Specific instances of MDA require one to choose a norm and to define the function $\psi.$  Once those are defined, one can compute $\sigma, D$ and $L$ which define the learning rate schedule. Below, we provide two MDA instances (see, for example \citet{beck2003mirror} for derivation) and analyze them for COIRL.\\

\noindent
\textbf{Projected subgradient descent (PSGD):} Let $\mathcal{W}$ be an $\ell_2$ ball with radius $1$. Fix $||\cdot||_2$, and $\psi(W) = \frac{1}{2}||W||_2^2.$ $\psi$ is strongly convex w.r.t.~$||\cdot||_2$ with $\sigma=1.$ The associated Bregman divergence is given by $D_\psi(W_1, W_2) = 0.5 ||W_1 - W_2||_2^2.$ Thus, mirror descent is equivalent to PSGD. $D^2 = \max_{W_1,W_2 \in \mathcal{W}}D_\psi(W_1, W_2) \le 1,$ and according to \cref{thm:covexloss}, $L = \frac{2\sqrt{dk}}{1-\gamma}$.
Thus, we have that the learning rate is $\alpha_t = (1-\gamma)\sqrt{\frac{1}{2dkt}}$ and the update to $W$ is given by $$\tilde W =  W_t - \alpha_t g_t, \enspace W_{t+1} = \tilde W/||\tilde W||_2,$$
and according to \cref{thm:mda} we have that after $T$ iterations,
\begin{equation*}
    L_{\text{lin}}\left(\frac{1}{T}\sum\nolimits_{t=1}^T W_t\right) - L_{\text{lin}}(W^*) \le  \mathcal{O}\left( \frac{\sqrt{dk}}{(1-\gamma)\sqrt{T}} \right) \, .
\end{equation*}

\noindent
\textbf{Exponential Weights (EW):} Let $\mathcal{W}$ be the standard ${dk-1}$ dimensional simplex. Let ${\psi (W) = \sum _i W(i)\log(W(i))}$. $\psi$ is strongly convex w.r.t.~$||\cdot||_1$ with ${\sigma=1}$. We get that the associated Bregman divergence is given by $$D_\psi(W_1, W_2) = \sum_i W_1(i) \log(\frac{W_1(i)}{W_2(i)}),$$ also known as the Kullback-Leibler divergence. In addition, $$D^2 = \max_{W_1,W_2 \in \mathcal{W}}D_\psi(W_1, W_2) \le \log(dk)$$ and according to \cref{thm:covexloss}, $L = \frac{2}{1-\gamma}$. Thus, we have that the learning rate is $\alpha_t = (1-\gamma)\sqrt{\frac{\log(dk)}{2t}}.$ Furthermore, the projection onto the simplex w.r.t.~to this distance amounts to a simple renormalization $W \leftarrow W/||W||_1$.  Thus, we get that MDA is equivalent to the exponential weights algorithm and the update to $w$ is given by $$
\forall i \in [1..dk], \enspace \tilde W(i) = W_{t}(i) \exp\left(-\alpha_t g_t (i)\right), \enspace W_{t+1} = \tilde W/||\tilde W||_1.$$
Finally, according to \cref{thm:mda} we have that after $T$ iterations,

\begin{equation*}
    L_{\text{lin}}\left(\frac{1}{T}\sum\nolimits_{t=1}^T W_t\right) - L_{\text{lin}}(W^*) \le  \mathcal{O}\left( \frac{\sqrt{\log(dk)}}{(1-\gamma)\sqrt{T}} \right) \, .
\end{equation*}

\begin{algorithm}[H]
   \caption{MDA for COIRL}
    \label{alg:convexIRL}
\begin{algorithmic}
    \STATE \textbf{Input:} a norm  $||\cdot||$, a strongly convex function $\psi$ with strong convexity parameter $\sigma$ w.r.t to the norm, a convex set $\mathcal{W}$ with diameter $D$ w.r.t the norm, $L$ a Lipschitz constant,  $T$ number of iterations
    \STATE \textbf{Initialize} $W_1 \in \mathcal{W}$
    \FOR{$t = 1,\ldots,T$}
        \STATE Observe $c, \mu^*_c$
        \STATE Compute $\hat \pi_{c}(W_t),\mu^{\hat \pi_{c}(W_t)}_c$ 
        \STATE Compute a subgradient $g_t = c \odot \big(\mu^{\hat{\pi}_{c}(W_t)}_c - \mu^*_c\big) $
        \STATE Set learning rate $\alpha_t = \frac{D}{L}\sqrt{\frac{2\sigma}{t}}$
        \STATE Update:  $W_{t+1} = \arg\min_{W\in\mathcal{W}} \left\{ 
     W\cdot g_t + \frac{1}{\alpha_t}D_\psi(W,W_t)
     \right\}$
    \ENDFOR
    \STATE \textbf{return} $\frac{1}{t}\sum_{t=1}^T W_t$
\end{algorithmic}
\end{algorithm}

\noindent
\textbf{Evolution strategies for COIRL.} Next, we consider a derivative-free algorithm for computing subgradients, based on \emph{Evolution Strategies} \cite[ES]{salimans2017evolution}. For convex optimization problems, ES is a gradient-free descent method based on computing finite differences \citep{nesterov2017random}. The subgradient in ES is computed by sampling $m$ random perturbations and computing the loss for them, in the following form

\begin{align*}
    \label{eq:es_g}
    \text{For } j =  &1, ..., m \text{ do} \\
    &\text{Sample }  u_j \sim \mathcal{N}(0,\rho^{2}) \in\mathcal{R}^{dk},\\
    &g^j = \text{Loss} \left(W_t + \frac{\nu u_j}{||u_j||} \right) \frac{ \nu u_j}{||u_j||} ,  \\ 
    \text{End For}&,
\end{align*}
and the subgradient is given by 
\begin{equation}
    \label{eq:es_g}
    g_t = \frac{1}{m\rho} \sum_{j=1}^m g^j.
\end{equation}
\cref{thm: es} presents the sample complexity of PSGD with the subgradient in \cref{eq:es_g} for the case that the loss is convex, as in $L_\text{Lin}$. 
While this method has looser upper-bound guarantees compared to MDA (\cref{thm:mda}), \citet{nesterov2017random} observed that in practice, it often outperforms subgradient-based methods. Thus, we test ES empirically and compare it with the subgradient method (\cref{sec:mda}). Additionally, \citet{salimans2017evolution} have shown the ability of ES to cope with high dimensional non-convex tasks (DNNs).


\begin{theorem}[ES Convergence Rate \citep{nesterov2017random}]\label{thm: es}
Let $L_\text{lin}(W)$ be a non-smooth convex function with Lipschitz constant $L$, such that $||W_0 - W^*|| \leq D$, step size of $\alpha_t = \frac{D}{(dk+4)\sqrt{T+1}L}$ and $\nu \leq \frac{\epsilon}{2L\sqrt{dk}}$ then in $T = \frac{4(dk+4)^2D^2 L^2}{\epsilon^2}$ ES finds a solution which is bounded by $\mathbb{E}_{U_{T-1}} [L_\text{lin}(\hat W_T)] - L_\text{lin}(W^*) \leq \epsilon$, where ${U_T = \{ u_0, \ldots, u_T \}}$ denotes the random variables of the algorithm up to time $T$ and $\hat W_T = \argmin_{t = 1, \ldots, T} L_\text{lin}(W_t)$.
\end{theorem}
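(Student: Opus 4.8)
The plan is to reproduce the Gaussian-smoothing argument of \citet{nesterov2017random}. The first step is to introduce the smoothed objective $L_\nu(W) = \mathbb{E}_{u\sim\mathcal{N}(0,I_{dk})}\big[L_\text{lin}(W+\nu u)\big]$ and record its elementary properties. It is convex, being an average of convex functions; it is differentiable with $\nabla L_\nu(W) = \mathbb{E}_u\big[\tfrac{L_\text{lin}(W+\nu u)-L_\text{lin}(W)}{\nu}\,u\big]$, so that the random search direction $g_t$ built in \cref{eq:es_g} is (after the normalization written there) an unbiased estimator of $\nabla L_\nu(W_t)$; and it satisfies the uniform smoothing bound $0 \le L_\nu(W)-L_\text{lin}(W) \le \nu L\,\mathbb{E}\|u\| \le \nu L\sqrt{dk}$, which uses only the $L$-Lipschitz continuity of $L_\text{lin}$ (established in \cref{thm:covexloss}) together with $\mathbb{E}\|u\|\le\sqrt{dk}$.

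The second step is to bound the second moment of the search direction. Using $L$-Lipschitzness one has, pointwise, a bound of the form $\|g_t\| \le L\|u\|$, and a direct Gaussian moment computation upgrades this to $\mathbb{E}\|g_t\|^2 \le (dk+4)^2 L^2 =: M^2$. This is the quantitative heart of the argument, and the step I expect to be the main obstacle: obtaining the sharp constant $(dk+4)$ — rather than a loose $\mathcal{O}(dk)$ — requires the precise fourth-moment-type estimates for the Gaussian distribution rather than crude norm bounds.

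The third step is to apply the standard analysis of projected stochastic subgradient descent to the convex function $L_\nu$, treating $g_t$ as an unbiased gradient estimator with $\mathbb{E}\|g_t\|^2 \le M^2$. Telescoping the inequality $\mathbb{E}\|W_{t+1}-W^*\|^2 \le \mathbb{E}\|W_t-W^*\|^2 - 2\alpha_t\big(\mathbb{E}[L_\nu(W_t)]-L_\nu(W^*)\big) + \alpha_t^2 M^2$ over $t = 0,\ldots,T$, using $\|W_0-W^*\|\le D$ and the constant step size $\alpha_t = \frac{D}{M\sqrt{T+1}} = \frac{D}{(dk+4)\sqrt{T+1}L}$, yields $\min_{t}\,\mathbb{E}[L_\nu(W_t)]-L_\nu(W^*) \le \frac{DM}{\sqrt{T+1}}$, and hence by $\mathbb{E}[\min_t X_t]\le\min_t\mathbb{E}[X_t]$ also $\mathbb{E}_{U_{T-1}}\big[\min_{t}L_\nu(W_t)\big]-L_\nu(W^*) \le \frac{DM}{\sqrt{T+1}}$.

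The final step is to transfer the guarantee from $L_\nu$ back to $L_\text{lin}$. Since $\min_t L_\text{lin}(W_t) \le \min_t L_\nu(W_t)$ and $L_\nu(W^*) \le L_\text{lin}(W^*) + \nu L\sqrt{dk}$, the choice $\nu \le \frac{\epsilon}{2L\sqrt{dk}}$ makes the smoothing term at most $\epsilon/2$, while the choice $T = \frac{4(dk+4)^2D^2L^2}{\epsilon^2}$ makes $\frac{DM}{\sqrt{T+1}} \le \epsilon/2$; adding the two contributions gives $\mathbb{E}_{U_{T-1}}[L_\text{lin}(\hat W_T)] - L_\text{lin}(W^*) \le \epsilon$ for $\hat W_T = \argmin_{t=1,\ldots,T} L_\text{lin}(W_t)$, as claimed.
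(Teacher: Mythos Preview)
The paper does not give its own proof of this theorem; it is stated with attribution to \citet{nesterov2017random} and invoked as a black box, so there is no ``paper's proof'' to compare against. Your sketch is precisely the Gaussian-smoothing argument of Nesterov--Spokoiny that the citation points to, and the outline is correct: introduce $L_\nu$, show $g_t$ is an unbiased gradient estimate with bounded second moment, run the standard stochastic-subgradient telescoping, then undo the smoothing.

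One small slip: the pointwise bound you quote, $\|g_t\|\le L\|u\|$, is off by a factor of $\|u\|$. With $g=\frac{L_\text{lin}(W+\nu u)-L_\text{lin}(W)}{\nu}\,u$ and $L$-Lipschitzness you get $\|g\|\le L\|u\|^2$, hence $\mathbb{E}\|g\|^2\le L^2\,\mathbb{E}\|u\|^4$; the constant $(dk+4)^2$ then comes from the Nesterov--Spokoiny Gaussian moment estimate $\mathbb{E}\|u\|^4\le (dk+4)^2$ for $u\sim\mathcal{N}(0,I_{dk})$. This does not affect the rest of your argument. (A second caveat: the paper's displayed ES oracle in \cref{eq:es_g} normalizes $u_j$ by its norm, which is a spherical rather than Gaussian perturbation and does not literally match the Nesterov--Spokoiny oracle whose constants the theorem quotes; your Gaussian-smoothing proof matches the cited theorem, not the paper's displayed formula.)
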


\noindent
\textbf{Practical MDA:}
One of the ``miracles" of MDA is its robustness to noise. If we replace $g_t$ with an unbiased estimate $\tilde g_t,$ such that $\mathbb{E} \tilde{g}_t = g_t$  and $\mathbb{E} \norm{\tilde{g}_t} \le L$, we obtain the same convergence results as in \cref{thm:mda} \citep{robbins1951stochastic} (see, for example, \citet[Theorem 6.1]{bubeck2015convex}). Such an unbiased estimate can be obtained in the following manner: (i) sample a context $c_t$, (ii) compute $\mu^{\pi^*_{c_t}(W_t)}_{c_t}$, (iii) observe a single expert demonstration $\tau^E_i = \{s_0^i,a_0,s_1^i,a_1,\ldots,\},$ where $a_i$ is chosen by the expert policy  $\pi^*_{c_t}$ (iv) let $\hat \mu_i = \sum _{t\in[0,\ldots,|\tau^E_i|-1]}\gamma^t \phi(s_t^i)$ be the accumulated discounted features across the trajectory such that $\mathbb{E} \hat \mu_i = \mu^*_{c_t}$.\\

\noindent
However, for $\hat \mu_i$ to be an unbiased estimate of $\mu^*_{c_t}$, $\tau^E_i$ needs to be of infinite length.
Thus one can either (1) execute the expert trajectory online, and terminate it at each time step with probability $1-\gamma$ \citep{kakade2002approximately}, or (2) execute a trajectory of length $H=\frac{1}{1-\gamma}\log(1/\epsilon_H)$. The issue with the first approach is that since the trajectory length is unbounded, the estimate $\hat \mu _i $ cannot be shown to concentrate to  $\mu^*_{c_t}$ via Hoeffding type inequalities. Nevertheless, it is possible to obtain a concentration inequality using the fact that the length of each trajectory is bounded in high probability (similar to \citet{zahavy2019average}).
The second approach can only guarantee that $\norm{g_t - \mathbb{E}\tilde g_t}\le \epsilon_H$ \citep{syed2008game}. Hence, using the robustness of MDA to adversarial noise \citep{zinkevich2003online}, we get that MDA converges with an additional error of $\epsilon_H$, i.e., 
\begin{equation*}
    L_{\text{lin}}\left(\frac{1}{T}\sum_{t=1}^T W_t\right) - L_{\text{lin}}(W^*) \le  \mathcal{O}\left(\frac{1}{\sqrt{T}}\right) + \epsilon_H \enspace .
\end{equation*}    
While this sampling mechanism has the cost of a controlled bias, usually it is more practical, in particular, if the trajectories are given as a set of demonstrations (offline data).

\subsection{Ellipsoid algorithms for COIRL}\label{subsec:ellips_method}

\begin{wrapfigure}{r}{0.3\textwidth}
\vspace{-0.7cm}
\includegraphics[width=\linewidth]{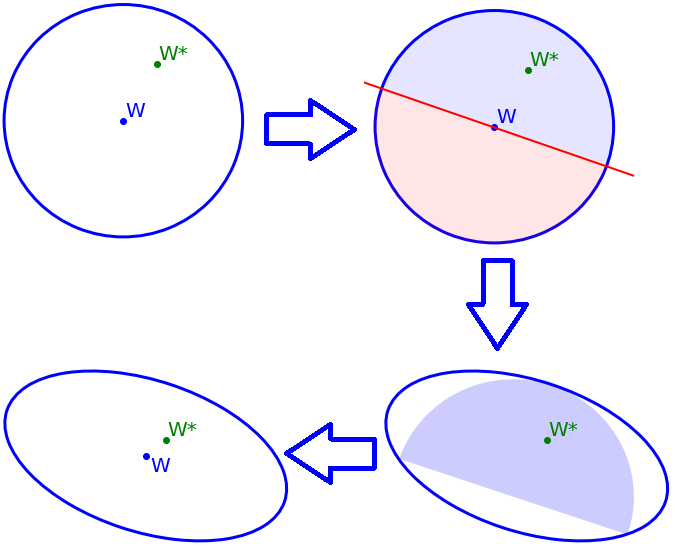}
\vspace{-0.5cm}
\caption{The ellipsoid algorithm proceeds in an iterative way, using linear constraints to gradually reduce the size of the ellipsoid until the center defines an $\epsilon$-optimal solution.}
\label{fig:img_ellipsoid}
\end{wrapfigure} 
In this section we present the ellipsoid method, introduced to the IRL setting by \citet{amin2017repeated}. We extend this method to the contextual setting, and focus on finding a linear mapping $W \in \mathcal{W} $ where  $\mathcal{W} = \{W: ||W||_\infty \leq 1\}$, and $W^*\in\mathcal{W}$. The algorithm, illustrated in \cref{fig:img_ellipsoid}, maintains an ellipsoid-shaped feasibility set for $W^*$. In each iteration, the algorithm receives a demonstration which is used to create a linear constraint, halving the feasibility set. The remaining half-ellipsoid, still containing $W^*$, is then encapsulated by a new ellipsoid. With every iteration, this feasibility set is reduced until it converges to $W^*$.\\

\noindent
Formally, an ellipsoid is defined by its center -- a vector $u$, and by an invertible matrix $Q$: $\{x:(x-u)Q^{-1}(x-u) \le 1\}$. The feasibility set for $W^*$ is initialized to be the minimal sphere containing $\{W: ||W||_\infty \leq 1\}$. At every step $t$, the current estimation $W_t$ of $W^*$ is defined as the center of the feasibility set, and the agent acts optimally w.r.t. the reward function $\hat R_c(s) = c^TW_t\phi(s)$. If the agent performs sub-optimally, the expert provides a demonstration in the form of its feature expectations for $c_t$:  $\mu^*_{c_t}$. These feature expectations are used to generate a linear constraint (hyperplane) on the ellipsoid that is crossing its center. Under this constraint, we construct a new feasibility set that is half of the previous ellipsoid, and still contains $W^*$. For the algorithm to proceed, we compute a new ellipsoid that is the minimum volume enclosing ellipsoid (MVEE) around this "half-ellipsoid". These updates are guaranteed to gradually reduce the volume of the ellipsoid, as shown in \cref{lemma: ellipsoid},  until its center is a mapping which induces $\epsilon$-optimal policies for all contexts.

\begin{lemma}[\cite{boyd1991linear}]\label{lemma: ellipsoid}
If $B \subseteq \mathbb{R}^D$ is an ellipsoid with center $w$, and $x\in\mathbb{R}^D \backslash \{0\}$, we define $B^+ = \text{MVEE}(\{ \theta\in B: (\theta-w)^Tx \geq 0 \})$, then: $\frac{Vol(B^+)}{Vol(B)} \leq e^{-\frac{1}{2(D+1)}} \enspace .$
\end{lemma}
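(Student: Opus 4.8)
The plan is to prove this classical volume‑reduction inequality by reducing to the unit ball and then exhibiting a single explicit enclosing ellipsoid whose volume already meets the bound (so that I never have to identify the MVEE exactly).

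\emph{Step 1: reduction to the unit ball.} First I would note that $Vol(B^+)/Vol(B)$ is invariant under invertible affine maps. If $T(y)=My+b$ sends the unit ball $\mathcal B\subseteq\mathbb R^D$ onto $B$ (so $b$ is the center $w$ of $B$), then the half‑ellipsoid $\{\theta\in B:(\theta-w)^Tx\ge0\}$ is exactly $T$ applied to the half‑ball $\{y\in\mathcal B:y^T(M^Tx)\ge0\}$; since the $\mathrm{MVEE}$ operator commutes with $T$ and $T$ scales all volumes by $|\det M|$, the ratio is unchanged. Combining this with the rotational symmetry of $\mathcal B$, I may assume $B=\mathcal B$ is the unit ball and $x=e_1$, so the set to be enclosed is the half‑ball $H=\{\theta:\|\theta\|_2\le1,\ \theta_1\ge0\}$.

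\emph{Step 2: an explicit enclosing ellipsoid.} For a parameter $p\in(\tfrac12,1)$ set $a=1-p$ and $q=p/\sqrt{2p-1}$, and let
\[
E=\Big\{\theta\in\mathbb R^D:\Big(\tfrac{\theta_1-a}{p}\Big)^2+\tfrac{1}{q^2}\textstyle\sum_{i=2}^D\theta_i^2\le1\Big\}.
\]
I would check $H\subseteq E$: for $\theta\in H$ we have $\sum_{i\ge2}\theta_i^2=\|\theta\|_2^2-\theta_1^2\le1-\theta_1^2$, so it suffices that $g(t):=\big(\tfrac{t-a}{p}\big)^2+\tfrac{1-t^2}{q^2}\le1$ for $t\in[0,1]$; a short computation using $a=1-p$ and $\tfrac1{q^2}=\tfrac{2p-1}{p^2}$ collapses this to $g(t)=\tfrac{2a}{p^2}\,t(t-1)+1$, which is $\le1$ on $[0,1]$ since $a>0$. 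The key point is that $B^+=\mathrm{MVEE}(H)$ is the \emph{minimum}‑volume enclosing ellipsoid, hence $Vol(B^+)\le Vol(E)$, so I never need to prove that $E$ itself is the MVEE.

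\emph{Step 3: optimizing $p$ and the analytic estimate.} Since $B$ is the unit ball, $Vol(E)/Vol(B)=p\,q^{D-1}=p^D/(2p-1)^{(D-1)/2}$; minimizing over $p$ gives $p=\tfrac{D}{D+1}$, hence $q^2=\tfrac{D^2}{D^2-1}$ and
\[
\frac{Vol(B^+)}{Vol(B)}\le\frac{D^D}{(D+1)^{(D+1)/2}(D-1)^{(D-1)/2}}=:r.
\]
It remains to show $r\le e^{-1/(2(D+1))}$. Writing $D\ln D=\tfrac{D+1}{2}\ln D+\tfrac{D-1}{2}\ln D$ gives $\ln r=-\big[\tfrac{D+1}{2}\ln(1+\tfrac1D)+\tfrac{D-1}{2}\ln(1-\tfrac1D)\big]$, and using the identity $(1+x)\ln(1+x)+(1-x)\ln(1-x)=\sum_{k\ge1}\frac{x^{2k}}{k(2k-1)}$ (obtained by integrating $\tfrac{d}{dx}$ of the left side, namely $\ln\tfrac{1+x}{1-x}=2\sum_{k\ge0}\tfrac{x^{2k+1}}{2k+1}$; valid for $|x|<1$ and by continuity at $x=1$) with $x=\tfrac1D$ yields $\ln r=-\sum_{k\ge1}\frac{1}{(2k-1)(2k)}\,D^{-(2k-1)}$. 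Every term is positive and the first equals $\tfrac1{2D}$, so $\ln r\le-\tfrac1{2D}\le-\tfrac1{2(D+1)}$, which is the claim; the degenerate case $D=1$ ($H$ an interval) is checked directly, the ratio being $\tfrac12\le e^{-1/4}$.

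\emph{Main obstacle.} The geometric part (Steps 1--2) is routine once one commits to the rotationally symmetric ansatz for $E$ and exploits that MVEE gives only an upper bound on $Vol(B^+)$; the delicate point is the final analytic step, where a crude Taylor bound is not quite enough to recover the exact constant $e^{-1/(2(D+1))}$ — the clean route is the exact series identity above, whose termwise positivity immediately yields both $\ln r\le-\tfrac1{2D}$ and the stated (slightly weaker) bound.
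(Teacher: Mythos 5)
Your proposal is correct. Note, however, that the paper does not prove this lemma at all: it is imported verbatim from the cited reference (Boyd et al.), and the paper's appendix only uses it inside the proofs of Theorems~\ref{thm: upper bound} and~\ref{thm: trajectories}. What you have written is essentially a self-contained reconstruction of the classical ellipsoid-method volume argument from that literature: reduce by affine invariance to the half unit ball (this step implicitly uses uniqueness of the MVEE, i.e.\ the L\"owner--John theorem, which applies since the half-ball has nonempty interior), exhibit the explicit enclosing ellipsoid with center $\frac{1}{D+1}e_1$ and semi-axes $\frac{D}{D+1}$ and $\frac{D}{\sqrt{D^2-1}}$, and bound the volume ratio. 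Your two refinements are sound and worth keeping: (i) you only need $\mathrm{Vol}(B^+)\le \mathrm{Vol}(E)$, so you never have to show $E$ is the minimizer (the choice $p=\frac{D}{D+1}\in(\tfrac12,1)$ for $D\ge 2$ need not even be optimal for the bound to hold); and (ii) the series identity $(1+x)\ln(1+x)+(1-x)\ln(1-x)=\sum_{k\ge1}\frac{x^{2k}}{k(2k-1)}$ gives termwise positivity and hence the cleaner, slightly stronger estimate $\frac{\mathrm{Vol}(B^+)}{\mathrm{Vol}(B)}\le e^{-\frac{1}{2D}}$, of which the paper's stated $e^{-\frac{1}{2(D+1)}}$ is an immediate consequence; the degenerate case $D=1$ is correctly checked by hand. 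The algebra in Steps 2--3 (the collapse of $g(t)$ to $\frac{2a}{p^2}t(t-1)+1$, the ratio $p^D/(2p-1)^{(D-1)/2}$, and the optimization) is verified and correct.
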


\noindent
\cref{thm: upper bound} below shows that this algorithm achieves a polynomial upper bound on the number of sub-optimal time-steps. The proof, found in \cref{sec:ellipsoid_proofs}, is adapted from \citep{amin2017repeated} to the contextual setup.\\

\vspace{-0.2cm}
\begin{algorithm}[h]
\caption{Ellipsoid algorithm for COIRL}\label{alg:ellipsoid}
\begin{algorithmic}
\STATE \textbf{Initialize:} $\Theta_0 \leftarrow B_\infty(0,1)=$\scalebox{0.9}{$\{x\in\mathbb{R}^{d\cdot k}: ||x||_\infty \leq 1 \}$} 
\STATE $\Theta_1 \leftarrow \text{MVEE} (\Theta_0): \underline{W}_1 = 0, Q_1 = dkI$
\FOR{$t=1,2,\ldots$}
    \STATE Observe $c_t,$ let $\underline{W}_t$ be the center of $\Theta_t$
    \STATE Play episode using $\hat \pi_t = \argmax_{\pi} c_t^T W_t \mu^\pi_{c_t}$
    \IF{$V^*_{c_t} - V^{\hat\pi_t}_{c_t} > \epsilon$}
        \STATE $\mu^*_{c_t}$ is revealed
        \STATE Let $a_t = c_t\odot\left(\mu^*_{c_t}-\mu^{\hat \pi_t}_{c_t}\right)$
        \STATE $\Theta_{t+1}$ $\leftarrow \text{MVEE} \left(\left\{
        \theta\in\Theta_t:\theta^T a_t \geq \underline{W}_t^T a_t  \right\}\right)$
    \ELSE
        \STATE $\Theta_{t+1} \leftarrow \Theta_t$
    \ENDIF
\ENDFOR
\STATE
\STATE \textbf{MVEE($\left\{\theta\in\Theta_t:\theta^T a_t \geq \underline{W}_t^T a_t  \right\}$):}
\STATE \quad $\tilde{a}_t = \frac{-1}{\sqrt{a_t^TQ_ta_t}}a_t$
\STATE \quad $\underline{W}_{t+1} = \underline{W}_t - \frac{1}{dk+1}Q_t\tilde{a}_t $
\STATE \quad $Q_{t+1} = \frac{d^2k^2}{d^2k^2-1}(Q_t - \frac{2}{dk+1}Q_t\tilde{a}_t\tilde{a}_t^TQ_t)$
\end{algorithmic}
\end{algorithm}

\begin{theorem}\label{thm: upper bound}
In the linear setting where $R^*_c(s) = c^TW^*\phi(s)$, for an agent acting according to Algorithm 1, the number of rounds in which the agent is not $\epsilon$-optimal is $\mathcal{O}(d^2k^2\log(\frac{d k}{(1-\gamma)\epsilon}))$.
\end{theorem}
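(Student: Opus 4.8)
The plan is to run the standard ellipsoid potential argument, playing the volume-contraction guarantee of \cref{lemma: ellipsoid} against a $t$-independent lower bound on the volume of the feasibility set $\Theta_t$. Write $D := dk$ and let $N$ be the number of rounds in which the agent is not $\epsilon$-optimal, equivalently the number of rounds on which a cut $a_t = c_t\odot(\mu^*_{c_t}-\mu^{\hat\pi_t}_{c_t})$ is generated. The two facts to establish are: (i) $W^*$ is never cut away, so $\mathrm{Vol}(\Theta_t)$ shrinks by a factor $e^{-1/(2(D+1))}$ on each of the $N$ cut rounds; and (ii) a fixed (possibly tiny) neighbourhood of $W^*$ is also never cut away, which lower-bounds $\mathrm{Vol}(\Theta_t)$ by a constant independent of $t$. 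Dividing the two bounds and taking logarithms then isolates $N$.

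For (i), I would first note $a_t\neq 0$ and that $W^*$ stays feasible with a \emph{quantitative} margin: using $V^\pi_c = \underline{W}^T(c\odot\mu^\pi_c)$ from the preliminaries, $\underline{W^*}^T a_t = V^*_{c_t} - V^{\hat\pi_t}_{c_t} > \epsilon$ since round $t$ is a cut round, while $\underline{W_t}^T a_t = c_t^T W_t(\mu^*_{c_t}-\mu^{\hat\pi_t}_{c_t})\le 0$ because $\hat\pi_t=\argmax_\pi c_t^T W_t\mu^\pi_{c_t}$. Hence $\underline{W^*}^T a_t \ge \underline{W_t}^T a_t + \epsilon$, so $W^*\in\Theta_{t+1}$, and the cut $(\theta-\underline{W_t})^T a_t\ge 0$ passes through the center, so \cref{lemma: ellipsoid} (with center $\underline{W_t}$, direction $a_t\neq 0$) gives $\mathrm{Vol}(\Theta_{t+1})\le e^{-1/(2(D+1))}\mathrm{Vol}(\Theta_t)$ on cut rounds and $\Theta_{t+1}=\Theta_t$ otherwise; multiplying over rounds, $\mathrm{Vol}(\Theta_{\mathrm{fin}})\le \mathrm{Vol}(\Theta_1)\,e^{-N/(2(D+1))}$.

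For (ii), I would convert the $\epsilon$-margin into a Euclidean radius. Since $\|c_t\|_1=1$ and $\|\mu^*_{c_t}-\mu^{\hat\pi_t}_{c_t}\|_1\le\tfrac{2k}{1-\gamma}$ (as $\phi(\mathcal S)\subseteq[0,1]^k$), we get $\|a_t\|_2\le\|a_t\|_1=\|c_t\|_1\|\mu^*_{c_t}-\mu^{\hat\pi_t}_{c_t}\|_1\le\tfrac{2k}{1-\gamma}$. Set $\delta:=\tfrac{\epsilon(1-\gamma)}{2k}$; then any $\theta$ with $\|\theta-W^*\|_2\le\delta$ satisfies $\theta^T a_t\ge \underline{W^*}^T a_t-\delta\|a_t\|_2\ge \underline{W_t}^T a_t$, i.e. survives the cut. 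As $\text{MVEE}(\cdot)$ only enlarges a set, induction on $t$ shows $B:=B_2(W^*,\delta)\cap B_\infty(0,1)\subseteq\Theta_t$ for all $t$ (base case: $\Theta_1=B_2(0,\sqrt D)\supseteq B_\infty(0,1)$; inductive step: every point of $B$ survives the cut, so it lies in $\{\theta\in\Theta_t:(\theta-\underline{W_t})^Ta_t\ge 0\}\subseteq\Theta_{t+1}$). We may assume $\epsilon<\tfrac{2k}{1-\gamma}$, since otherwise the value gap, which is always at most $\tfrac{2k}{1-\gamma}$, never exceeds $\epsilon$ and $N=0$; then $\delta<1$, so at most one face of $B_\infty(0,1)$ per coordinate is within $\delta$ of $W^*$, and $\mathrm{Vol}(B)\ge 2^{-D}\,\mathrm{Vol}(B_2(W^*,\delta))$. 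Combining with $\mathrm{Vol}(\Theta_1)=\mathrm{Vol}(B_2(0,\sqrt D))$ and cancelling the common unit-ball constant, $2^{-D}\delta^D\le D^{D/2}e^{-N/(2(D+1))}$; taking logarithms, $N\le 2(D+1)D\big(\log 2+\tfrac12\log D+\log(1/\delta)\big)=O\!\big(D^2\log(\sqrt D/\delta)\big)$. Substituting $D=dk$ and $\sqrt D/\delta=\tfrac{2k\sqrt{dk}}{\epsilon(1-\gamma)}$ yields $N=O\!\big(d^2k^2\log\tfrac{dk}{(1-\gamma)\epsilon}\big)$, as claimed.

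The main obstacle is part (ii): the remainder is the textbook ellipsoid analysis, but one must (a) see that $W^*$ is feasible not merely weakly but by a full $\epsilon$, (b) bound $\|a_t\|$ uniformly in $t$ — this is exactly where the simplex assumption $\|c_t\|_1=1$ together with the bounded feature expectations enter — and (c) handle $W^*$ possibly lying on $\partial\mathcal{W}$, where only an exponentially small but crucially $t$-independent fraction of the guarding ball stays inside $\mathcal{W}$; a $t$-independent constant is all the argument needs. This is essentially the only place where adapting \citet{amin2017repeated} to the contextual setting does real work: the cut is now built from $c_t\odot(\mu^*_{c_t}-\mu^{\hat\pi_t}_{c_t})$ and the norm bookkeeping is routed through the $\odot$ operator.
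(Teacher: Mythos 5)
Your proposal is correct and follows essentially the same argument as the paper: the $\epsilon$-margin $\underline{W^*}^Ta_t>\epsilon\ge\underline{W}_t^Ta_t$ on cut rounds, a H\"older/Cauchy--Schwarz bound $\|a_t\|\le\frac{2k}{1-\gamma}$ guaranteeing a protected neighbourhood of $W^*$ of radius $\Theta\!\big(\frac{(1-\gamma)\epsilon}{k}\big)$ that is never cut, and the volume-ratio argument via \cref{lemma: ellipsoid}. The only differences are cosmetic (you guard an $\ell_2$ ball and track the boundary intersection via a $2^{-dk}$ factor, whereas the paper guards an $\ell_\infty$ ball with halved radius), and both yield the same $\mathcal{O}(d^2k^2\log\frac{dk}{(1-\gamma)\epsilon})$ bound.
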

\begin{remark}
    Note that the ellipsoid method presents a new learning framework, where demonstrations are only provided when the agent performs sub-optimally. Thus, the theoretical results in this section cannot be directly compared with those of the descent methods. We further discuss this in \cref{section:ellipsoid_experiments}.
\end{remark}

\begin{remark}
    The ellipsoid method does not require a distribution over contexts - an adversary may choose them. MDA can also be easily extended to the adversarial setting via known regret bounds on online MDA  \citep{hazan2016introduction}.
\end{remark}

\noindent
\textbf{Practical ellipsoid algorithm:}\label{sec:ellipsoid_traj}
In real-world scenarios, it may be impossible for the expert to evaluate the value of the agent's policy, i.e. check if $V^*_{c_t} - V^{\hat\pi_t}_{c_t} > \epsilon$, and to provide its policy or feature expectations $\mu^*_{c_t}$. To address these issues, we follow \citet{amin2017repeated} and consider a relaxed approach, in which the expert evaluates each of the individual actions performed by the agent rather than its policy (\cref{alg:trajectories}). When a sub-optimal action is chosen, the expert provides finite roll-outs instead of its policy or feature expectations. We define the expert criterion for providing a demonstration to be $Q^*_{c_t}(s,a) + \epsilon < V^*_{c_t}(s)$ for each state-action pair $(s,a)$ in the agent's trajectory.\\

\noindent
\textit{Near-optimal experts:} In addition, we relax the optimality requirement of the expert and instead assume that, for each context $c_t$, the expert acts optimally w.r.t. $W^*_t$ which is close to $W^*$; the expert also evaluates the agent w.r.t. this mapping. This allows the agent to learn from different experts, and from non-stationary experts whose judgment and performance slightly vary over time. If a sub-optimal action w.r.t. $W^*_t$ is played at state $s$, the expert provides a roll-out of $H$ steps from $s$ to the agent. As this roll-out is a sample of the optimal policy w.r.t. $W^*_t$, we aggregate $n$ examples to assure that with high probability, the linear constraint that we use in the ellipsoid algorithm does not exclude $W^*$ from the feasibility set. Note that these batches may be constructed across different contexts, different experts, and different states from which the demonstrations start. \cref{thm: trajectories}, proven in \cref{sec:ellipsoid_proofs}, upper bounds the number of sub-optimal actions that \cref{alg:trajectories} chooses.\footnote{MDA also works with near optimal experts due to the robustness of MDA. The analysis of this case is identical to the analysis of biased trajectories, as we discuss in the end of \cref{sec:mda}.}

\begin{algorithm}[H]
\caption{Batch ellipsoid algorithm for COIRL}\label{alg:trajectories}
\begin{algorithmic}
\STATE \textbf{Initialize:} $\Theta_0 \leftarrow B_\infty(0,1)=\{x\in\mathbb{R}^{d\cdot k}: ||x||_\infty \leq 1 \}$ 
\STATE $\Theta_1 \leftarrow \text{MVEE} (\Theta_0)$
\STATE $i \leftarrow 0, \Bar{Z} \leftarrow 0, \Bar{Z}^* \leftarrow 0$
\FOR{$t=1,2,3,...$}
    \STATE $c_t$ is revealed, Let $\underline{W}_t$ be the center of $\Theta_t$
    \STATE Play episode using $\hat \pi_t = \argmax_{\pi} c_t^TW_t\mu^\pi_{c_t}$
    \STATE $\Theta_{t+1} \leftarrow \Theta_t$
    \IF{a sub-optimal action $a$ is played at state $s$}
        \STATE Expert provides H-step trajectory $(s^E_0=s,s^E_1,...,s^E_H)$. Let $\hat{x}_i^{*,H}$ be the H-step sample of the expert's feature expectations for $\xi_i'=\mathds{1}_s$: $\hat{x}_i^{*,H}=\sum_{h=0}^{H} \gamma^h \phi(s^E_h)$
        \STATE Let $x_i$ be the agent's feature expectations for $\xi_i':$ 
        $E_{\xi_i',P_{c_t},\pi_t}[\sum_{h=0}^{\infty} \gamma^h \phi(s_h)]$
        \STATE Denote $z_i = c_t\odot x_i$, $\hat{z}_i^{*,H} = c_t\odot\hat{x}_i^{*,H}$
        \STATE $i \leftarrow i+1, \Bar{Z} \leftarrow \Bar{Z}+ z_i, \Bar{Z}^* \leftarrow \Bar{Z}^*+\hat{z}_i^{*,H}$
        \IF{$i=n$}
            \STATE $\Theta_{t+1}$ $\leftarrow \text{MVEE} \bigg(\bigg\{        \theta\in\Theta_t:\Big(\theta-\underline{W}_t\Big)^T \cdot (\frac{\Bar{Z}^*}{n}-\frac{\Bar{Z}}{n}) \geq 0 \bigg\}\bigg)$
            \STATE $i \leftarrow 0, \Bar{Z} \leftarrow 0, \Bar{Z}^* \leftarrow 0$
        \ENDIF
    \ENDIF
\ENDFOR
\end{algorithmic}
\end{algorithm}

\begin{theorem}\label{thm: trajectories}
For an agent acting according to \cref{alg:trajectories}, $H=\ceil{\frac{1}{1-\gamma}\log(\frac{8k}{(1-\gamma)\epsilon})}$ and $n=\ceil{\frac{512k^2}{(1-\gamma)^2\epsilon^2}\log(4dk(dk+1)\log(\frac{16k\sqrt{dk}}{(1-\gamma)\epsilon})/\delta)}$, with probability of at least $1-\delta$, if $\forall t: \underline{W}^*_t \in B_\infty(\underline{W}^*,\frac{(1-\gamma)\epsilon}{8k}) \cap \Theta_0$ the number of rounds in which a sub-optimal action is played is $\mathcal{O}\Big(\frac{d^2k^4}{(1-\gamma)^2\epsilon^2}\log\big(\frac{dk}{(1-\gamma)\delta\epsilon}\log(\frac{dk}{(1-\gamma)\epsilon})\big)\Big)\,$.
\end{theorem}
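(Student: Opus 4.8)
The strategy is to transfer the ellipsoid analysis behind \cref{thm: upper bound} to the noisy, truncated, near‑optimal‑expert setting. The only genuinely new work is to show that the hyperplane generated from a \emph{completed batch} of $n$ roll‑outs still keeps $\underline{W}^*$ inside the feasibility set, and does so with a quantitative margin, in spite of three sources of error: the near‑optimality slack ($\underline{W}^*_t$ in place of $\underline{W}^*$), the bias from $H$‑step roll‑outs in place of true feature expectations, and the statistical noise from averaging $n$ single‑trajectory samples. Once the margin is established, counting the sub‑optimal rounds is the volume recursion of \cref{lemma: ellipsoid}.

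First I would prove the \emph{margin inequality} for a batch that completes at round $t$. Because $\Theta_{t'}$ — and hence the agent's policy — is frozen while a batch is being filled, all $n$ samples of the batch share the center $\underline{W}_t$. For the $i$‑th sample, taken at state $s_i$ and context $c_{t_i}$, write $x_i=\mu^{\hat\pi_t}_{c_{t_i}}(s_i)$ for the agent's feature expectations, $\mu^*_i$ for the (near‑optimal) expert's feature expectations from $s_i$, and $\tilde a_i=c_{t_i}\odot(\mu^*_i-x_i)$. Two facts: (a) $\hat\pi_t$ maximizes the $\underline{W}_t$‑reward, so $\underline{W}_t^{T}\tilde a_i\le 0$; (b) the action at $s_i$ was $\epsilon$‑suboptimal w.r.t.\ $\underline{W}^*_{t_i}$, and since $V^{\hat\pi_t}_{c_{t_i}}(s_i)=Q^{\hat\pi_t}_{c_{t_i}}(s_i,a)\le Q^{*}_{c_{t_i}}(s_i,a)<V^{*}_{c_{t_i}}(s_i)-\epsilon$ in the $\underline{W}^*_{t_i}$‑reward, we get $(\underline{W}^*_{t_i})^{T}\tilde a_i>\epsilon$. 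Using $\|c\odot v\|_1=\|c\|_1\|v\|_1$, $\|c_{t_i}\|_1=1$, $\|\mu^\pi_c(s)\|_1\le\frac{k}{1-\gamma}$ and $\|\underline{W}^*_{t_i}-\underline{W}^*\|_\infty\le\frac{(1-\gamma)\epsilon}{8k}$, the near‑optimality slack costs at most $\frac{\epsilon}{4}$, so $(\underline{W}^*)^{T}\tilde a_i>\frac{3\epsilon}{4}$, and averaging, $(\underline{W}^*-\underline{W}_t)^{T}\!\big(\tfrac1n\sum_i\tilde a_i\big)>\tfrac{3\epsilon}{4}$. The actual cut direction is $a_t=\tfrac1n\sum_i c_{t_i}\odot(\hat x^{*,H}_i-x_i)$; I would split $a_t-\tfrac1n\sum_i\tilde a_i$ into a bias term $\tfrac1n\sum_i c_{t_i}\odot(\mu^{*,H}_i-\mu^*_i)$, where $\mu^{*,H}_i=\mathbb{E}[\hat x^{*,H}_i]$ is the $H$‑truncated expectation, and a noise term $\tfrac1n\sum_i c_{t_i}\odot(\hat x^{*,H}_i-\mu^{*,H}_i)$. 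The choice $H=\lceil\frac{1}{1-\gamma}\log\frac{8k}{(1-\gamma)\epsilon}\rceil$ gives $\gamma^H\le\frac{(1-\gamma)\epsilon}{8k}$, hence $\|\mu^{*,H}_i-\mu^*_i\|_1\le\frac{\epsilon}{8}$; since $\underline{W}^*$ and the centers $\underline{W}_t$ have $\ell_\infty$‑norm at most $1$ (the feasibility set is kept inside $\mathcal{W}$), the bias contributes at most $\frac{\epsilon}{4}$ to $|(\underline{W}^*-\underline{W}_t)^{T}a_t|$. For the noise term, conditioned on the history before the batch begins, the $n$ scalars $(\underline{W}^*-\underline{W}_t)^{T}c_{t_i}\odot(\hat x^{*,H}_i-\mu^{*,H}_i)$ are independent, mean zero and bounded by $\frac{4k}{1-\gamma}$ in magnitude, so Hoeffding with the stated $n$ yields $\le\frac{\epsilon}{4}$ with probability at least $1-\delta/m$ (with $m$ as below). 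Combining, with probability $\ge 1-\delta/m$, $(\underline{W}^*-\underline{W}_t)^{T}a_t\ge\frac{3\epsilon}{4}-\frac{\epsilon}{4}-\frac{\epsilon}{4}=\frac{\epsilon}{4}$.

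Next I would turn this into a geometric margin and run the volume argument. Since $\|a_t\|_2\le\|a_t\|_1\le\frac{2k}{1-\gamma}$, the inequality places $\underline{W}^*$ at Euclidean distance at least $\frac{(1-\gamma)\epsilon}{8k}$ inside the retained half‑space, so the ball $B=B_2\big(\underline{W}^*,\frac{(1-\gamma)\epsilon}{8k}\big)$ lies entirely in that half‑space, and by induction $B\cap\Theta_0\subseteq\Theta_t$ for all $t$ (base case $\Theta_1=\text{MVEE}(\Theta_0)\supseteq\Theta_0$; the step uses the MVEE update). As $\underline{W}^*\in B_\infty(0,1)=\Theta_0$, at least a $2^{-dk}$‑fraction of $B$ lies in $\Theta_0$, so $Vol(\Theta_t)\ge 2^{-dk}\omega_{dk}\big(\tfrac{(1-\gamma)\epsilon}{8k}\big)^{dk}$, where $\omega_{dk}$ is the unit‑ball volume. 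On the other hand, by \cref{lemma: ellipsoid} each completed batch multiplies the volume by at most $e^{-1/(2(dk+1))}$, and $Vol(\Theta_1)=\omega_{dk}(dk)^{dk/2}$ since $Q_1=dkI$; comparing the two estimates shows the number of completed batches is at most $m=2dk(dk+1)\log\frac{16k\sqrt{dk}}{(1-\gamma)\epsilon}=\mathcal{O}\big(d^2k^2\log\frac{dk}{(1-\gamma)\epsilon}\big)$ — which is, up to a constant, precisely the quantity inside the logarithm in the definition of $n$, so the per‑batch failure probability $\delta/m$ sums to $\delta$ over the (at most $m$) batches. One does the union bound over the first $m$ \emph{hypothetical} batches and notes that, conditioned on all of them succeeding, the volume bound forbids an $(m{+}1)$‑th batch. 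Each completed batch consumes exactly $n$ sub‑optimal rounds and the trailing incomplete batch at most $n-1$, so the number of sub‑optimal rounds is at most $(m+1)n$, which on substituting $m$ and $n$ is of the claimed order.

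I expect the crux to be the margin accounting: the $\epsilon$‑suboptimality gap (working in our favor) must dominate the sum of the three error contributions, and it is exactly this requirement that pins down $H$, $n$, and the $\frac{(1-\gamma)\epsilon}{8k}$ radius — and then the \emph{functional} margin $\frac{\epsilon}{4}$ must be converted into a Euclidean distance from the cutting hyperplane so the ellipsoid volume recursion applies. The enabling structural observation, easy to overlook, is that the ellipsoid center is constant throughout a batch, which is what makes the $n$ roll‑out samples conditionally independent and bounded and hence amenable to a single Hoeffding bound; without it the noise term would not concentrate. The rest — the volume comparison and the bookkeeping translating batches into a sub‑optimal‑round count — is routine given \cref{lemma: ellipsoid} and the skeleton of \cref{thm: upper bound}.
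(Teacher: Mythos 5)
Your proposal follows essentially the same route as the paper's proof: the same decomposition of the averaged cut direction into the $\epsilon$-suboptimality margin, the near-optimal-expert slack (bounded via H\"older by $\epsilon/4$), the $H$-step truncation bias ($\epsilon/4$), and the sampling noise ($\epsilon/4$ via concentration with range $\tfrac{4k}{1-\gamma}$), followed by the same conclusion that a ball of radius $\tfrac{(1-\gamma)\epsilon}{8k}$ around $\underline{W}^*$ is never cut away, and the volume recursion of \cref{lemma: ellipsoid} as in \cref{thm: upper bound} together with a union bound over the at most $2dk(dk+1)\log\frac{16k\sqrt{dk}}{(1-\gamma)\epsilon}$ updates. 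Two remarks on details where your write-up diverges from the paper. First, and this is the one step that would fail as stated: you justify the bias and noise ranges by asserting that the centers $\underline{W}_t$ have $\ell_\infty$-norm at most $1$ because ``the feasibility set is kept inside $\mathcal{W}$.'' This is false for \cref{alg:trajectories}: already $\Theta_1=\text{MVEE}(B_\infty(0,1))$ is the circumscribed ball of radius $\sqrt{dk}$, and subsequent MVEE updates can place the center well outside $B_\infty(0,1)$. The paper patches this explicitly: whenever $\underline{W}_t\notin\Theta_0$ it inserts auxiliary coordinate cuts $(\theta-\underline{W}_t)^T e_j\lessgtr 0$ (which remove no point of $\Theta_0$ and strictly shrink the volume, so they terminate and only help the count) until the center returns to $\Theta_0$, which yields $\|\underline{W}^*-\underline{W}_t\|_\infty\le 2$. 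Without some such device your bias and concentration constants degrade by $\sqrt{dk}$ factors, which would change the stated $H$, $n$ and the order of the final bound; so this auxiliary argument is genuinely needed, though it is a modest addition rather than a different proof.

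Second, a smaller point: you invoke Hoeffding after claiming the $n$ noise terms are conditionally independent given the history before the batch begins. Within a batch the contexts and the states at which sub-optimal actions occur for later samples depend on randomness realized after earlier samples, so independence is not the cleanest formulation; the paper instead treats the partial sums as a martingale and applies Azuma's inequality with $b=\tfrac{4k}{1-\gamma}$, which gives exactly the same $\epsilon/4$ bound with the same $n$. Your per-batch failure probability, the union bound over batches, and the conversion of the $\epsilon/4$ functional margin into a protected ball (you use an $\ell_2$ ball, the paper an $\ell_\infty$ ball via H\"older) are all equivalent to the paper's accounting, and the trailing incomplete batch you mention only affects constants.
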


\noindent
The theoretical guarantees of the algorithms presented so far are summarized in \cref{table:discussion}. We can see that MDA, in particular EW, achieves the best scalability. In the unrealistic case where the expert can provide its feature expectations, the ellipsoid method has the lowest sample complexity. However, in the realistic scenario where only samples are provided, the sample complexity is identical across all methods. We also note that unlike MDA and ES, it isn't possible to extend the ellipsoid method to work with DNNs. Overall, the theoretical guarantees favor the MDA methods when it comes to the realistic setting.
\begin{table}[h]
    \setlength\extrarowheight{8pt}
\caption{Summary of theoretical guarantees.}
\begin{center}
\resizebox{\textwidth}{!}{\begin{tabular}{|cc|c|c|c|c|c|}
    \hline
    \\[-1.9em]
    & & \multicolumn{2}{c|}{\textbf{Scalability}} & \multicolumn{2}{c|}{\textbf{Sample Complexity}} & \textbf{\thead{\scriptsize Extension \\to DNNs}} \\ \hline
    & & Feature expectations & Sampled trajectory & Feature expectations & Sampled trajectory & \\ \hline\hline
    \multicolumn{1}{|c|}{\multirow{2}{*}{MDA}} & PSGD & \multicolumn{2}{c|}{$\mathcal{O}(dk)$} & \multirow{3}{*}{$\mathcal{O}\left(\frac{1}{\epsilon^2}\right)$} & \multirow{4}{*}{$\mathcal{O}\left(\frac{1}{\epsilon^2}\right)$} & \cmark \\ \cline{2-4}\cline{7-7}
    \multicolumn{1}{|c|}{} & EW & \multicolumn{2}{c|}{$\mathcal{O}(\log{dk})$} & & & \xmark \\ \cline{1-4}\cline{7-7}
    \multicolumn{2}{|c|}{ES} & $\mathcal{O}(dk)$ & $\mathcal{O}(d^2k^2)$ &  &  & \cmark \\ \cline{0-4}\cline{7-7}
    \multicolumn{2}{|c|}{Ellipsoid} & $\mathcal{O}(d^2k^2)$ & $\mathcal{O}(d^2k^4)$ & $\mathcal{O}\left(\log\frac{1}{\epsilon}\right)$ &  & \xmark \\ \hline
\end{tabular}}
\end{center}
\label{table:discussion}
\end{table}
\vspace{-1cm}
\subsection{Existing approaches}\label{sec:related_work}
We focus our comparisons to methods that can be used for zero-shot generalization across contexts or tasks. Hence, we omit discussion of ``meta inverse reinforcement learning" methods which focus on few-shot generalization \citep{xu2018learning}. Our focus is on two approaches: (1) standard IRL methods applied to a model which incorporates the context as part of the state, and (2) contextual policies through behavioral cloning (BC) \citep{pomerleau1989alvinn}.

\subsubsection{Application of IRL to COIRL problems}\label{subsec:largemdp}
We first examine the straight-forward approach of incorporating the contextual information into the state, i.e., defining $\mathcal{S}' = \mathcal{C} \times \mathcal{S}$, and applying standard IRL methods to one environment which captures all contexts. This construction limits the context space to a finite one, as opposed to COIRL which works trivially with an infinite number of contexts. At first glance, this method results in the same scalability and sample complexity as COIRL; however, when considering the inner loop in which an optimal policy is calculated, COIRL has the advantage of a smaller state space by a factor of $|\mathcal{C}|$. This results in significantly better run-time when considering large context spaces. In \cref{subsec:exp_irl}, we present experiments that evaluate the run-time of this approach, compared to COIRL, for increasingly large context spaces. These results demonstrate that the run-time of IRL scales with $|\mathcal{C}|$ while the run-time of COIRL is unaffected by $|\mathcal{C}|$, making COIRL much more practical for environments with many or infinite contexts.

\subsubsection{Contextual policies}\label{subsec:contextual_poli}
Another possible approach is to use Behavioral Cloning (BC) to learn contextual policies, i.e., policies that are functions of both state and context $\pi(c,s)$. In BC, the policy is learned using supervised learning methods, skipping the step of learning the reward function. While BC is an intuitive method, with successful applications in various domains \citep{bojarski2016end,ratliff2007imitation}, it has a fundamental flaw; BC violates the i.i.d. assumptions of supervised learning methods, as the learned policy affects the distribution of states it encounters. This results in a covariate shift in test-time leading to compounding errors \citep{ross2010efficient,ross2011reduction}. Methods presented in \citet{ross2011reduction,laskey2017iterative} mitigate this issue but operate outside of the offline framework. This explains why BC compares unfavorably to IRL methods, especially with a limited number of available demonstrations \citep{ho2016generative,ghasemipour2019smile}. In \cref{subsubsec:exp_bc}, we provide experimental results that exhibit the same trend. These results demonstrate how matching actions on the train set poorly translates to value on the test set, until much of the expert policy is observed. While a single trajectory per context suffices for COIRL, BC requires more information to avoid encountering unfamiliar states. We also provide a hardness result for learning a contextual policy for a linear separator hypothesis class, further demonstrating the challenges of this approach.

\subsection{Transfer across contexts in test-time}\label{sec: transfer}
In this section, we examine the application of the learned mapping $W$ when encountering a new, unseen context in test-time. Unlike during training, in test-time the available resources and latency requirements may render re-solving the MDP for every new context infeasible. We address this issue by leveraging optimal policies $\{\pi^*_{c_j}\}_{j=1}^{N}$ for contexts $\{c_j\}_{j=1}^{N}$ which were previously calculated during training or test time. We separately handle context-independent dynamics and contextual dynamics by utilizing (1) generalized policy improvement (GPI) \citep{barreto2017successor}, and (2) the simulation lemma \citep{kearns2002near}, respectively.\\

\noindent
For context-independent dynamics, the framework of \citet{barreto2017successor} can be applied to efficiently transfer knowledge from previously observed contexts $\{c_j\}_{j=1}^{N}$ to a new context $c$. As the policies $\{\pi^*_{c_j}\}_{j=1}^{N}$ were computed, so were their feature expectations, starting from any state. As the dynamics are context-independent, these feature expectations are also valid for $c$, enabling fast computation of the corresponding Q-functions, thanks to the linear decomposition of the reward. GPI generalizes policy improvement, allowing us to use these Q-functions to create a new policy that is as good as any of them and potentially strictly better than them all. The following theorem, a parallel of Theorem 2 in \citet{barreto2017successor}, defines the GPI calculation and provides the lower bound on its value. While these theorems and their proofs are written for $W^*$, the results hold for any $W \in \mathcal{W}$.
\begin{theorem}[\citet{barreto2017successor}]\label{thm: gpi}
Let $\phi_{max} = \max_s ||W^*\phi(s)||_1$, $\{c_j\}_{j=1}^N \subseteq \mathcal{C}$, $c \in \mathcal{C}$, and ${\pi(s) \in \argmax_a \max_j Q^{\pi^*_{c_j}}_{c}(s,a)}$. If the dynamics are context independent, then:
\begin{equation*}
    V^*_{c} - V^\pi_{c} \le 2 \frac{\phi_{max}}{1-\gamma}\min_j||c-c_j||_\infty \enspace .
\end{equation*}
\end{theorem}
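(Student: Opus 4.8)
The plan is to mimic the standard generalized policy improvement argument of \citet{barreto2017successor}, adapting it to the contextual reward decomposition $R^*_c(s) = c^T W^* \phi(s)$. The key observation is that, because the dynamics are context-independent, the feature expectations $\mu^{\pi^*_{c_j}}_c(s)$ do not actually depend on the context used to \emph{evaluate} them; only the reward weights change with $c$. Hence for each previously-computed policy $\pi^*_{c_j}$ we have the exact identity $Q^{\pi^*_{c_j}}_c(s,a) = (c^T W^*) \cdot \big(\phi(s) + \gamma\, \mathbb{E}_{s'\sim P(\cdot|s,a)} \mu^{\pi^*_{c_j}}_c(s')\big)$, where the feature expectations were already stored at training time, so these $Q$-values are computable without re-solving the MDP.

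First I would bound, for a fixed index $j$, how far $Q^{\pi^*_{c_j}}_c$ is from $Q^{\pi^*_{c}}_c$. Since $\pi^*_{c_j}$ is optimal for context $c_j$, one compares $V^{\pi^*_{c_j}}_c$ against $V^{\pi^*_{c_j}}_{c_j} = V^*_{c_j}$ and then $V^*_{c_j}$ against $V^*_c$. Both comparisons reduce to bounding value differences induced by a change of reward weights from $c^T W^*$ to $c_j^T W^*$: writing the difference of the two rewards at any state as $(c - c_j)^T W^* \phi(s)$, its magnitude is at most $\|W^{*T}(c-c_j)\|_\infty \le \|c - c_j\|_\infty \cdot \max_s \|W^*\phi(s)\|_1 \cdot$ (a Hölder step) $= \phi_{\max}\|c-c_j\|_\infty$, using the definition of $\phi_{\max}$. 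Summing the geometric discount series turns each such per-step reward gap into a value gap of at most $\frac{\phi_{\max}}{1-\gamma}\|c-c_j\|_\infty$, and combining the two comparisons gives $\|Q^{\pi^*_{c_j}}_c - Q^*_c\|_\infty \le \frac{2\phi_{\max}}{1-\gamma}\|c-c_j\|_\infty$.

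Next I would invoke the GPI bound in the form already available from \citet{barreto2017successor}: if $\pi(s) \in \argmax_a \max_j Q^{\pi^*_{c_j}}_c(s,a)$, then $V^*_c - V^\pi_c \le \frac{2}{1-\gamma}\min_j \|Q^{\pi^*_{c_j}}_c - Q^*_c\|_\infty$. Substituting the per-$j$ bound from the previous step and taking the minimum over $j$ on both sides yields exactly $V^*_c - V^\pi_c \le \frac{2\phi_{\max}}{1-\gamma}\min_j\|c-c_j\|_\infty$, which is the claim. (If one prefers a self-contained derivation rather than citing the GPI inequality, the same conclusion follows by a direct telescoping argument: $V^\pi_c(s)$ satisfies a Bellman-type inequality $V^\pi_c(s) \ge \max_j Q^{\pi^*_{c_j}}_c(s,\pi(s)) - $ error, and iterating the contraction $T^\pi$ propagates the one-step error by the usual $\frac{1}{1-\gamma}$ factor.)

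The main obstacle is the bookkeeping in the two-hop comparison of step two — making precise that changing the reward weights while \emph{keeping the same policy and the same dynamics} changes the value by at most the discounted sup-norm of the reward perturbation, and then chaining this with the optimality of $\pi^*_{c_j}$ for $c_j$ without double-counting the $\|c-c_j\|_\infty$ factor. One must be careful that the bound $V^{\pi^*_{c_j}}_c \ge V^*_c - \frac{2\phi_{\max}}{1-\gamma}\|c-c_j\|_\infty$ uses the perturbation inequality \emph{twice} (once to go from $c$ to $c_j$ under $\pi^*_{c_j}$, once to go from $c_j$ back to $c$ under $\pi^*_c$) and that the two $\phi_{\max}$-type constants genuinely coincide; this is where the Hölder estimate $|( c-c_j)^T W^*\phi(s)| \le \phi_{\max}\|c-c_j\|_\infty$ and the definition of $\phi_{\max}$ as $\max_s\|W^*\phi(s)\|_1$ must be deployed carefully. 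Everything else is the routine geometric-series estimate that already underlies the simulation lemma of \citet{kearns2002near}.
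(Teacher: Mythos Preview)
Your overall plan matches the paper's: first bound $\max_{s,a}\big(Q^*_{c}(s,a) - Q^{\pi^*_{c_j}}_{c}(s,a)\big) \le \tfrac{2\phi_{\max}}{1-\gamma}\|c-c_j\|_\infty$ via the two-hop reward-perturbation argument (this is exactly what the paper establishes in its proof of \cref{thm: transfer} with $\epsilon_P=0$), and then use GPI to pass from the $Q^{\pi^*_{c_j}}_c$ to $V^\pi_c$. The Hölder step $|(c-c_j)^T W^*\phi(s)| \le \|c-c_j\|_\infty \|W^*\phi(s)\|_1 \le \phi_{\max}\|c-c_j\|_\infty$ is correct.

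However, the GPI step as you wrote it does \emph{not} give the claimed bound. You invoke $V^*_c - V^\pi_c \le \tfrac{2}{1-\gamma}\min_j\|Q^{\pi^*_{c_j}}_c - Q^*_c\|_\infty$ and then assert that substituting your step-one bound ``yields exactly'' $\tfrac{2\phi_{\max}}{1-\gamma}\min_j\|c-c_j\|_\infty$. It does not: the product is $\tfrac{4\phi_{\max}}{(1-\gamma)^2}\min_j\|c-c_j\|_\infty$, off by a factor of $\tfrac{2}{1-\gamma}$. The form of GPI you need is not the approximation-error version but the \emph{pointwise} improvement guarantee (Theorem~1 in \citet{barreto2017successor}): $Q^\pi_c(s,a) \ge \max_j Q^{\pi^*_{c_j}}_c(s,a)$ for all $(s,a)$. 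With this, writing $a_i=\pi(s)$,
\begin{align*}
V^*_c(s) - V^\pi_c(s)
&= \max_a Q^*_c(s,a) - Q^\pi_c(s,a_i) \\
&\le \max_a Q^*_c(s,a) - \max_a Q^{\pi^*_{c_j}}_c(s,a) \\
&\le \max_a\big(Q^*_c(s,a) - Q^{\pi^*_{c_j}}_c(s,a)\big)
\le \tfrac{2\phi_{\max}}{1-\gamma}\|c-c_j\|_\infty,
\end{align*}
and now taking $\min_j$ and expectation over $s\sim\xi$ gives the result with the correct constant. This is precisely how the paper argues; your proposal has the right skeleton but the wrong GPI inequality, and the arithmetic slip hides the discrepancy.
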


\noindent
When the dynamics are a function of the context, the feature expectations calculated for $\{c_j\}_{j=1}^N$ are not valid for $c$, thus GPI can not be used efficiently. However, due to the linearity and therefore continuity of the mapping, similar contexts induce similar environments. Thus, it is intuitive that if we know the optimal policy for a context, it should transfer well to nearby contexts without additional planning. This intuition is formalized in the simulation lemma, which is used to provide bounds on the performance of a transferred policy in the following theorem. 
\begin{theorem}\label{thm: transfer}
Let $c,c_j\in \mathcal{C}, \phi_{max} = \max_s ||W^*\phi(s)||_1$, $V_{max} = \max_{c,s} |V^*_c(s)| $. Then:
\begin{equation*}
    V^*_{c} - V^{\pi^*_{c_j}}_{c} \le 2 \frac{\phi_{max} + \gamma d V_{max}}{\gamma(1-\gamma)}||c-c_j||_\infty \enspace .
\end{equation*}
\end{theorem}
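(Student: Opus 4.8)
The plan is to reduce the quantity of interest to two ``simulation-lemma'' estimates, each comparing the value of a \emph{fixed} policy across the two environments $\mathcal{M}(c)$ and $\mathcal{M}(c_j)$. First I would write $V^*_c - V^{\pi^*_{c_j}}_c = V^{\pi^*_c}_c - V^{\pi^*_{c_j}}_c$ and insert the two intermediate quantities $V^{\pi^*_c}_{c_j}$ and $V^{\pi^*_{c_j}}_{c_j}$ to obtain
\begin{equation*}
V^*_c - V^{\pi^*_{c_j}}_c = \big(V^{\pi^*_c}_c - V^{\pi^*_c}_{c_j}\big) + \big(V^{\pi^*_c}_{c_j} - V^{\pi^*_{c_j}}_{c_j}\big) + \big(V^{\pi^*_{c_j}}_{c_j} - V^{\pi^*_{c_j}}_c\big).
\end{equation*}
The middle bracket is $\le 0$ because $\pi^*_{c_j}$ is optimal in $\mathcal{M}(c_j)$, so it suffices to bound the first and third brackets, each of which has the form $\pm\big(V^\pi_{c} - V^\pi_{c_j}\big)$ for a fixed deterministic $\pi$ and is therefore at most $\norm{V^\pi_c - V^\pi_{c_j}}_\infty$.

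Next I would prove the simulation estimate $\norm{V^\pi_c - V^\pi_{c_j}}_\infty \le \frac{\phi_{max} + \gamma d V_{max}}{1-\gamma}\norm{c - c_j}_\infty$ for an appropriate expansion direction. Writing the Bellman equations for $V^\pi_c$ and $V^\pi_{c_j}$, subtracting, and adding and subtracting a mixed term gives, for every state $s$,
\begin{equation*}
V^\pi_c(s) - V^\pi_{c_j}(s) = \big(R^*_c(s) - R^*_{c_j}(s)\big) + \gamma \sum_{s'} P_{c_j}(s'|s,\pi(s))\big(V^\pi_c(s') - V^\pi_{c_j}(s')\big) + \gamma\sum_{s'}\big(P_c(s'|s,\pi(s)) - P_{c_j}(s'|s,\pi(s))\big)V^\pi_c(s').
\end{equation*}
Taking $\norm{\cdot}_\infty$: the second sum is a probability-weighted average and contributes $\gamma\norm{V^\pi_c - V^\pi_{c_j}}_\infty$; the reward difference equals $(c-c_j)^T W^*\phi(s)$, which by $\ell_1$--$\ell_\infty$ duality is at most $\norm{c-c_j}_\infty\,\norm{W^*\phi(s)}_1 \le \phi_{max}\norm{c-c_j}_\infty$; and, using $P_c(s'|s,a) = c^T(P_1,\dots,P_d)(s'|s,a)$, the transition term equals $(c-c_j)^T v$ with $v_i = \sum_{s'}P_i(s'|s,\pi(s))V^\pi_c(s')$, so each $|v_i|\le \max_{s'}|V^\pi_c(s')|$ and $\norm{v}_1 \le d\max_{s'}|V^\pi_c(s')|$. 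I expect the main obstacle to be precisely this last step: one must choose the expansion so that the value function surviving in the transition-difference term is the \emph{optimal} value, $V^*_c$ when $\pi=\pi^*_c$ (and symmetrically $V^*_{c_j}$ for the third bracket, where one keeps $P_c$ in the recursion instead), since only optimal values are controlled by $V_{max}=\max_{c,s}|V^*_c(s)|$ — the suboptimal value $V^{\pi^*_c}_{c_j}$ is \emph{not}. With $\max_{s'}|V^*_c(s')|\le V_{max}$ the transition term is $\le \gamma d V_{max}\norm{c-c_j}_\infty$, and rearranging the recursion $(1-\gamma)\norm{V^\pi_c-V^\pi_{c_j}}_\infty \le (\phi_{max}+\gamma d V_{max})\norm{c-c_j}_\infty$ yields the estimate.

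Finally I would combine the pieces: both the first and third brackets are each at most $\frac{\phi_{max}+\gamma d V_{max}}{1-\gamma}\norm{c-c_j}_\infty$, hence $V^*_c - V^{\pi^*_{c_j}}_c \le 2\,\frac{\phi_{max}+\gamma d V_{max}}{1-\gamma}\norm{c-c_j}_\infty$, which is tighter than — and therefore implies — the stated bound with $\gamma(1-\gamma)$ in the denominator (the spurious extra $\gamma$ is harmless since $\gamma<1$, and is presumably an artifact of invoking the simulation lemma of \citet{kearns2002near} as a black box). Along the way I would also note the two minor bookkeeping points that the $\xi$-averaged values in the statement inherit the bound from the stronger state-wise $\norm{\cdot}_\infty$ estimate, and that $\phi_{max}$ as defined indeed upper bounds $\norm{W^*\phi(s)}_1$ for every $s$.
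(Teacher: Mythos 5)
Your argument is correct, and it takes a genuinely different route from the paper. The paper works at the level of Q-functions, following Lemma~1 of \citet{barreto2017successor}: it splits $Q^*_{c}(s,a)-Q^{\pi^*_{c_j}}_{c}(s,a)\le|Q^*_{c}-Q^*_{c_j}|+|Q^*_{c_j}-Q^{\pi^*_{c_j}}_{c}|$, bounds each term by $\frac{\epsilon_R+\gamma V_{max}\epsilon_P}{1-\gamma}$ via a Bellman recursion (taking the same care you do that the value multiplying the kernel difference is an \emph{optimal} one, $Q^*_{c_j}$, so that $V_{max}$ applies), expresses $\epsilon_R\le\phi_{max}\norm{c-c_j}_\infty$ and $\epsilon_P\le d\norm{c-c_j}_\infty$ exactly as you do, and then converts the state-wise Q-bound into a bound on the $\xi$-averaged values by adjoining a dummy initial state $s_0$ with $\phi(s_0)=0$ and $P(\cdot|s_0,a)=\xi$, writing $V^*_c-V^{\pi^*_{c_j}}_c=\frac{1}{\gamma}\big(Q^*_c(s_0,a)-Q^{\pi^*_{c_j}}_c(s_0,a)\big)$ --- this dummy-state conversion, rather than a black-box use of \citet{kearns2002near}, is where the extra $\frac{1}{\gamma}$ in the stated denominator comes from. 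Your decomposition instead stays at the value level: the three-term split with the middle term $V^{\pi^*_c}_{c_j}-V^{\pi^*_{c_j}}_{c_j}\le 0$ by optimality of $\pi^*_{c_j}$ in $\mathcal{M}(c_j)$, plus two applications of a fixed-policy simulation estimate in which you deliberately keep $P_{c_j}$ (resp.\ $P_c$) in the recursion so that the surviving value in the kernel-difference term is $V^*_c$ (resp.\ $V^*_{c_j}$) and hence controlled by $V_{max}$ --- this is precisely the subtle point, and you handle it correctly. What your route buys is a slightly tighter constant, $2\frac{\phi_{max}+\gamma d V_{max}}{1-\gamma}\norm{c-c_j}_\infty$ without the $\frac{1}{\gamma}$, which indeed implies the stated bound since $\gamma<1$; what the paper's route buys is a state-action-wise bound on $Q^*_{c}-Q^{\pi^*_{c_j}}_{c}$, which is exactly the ingredient reused in the proof of the GPI bound (Theorem~\ref{thm: gpi}), so the two theorems share one computation.
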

\begin{remark}
The bound depends on $\mathcal{W}$. For example, for $\mathcal{W}=\Delta_{dk-1}$, the bound is $2 \frac{1 - \gamma + \gamma d}{\gamma(1-\gamma)^2}||c-c_j||_\infty$, and for $\mathcal{W}=B_\infty(0,1)$ the bound is $\frac{2dk}{\gamma(1-\gamma)^2}||c-c_j||_\infty$.
\end{remark}
\begin{remark}
If the dynamics are independent of the context, the term $\gamma d V_{max}$ is omitted from the bound.
\end{remark}

\noindent
Using these methods, one can efficiently find a good policy for a new context $c$, either as a good starting point for policy/value iteration which will converge faster or as the final policy to be used in test-time. The last thing to consider is the construction of the set $\{c_j\}_{j=1}^{N}$. As COIRL requires computing the optimal policies for $W$ during training, the training contexts are a natural set to use. In addition, as suggested in \citet{barreto2017successor}, we may reduce this set or enhance it in a way that maintains a covering radius in $\mathcal{C}$ and guarantees a desired level of performance. If the above methods are used as initializations for calculating the optimal policy, the set can be updated in test-time as well.

\section{Experiments}\label{sec:experiments}
In the previous sections we described the theoretical COIRL problem, proposed methods to solve it and analyzed them. In this section our goal is to take COIRL from theory to practice. This section presents the process and the guidelines we follow to achieve this goal in a step-by-step manner, to bridge the gap between theoretical and real-life problems through a set of experiments\footnote{The code used in these experiments is provided in following repository \href{https://github.com/coirl/coirl_code}{github.com/coirl/coirl\_code}  .}. \\

\noindent
We begin by focusing on the \emph{grid world} and \emph{autonomous driving simulation} environments. As these are relatively small domains, for which we can easily compute the optimal policy, they provide easily accessible insight into the behavior of each method and allow us to eliminate methods that are less likely to work well in practice. Then we use the \emph{sepsis treatment simulator} in a series of experiments to test and adjust the methods towards real-life application. The simulator is constructed from real-world data in accordance with the theoretical assumptions of COIRL. Throughout the experiments we strip the assumptions from the simulator and show that the methods perform well in an offline setting. Furthermore, we show that a DNN estimator achieves high performance when the mapping from the context to the reward is non-linear. \\

\noindent
Finally, we test the methods in \emph{sepsis treatment} -- without the simulator. Here, we use real clinicians' trajectories for training and testing. For COIRL, we estimate a CMDP\textbackslash M model from the train data (states and dynamics) which is used for training purposes. We then show that COIRL achieves high action matching on unseen clinicians trajectories.


\subsection{Grid world}\label{subsec:exp_irl}

\begin{wrapfigure}{r}{0.35\textwidth}
    \vspace{-1cm}
  \begin{center}
    \includegraphics[width=1.0\linewidth]{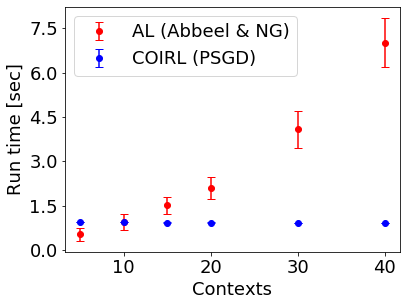}
  \end{center}
  \caption{\textbf{Run-time comparison between COIRL and AL}. AL run-time grows as the number of contexts grows while COIRL run-time stays fixed.}
  \label{fig: AL_runtime}
\end{wrapfigure}

The grid world domain is an $n$ by $m$ grid which makes $|\mathcal{S}|=n\cdot m$ states. The actions are $\mathcal{A}=\{left,up,right,down\}$ and the dynamics are deterministic for each action, i.e., if the action taken is up, the next state will be the state above the current state in the grid (with cyclic transitions on the borders, i.e., taking the action $right$ at state $(n-1, y)$ will transition to $(0, y)$). The features are one-hot vectors ($\phi(s_i)=e_i \in\mathbb{R}^{n\cdot m}$). The contexts correspond to "preferences"
of certain states on the grid. The contexts are sampled from a uniform distribution over the $n\cdot m$ dimensional simplex.
\\

\noindent
This domain is used to evaluate the application of IRL to COIRL problems. We compare the performance of PSGD (COIRL) and the projection algorithm (AL) of \cite{abbeel2004apprenticeship} as a function of the context space size. This framework is applied on a grid with dimensions of $3\cdot4$, overall 12 states. The PSGD method trains on a CMDP model and the projection algorithm trains on a large MDP model, with a state space that includes the contexts, as noted in \cref{subsec:largemdp}. The new states are $s' = (s,c)$, and the new features are $\phi(s')=c\odot \phi(s)$. We measure the run-time of every iteration. The most time consuming part of both methods is the optimal policy computation time for a given reward. Both methods use the same implementation of value iteration in order to enable a comparison of the run-time. 
\\

\noindent
The results shown in \cref{fig: AL_runtime} show that the projection algorithm in the large MDP requires significantly more time to run as the number of contexts grows, while the run-time of PSGD is not affected by the number of contexts. 
\\


\noindent
\textbf{Conclusion:} applying IRL methods in a large MDP environment limits the number of contexts that can be used, and as seen in the results, its run time grows when the number of contexts increases. We conclude that applying IRL to COIRL problems is inefficient and exclude this method from the following experiments.

\subsection{Autonomous driving simulation}
\label{subsec:car}

\begin{wrapfigure}{r}{0.3\textwidth}
  \vspace{-1.5cm}
  \begin{center}
    \includegraphics[width=1.0\linewidth]{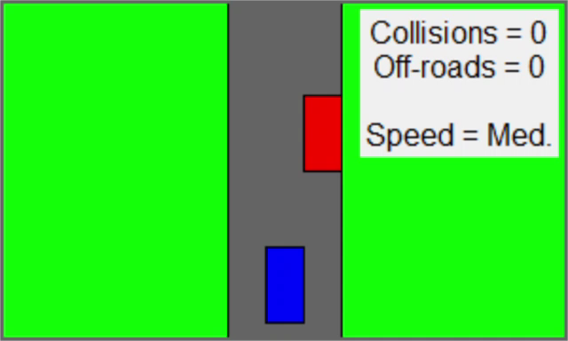}
  \end{center}
  \vspace{-0.5cm}
  \caption{Driving simulator.}\label{fig:car}
  \vspace{-0.7cm}
\end{wrapfigure}

While the grid world focused on comparing COIRL with the standard IRL method, in this section we compare the various methods for performing COIRL in an autonomous driving simulator.
This domain involves a three-lane highway with two visible cars, cars A and B. The agent, controlling car A, can drive both on the highway and off-road. Car B drives in a fixed lane, at a slower speed than car A. Upon leaving the frame, car B is replaced by a new car, appearing in a random lane at the top of the screen. The features denote the speed of car A, whether or not it is on the road and whether it has collided with car B. The context implies different priorities for the agent; should it prefer speed or safety? Is going off-road a valid option? For example, an ambulance will prioritize speed and may drive off-road, as long as it goes fast and avoids collisions, while a bus will prioritize avoiding both collisions and off-road driving as safety is its primary concern. The mapping from the contexts to the true reward is constructed in a way that induces different behaviors for different contexts, making generalization a challenging task.

\subsubsection{Ellipsoid setting}
\label{subsec:ellipsoid_setting}
The ellipsoid method requires its own framework. Here, the agent's policy is evaluated by an expert for every new context revealed. Only if its value is not $\epsilon$-close to the optimal policy value, an expert demonstration will be provided (feature expectations of an expert for the revealed context). While the ellipsoid method can only perform a single update for each demonstration, the descent methods can utilize all of the previously revealed demonstrations and perform update steps until convergence. We measure the accumulated amount of expert demonstrations given at each time-step and the value of the agent on a holdout test set, for each new given demonstration.
\\

\noindent
The amount of given demonstrations is important in the ellipsoid framework, as it is equal to the number of times that the agent is not $\epsilon$-close to the optimal policy value. In addition, it is a way to measure how much intervention is required by an external expert. We would expect a `good' method to be $\epsilon$-optimal for most revealed contexts and therefore it should observe a small amount of demonstrations.
\\

\noindent
The results, presented in \cref{fig: ellipsoid results}, show that all methods eventually reach the expert's value; however, the descent methods are more sample efficient than the ellipsoid method and require fewer expert demonstrations. While according to the theoretical guarantees (\cref{table:discussion}, feature expectations setting) the ellipsoid method should have better sample complexity, in practice it is surpassed by the results of the descent methods. Note that in this experiment each demonstration may be used more than once by the descent methods, hence the theoretical results are not valid for them.

\begin{figure*}[h]
    \centering
    \subfigure[\# demonstrations]{\label{fig:results demonstrations}\includegraphics[width=0.45\linewidth]{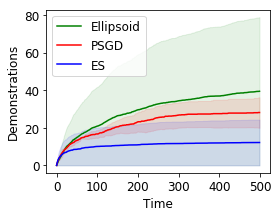}}
    \subfigure[Value]{\label{fig:results generalization}\includegraphics[width=0.474\linewidth]{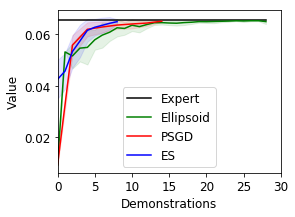}}
    \vspace{-0.3cm}
    \caption{\textbf{Comparison of the ellipsoid method with the ES and PSGD methods in the autonomous driving simulation}. The graph on the left compares the number of demonstrations required by each method, while the graph on the right compares the performance at each time-step. We observe that while, as theoretically shown, all methods eventually find an $\epsilon$-optimal solution, the descent methods attain better sample efficiency (converge faster and require less expert interaction).} 
    \label{fig: ellipsoid results}
\end{figure*}

\subsubsection{Online setting}
\label{subsub:car_online}

Here, we compare the descent methods presented in \cref{sec:methods} in an online setting. Each descent step is performed on a context-$\mu$ pair, where the context is sampled 
uniformly from the simplex
and $\mu$ is the feature expectations of a policy that is optimal for this context. For each method, we measure the normalized value of the proposed policies with respect to the real reward, the loss (\cref{eq:loss}), and the accuracy, which represents how often the expert and agent policies match. These criteria are evaluated on a holdout set of contexts, unseen during training. The x-axis corresponds to the number of contexts seen during training, i.e., the number of subgradient steps taken.
\\

\noindent
In this setting we use two setups, which differ by the observed feature expectations. First, in the \textbf{feature expectations} setup, we assume that the whole optimal policy can be observed, therefore, for training we use the feature expectations of the expert's policy. The results are shown in \cref{fig: online_car_results_feat_exp}. They show a strong correlation between `loss minimization' and `value maximization'. EW converges faster than PSGD and the ES method consistently lies between EW and PSGD, displaying comparable sample complexity. These results match the theoretical guarantees (\cref{table:discussion}, feature expectations) as EW has tighter bounds when it comes to scalability compared to PSGD and ES.
\\

\begin{figure*}[h]

    \centering
    \subfigure[Loss]{\label{fig:car_online_los_feat_exp}
    \includegraphics[width=0.3\linewidth]{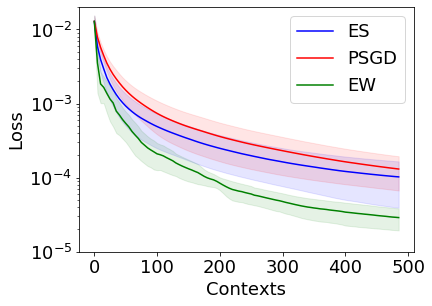}}
    \subfigure[Value]{\label{fig:car_online_val_feat_exp}
    \includegraphics[width=0.29\linewidth]{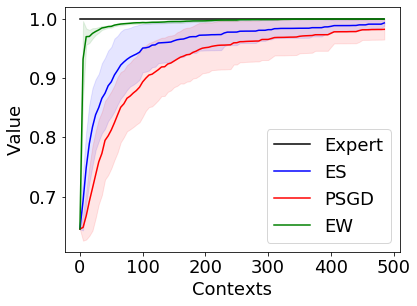}}
    \subfigure[Accuracy]{\label{fig:car_online_acu_feat_exp}
    \includegraphics[width=0.28\linewidth]{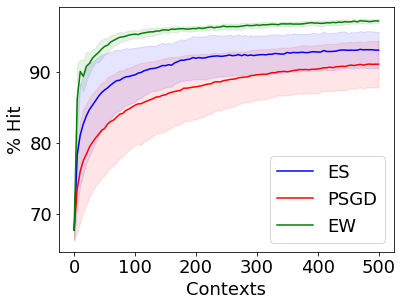}}
    \caption{\textbf{Online learning curve in the autonomous driving simulation -- learning from feature expectations}. The expert demonstrations are provided in the form of the feature expectations of the expert's policy. We compare the loss, value and accuracy, where the value and accuracy are relative to the expert's behavior. As can be seen, all descent methods minimize the loss and achieve high value. Additionally, we observe that while they do attain relatively high accuracy, they find policies which are optimal yet differ from the expert in the actions taken.}
    \label{fig: online_car_results_feat_exp}
\end{figure*}

\noindent
The second setup we use is the \textbf{trajectories} setup. Here we construct the feature expectations using a finite number of samples taken from the expert's policy, each context correspond to a finite rollout of an expert (motivated by real life limitations). The results in \cref{fig: online_car_results} show that all three descent methods attain high value and accuracy in this setup. As in the feature expectations setting, the results validate the theoretical sample complexity, with the exception that ES performs slightly better than PSGD. Comparing the results of the different setups we observe similar performance for training with the whole expert's policy or a sample of it, as expected (\cref{subsec:mda}, practical MDA). Training with trajectories is closer to the available data in real-life applications, since 
only samples of policies are provided.
\\

\begin{figure*}[h]
    \centering
    \subfigure[Loss]{\label{fig:car_online_los}
    \includegraphics[width=0.3\linewidth]{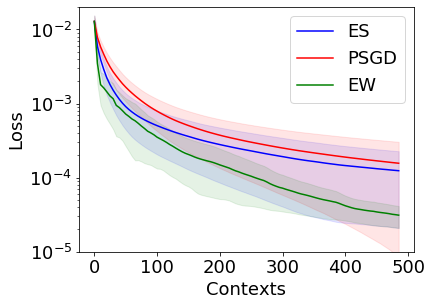}}
    \subfigure[Value]{\label{fig:car_online_val}
    \includegraphics[width=0.29\linewidth]{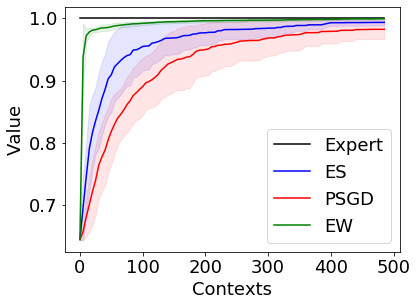}}
    \subfigure[Accuracy]{\label{fig:car_online_acu}
    \includegraphics[width=0.28\linewidth]{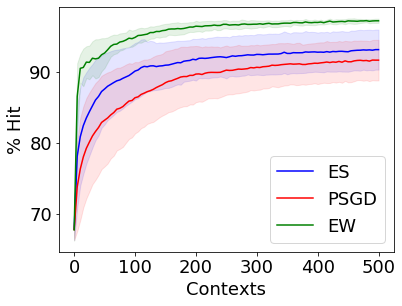}}
    \caption{\textbf{Online learning curve in the autonomous driving simulation -- learning from trajectories}. While in \cref{fig: online_car_results_feat_exp} the demonstrations were in the form of feature expectations, here we provide trajectories, a less informative approach. Although less informative, we observe that, similarly to \cref{fig: online_car_results_feat_exp}, all methods perform well, attaining similar performance as when given the full information.}
    \label{fig: online_car_results}
\end{figure*}

\noindent
\textbf{Conclusion:} The ellipsoid method is not as sample efficient as the descent methods. Furthermore, it demands constant expert monitoring, which in real-world problems might be unavailable. In many real-world tasks, such as the sepsis treatment domain, there is an abundance of offline data, yet evaluation in real-life may not be available. Thus, we do not include experiments of the ellipsoid method in the sepsis treatment domain.\\

\noindent
The ES and EW methods also have their drawbacks: ES requires computation of the loss function at a considerably large number of points for every descent step. This requirement makes the ES method computationally expensive and prevents it from scaling to larger environments. The EW method assumes that the model parameters lay within the simplex, an assumption that limits the policy space in the linear case, and may not hold in the non-linear case, where the mapping between the context and the reward is modeled by a neural network. As such, we do not include these methods in the sepsis treatment domain.

\subsection{Sepsis treatment simulator}
\label{subsec:med_rew}

This domain simulates a decision-making process for treating sepsis. Sepsis is a severe, life-threatening infection, where the treatment applied to a patient is crucial for saving its life. To create a sepsis treating simulator, we leverage the MIMIC-III data set \citep{Johnson_2016}. This data set includes data from hospital electronic databases, social security, and archives from critical care information systems, that had been acquired during routine hospital care. We follow the data processing steps that were taken in \cite{1902.03271} to extract the relevant data in a form of normalized measurements of sepsis patients during their hospital admission and the treatments that were given to each patient. The measurements include dynamic measures, e.g., heart rate, blood pressure, weight, body temperature, blood analysis standard measures (glucose, albumin, platelets count, minerals, etc.), as well as static measures such as age, gender, re-admission (of the patient), and more.
\\

\noindent
From the processed data we construct a dynamic treatment regime, modeled as a CMDP, in which a clinician acts to improve a sick patient's medical condition. The context represents patient features that are constant during treatment, such as age and height. The state summarizes dynamic measurements of the patient, e.g., blood pressure and EEG readouts. The actions represent different combinations of fluids and vasopressors, drugs commonly provided to restore and maintain blood pressure in sepsis patients. The mapping from the context to the true reward is constructed from the data.  Dynamic treatment regimes are particularly useful for managing chronic disorders and fit well into the broader paradigm of personalized medicine \citep{komorowski2018artificial,prasad2017reinforcement}. Furthermore, dynamic treatment regimes have contextual properties; what is defined as healthy blood pressure for a patient differs based on age and weight \citep{wesselink2018intraoperative}. In our setting, $W^*$ captures this information -- mapping from contextual (e.g., age) and dynamic information (e.g., blood pressure) to reward.
\\

\noindent
As noted in previous sections, we move toward real-life application and eliminate the inefficient methods. In this section we evaluate the PSGD and compare it with GPI (\cref{sec: transfer}) and contextual BC (\cref{subsec:contextual_poli}).

\subsubsection{Online setting}
\label{subsub:med_online}

In this setting we evaluate only the PSGD method. Similarly to the autonomous driving simulation we use two setups: (1) we train the methods with the expert's feature expectations for each context, and (2) instead of using the expert's feature expectations for each given context, we use an estimation, calculated from a given expert trajectory (\cref{subsec:mda}, practical MDA). We present the results of both setups in the same figure, so a comparison between the setups can be done.
\\

\noindent
We observe in \cref{fig: results_med} that PSGD performs well in both setups, with slightly better performance with feature expectations, as expected. 
 This supports the theory, as using samples should not affect the convergence results and truncation after 40 steps should incur only a small penalty.
An important observation is that high accuracy is not necessary for high value, as our agents achieve near-perfect value with relatively low accuracy. This reinforces the use of IRL for imitation learning tasks, as it supports the claim that the reward function, not the policy, is the most concise and accurate representation of the task \citep{abbeel2004apprenticeship}. 


\begin{figure*}[h]

    \centering
    \subfigure[Value]{\label{fig: medical value} \includegraphics[width=0.47\linewidth]{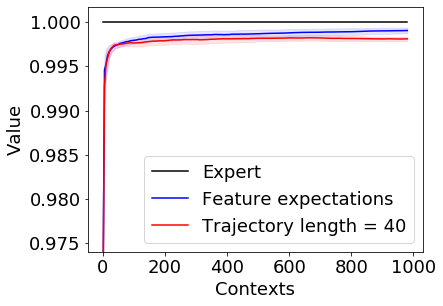}}
    \subfigure[Accuracy \%]{\label{fig: medical hit} \includegraphics[width=0.43\linewidth]{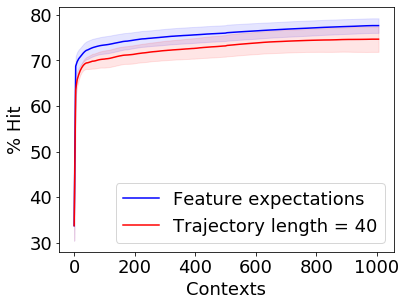}}
    \caption{\textbf{Online setting in sepsis treatment}. We compare the relative value and accuracy when the agent is provided the feature expectations or finite length trajectories. We observe that while as the feature expectations are more informative, the performance is slightly better. However, notice that the difference is negligible and amounts to less than 0.5\% difference in the relative value.}
    \label{fig: results_med}
\end{figure*}

\subsubsection{Offline setting}\label{subsubsec:exp_bc}
Here, we evaluate the COIRL, GPI and contextual BC methods. We test the ability of these methods to generalize with a limited amount of data. The motivation for this experiment comes from real-life applications, where the data available is often limited in size. The data, similarly to the online setting, is constructed from context-trajectory pairs. In this setting we minimize the loss function (\cref{eq:loss}) by taking descent steps on mini-batches sampled from the data set, with repetition, which invalidates the theoretical results. We conduct two experiments that evaluate the performance as a function of the train-set size (the amount of context-trajectory pairs used for training). We consider two mappings from the context to the reward; a \textbf{linear} mapping, and a \textbf{non-linear} mapping. For the \textbf{non-linear} mapping we use a DNN estimator which constitutes another step towards real-world applicability.

\begin{remark}
\label{rmk:bc}
Contextual BC is a method to learn a contextual policy, instead of a contextual reward. In its implementation we use a DNN that, given a \emph{context} and \emph{state-features}, computes a probability vector, $\hat{\pi}_c(s)$, representing the agent's policy -- i.e., the probability to take action $a\in \mathcal{A}$ is the $a$'th element of the DNN output $\hat{\pi}_c(s, a)$. The \emph{state-features} that are given as an input greatly affect BC performance, especially when we compare it to COIRL, which uses the real dynamics as well as features that represent each state. BC can make good use of the dynamics, as states with similar dynamics should be mapped to similar actions. To improve the performance for BC, we use the same \emph{state-features} that COIRL uses (HR, blood pressure, etc...), in addition to a feature-vector that represents the dynamics. For each state, $s\in \mathcal{S}$, the dynamics can be represented as a concatenation of the probability vectors, $\big\{P(s,a)\big\}_{a\in \mathcal{A}}$, where $P(s,a)[i]=P(s,a,s_i)$. The dimension of the dynamics for each state is $|\mathcal{S}|\cdot|\mathcal{A}|$ which is relatively large in the sepsis treatment simulator, hence we reduce its dimensionality with PCA.
\end{remark}

\begin{figure*}[h]
    \centering
    \subfigure[Value]{\label{fig:offline_med_value}\includegraphics[width=0.45\linewidth]{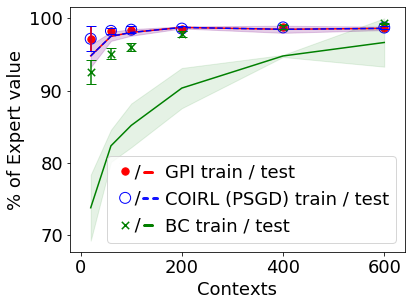}}
    \subfigure[Accuracy]{\label{fig:offline_med_accuracy}\includegraphics[width=0.45\linewidth]{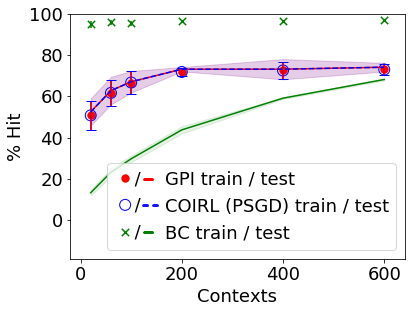}}
    \vspace{-0.3cm}
    \caption{\textbf{Offline setting in sepsis treatment}. The x-axis denotes the number of contexts in the training set. Results on the train data are represented using circles and x's, the results on a holdout test data-set represented as lines. Given a sufficient amount of contexts seen, GPI is comparable to re-solving the domain, hence there is a large overlap between the results of GPI and COIRL. Contextual BC requires much more data to generalize well.}
    \label{fig: offline results}
\end{figure*}

\noindent
In \cref{fig: offline results} we compare the performance of COIRL, GPI and contextual BC in the \textbf{linear} setting, when provided with a fixed amount of data. The results show that in the sepsis treatment domain, the COIRL and GPI methods perform similarly and able to generalize well for a small amount of train data compared to contextual BC. As expected, in \cref{fig:offline_med_accuracy} BC attains better accuracy on the train data while in \cref{fig:offline_med_value} COIRL and GPI methods attain better value on the train data. Another observation is that COIRL achieves similar performance on the training data and on the test data; it is able to generalize to unseen contexts, even when the amount of training data is small. On the other hand, BC achieves almost perfect accuracy and high value on the train data but performs poorly on the test data. This generalization gap goes away only when a large amount of data is available for training.
\\

\begin{figure*}[h]
    \centering
    \subfigure[Value]{\label{fig:offline_med_value_NL}\includegraphics[width=0.45\linewidth]{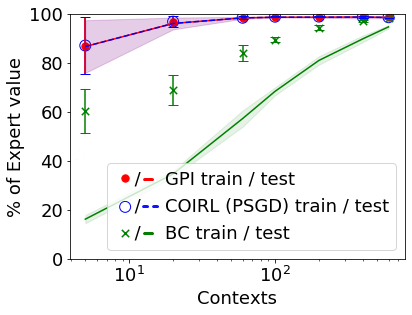}}
    \subfigure[Accuracy]{\label{fig:offline_med_accuracy_NL}\includegraphics[width=0.45\linewidth]{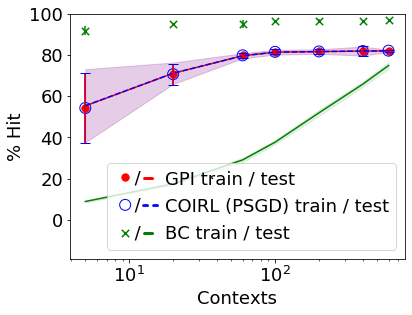}}
    \vspace{-0.3cm}
    \caption{\textbf{Offline setting in sepsis treatment: non-linear mapping}. The x-axis denotes the number of contexts in the training set (logarithmic scale). Results on the train data are represented using circles and x's, the results on a holdout test data-set represented as lines. Similar to the linear setup, GPI and COIRL generalize well for a small amount of train data where the performance on the train data and on the test data is similar. Contextual BC performance on the train set is almost perfect, where its performance on the test data requires a large amount of expert demonstrations.}
    \label{fig: offline results_NL}
\end{figure*}

\noindent
The \textbf{non-linear} setup results presented in \cref{fig: offline results_NL}. Here, the x-axis is in logarithmic scale. The performance of all methods is similar to the \textbf{linear} setup; COIRL and GPI methods perform similarly and generalize to unseen contexts even when given a small amount of train data. Contextual BC generalizes to unseen contexts only for a large amount of train data. As in the \textbf{linear} setup, the BC method attains better accuracy while the COIRL and GPI methods attain better value.

\subsection{Sepsis treatment in real-life}
\label{subsec:med_real}
In the previous subsections we focused on analyzing COIRL in simulated environments. We have taken a sequence of steps with the aim of making the simulations more and more realistic. In all of these simulations, the expert trajectories were always generated from the optimal policy (for a given context) w.r.t to the true context-reward mapping. Our results suggest that the reward estimated by COIRL induces a policy that attains a close-to-expert value in both linear and non-linear settings. Now we turn to examine our algorithms in a real world data set. Since the true mapping is no longer known, we can only measure the accuracy of our resulted policies. In previous sections we observed that while accuracy does not necessarily imply value (i.e., a policy can have optimal value but not be $100\%$ accurate), these measures are often correlated. In addition, since the true dynamics of the MDP is now unknown, we estimate it from the data itself. \\

\noindent
\textbf{Data processing}:
We follow the steps done in \cite{komorowski2018artificial} to construct a time-series data of static and dynamic measurements. The data is divided to trajectories, where each trajectory represent a different patient. We consider only trajectories of length greater than $10$ that represent $40$ hours. The processed data is consisted of $14,591$ trajectories, divided to a 60-20-20 train-validation-test partition. Each trajectory corresponds to a static measurements vector and a time series of dynamic measurements vectors, with time steps of 4 hours. In the following experiments each model is trained on the training set, until an early stopping criteria is met on the validation set. We then report the accuracy (action matching with the clinicians actions) on the holdout test set. 
\\

\noindent
\textbf{Model fitting}:
As in \cref{subsec:med_rew}, the contexts and the states constructed from static and dynamic measurements respectively. In our model, the contexts are in $\mathbb{R}^7$ and include the gender, age, weight, GCS, elixhauser co-morbidity score, whether the patient was mechanically ventilated at $s_0$ and whether the patient hs been re-admitted to the hospital. The actions are defined to be the amount of vasopressors given to a patient at each time slot, and five discrete actions are constructed by dividing the possible values into five bins. The state space is constructed by clustering the observed patient dynamic measurements from the data with K-means \citep{macqueen1967some}. The clustering process is repeated for different numbers of states and different weights for each measurement (to control the importance of each measurement for the state space). Each model is evaluated by two terms: (1) number of different actions taken on the same state for the same patient: $\mathbb{E}_\tau\big[\mathbb{E}_{s\in\tau}[ |\hat{\mathcal{A}}_s^{\tau}|]\big]$, where $\hat{\mathcal{A}}_s^{\tau}=\{a\in\mathcal{A} : (s,a) \in \tau\}$. (2) number of different states in each trajectory: $\mathbb{E}_\tau\big[ |\hat{\mathcal{S}}^\tau| \big]$, where $\hat{\mathcal{S}}^\tau = \{s\in\mathcal{S} : s \in \tau\}$. In both terms, $\tau$ is a trajectory drawn from the data. We require the first term to be as small as possible, to achieve a consistent experts policies in the CMDP model, the second term required to be large, to force the resulted model to distinguish between different states in the same patient's trajectory. Obviously, the model has to be as small as possible, to enable generalization. The chosen model consists $5000$ states. 
\\
 
\noindent
While processing the data, we noticed that clinicians behavior with respect to some measurements is random. To address this matter we consulted with clinicians and defined a set of important dynamic measurements, among them we use the clustering process to choose the patients relevant dynamic measurements for the states; states were clustered for any possible single measurement and the five best dynamic measurements were chosen: mean blood pressure, diastolic blood pressure, shock index, cumulative balance of fluids and the fluids given to a patient. The features in this CMDP are action-dependent and set to be a concatenation of $e_i \in \mathbb{R}^{|\mathcal{S}|}$ and $e_j \in \mathbb{R}^{|\mathcal{A}|}$ where $e_i$ is a vector of all zeros and a single 1 that represents each state and $e_j$ represents the action, overall the there are $5,005$ features for each state-action pair.
\\

\noindent
As described in \cref{sec:prelim}, learning the transition kernel is an orthogonal problem to the COIRL problem, and can be viewed as a part of the model fitting process. Our dynamics model is context-dependent; the contexts (patients) clustered into five clusters and the dynamics of each cluster are then estimated using the training data. \\
\noindent
\textbf{Methods}:
For COIRL we report results for the \emph{linear} and the \emph{non-linear} mappings. In both setups, we use a discount factor $\gamma=0.7$ and a mini-batch of size $32$. The stopping criteria is set to stop when five consecutive steps do not increase the validation accuracy. To speed-up the validation process we sample a subset of $300$ patients from the validation partition at the beginning of each seed and use them to validate the model. In the \emph{linear} setup the step size is $\alpha_t = 0.25\cdot 0.95^t$. The \emph{non-linear} setup use a DNN to learn the mapping $f_W:\mathcal{C}\longrightarrow \mathbb{R}^{|S|+|A|}=\mathbb{R}^{5,005} \approx \mathbb{R}^{5K}$, it has four layers with a Leaky ReLU activation and batch-normalization between the first and second layers, and Leaky ReLu activation between the second and third layers, their sizes are $20K$,$10K$,$10K$,$5K$ respectively. Here, the step size is $\alpha_t = 0.2\cdot 0.95^t$
\\

\begin{wraptable}{r}{0.4\textwidth}
\vspace{-0.5cm}
    \setlength\extrarowheight{8pt}
\resizebox{0.4\textwidth}{!}{\begin{tabular}{|c|c|c|}
    \hline
    \multicolumn{2}{|c|}{\textbf{Method}} & \textbf{Accuracy \%}  \\ \hline
    \multirow{2}{*}{COIRL} & Non-linear & $\mathbf{83.74 \pm 1.0}2$ \\ \cline{2-3}
    & Linear & $45.17 \pm 7.14$ \\ \hline
     \multicolumn{2}{|c|}{BC} & $73.12 \pm 0.82$ \\ \hline
\end{tabular}}
\caption{Results on real world data. We measure the accuracy of each method over a holdout test set. In the non-linear setting, COIRL achieves the best accuracy and outperforms BC.}
\label{table:real_world_res}
\vspace{-0.5cm}
\end{wraptable}

\noindent
For BC we also use a DNN for function approximation, as we found it to work much better than a linear model. We also experimented with different sets of features as inputs. The features that we found to give the best performance were computed in a similar manner to the features that we used for BC in \cref{rmk:bc}, using the dynamics of the estimated CMDP, resulted with $5K$ features that represent each state. Concretely, the DNN received a concatenation of the context and the features that represent the current state (size of $5,007$) and outputs a stochastic policy (softmax over the outputs of the last layer). The network architecture is composed of three linear layers of sizes $625$,$125$,$5$. Each layer is followed by a Leaky ReLU activation, and a Softmax activation is used on the output. Similar to COIRL, the model is trained over the training set partition and the stopping criteria is set to stop after 5 epochs of non-increasing validation accuracy. The loss of the DNN is the binary cross-entropy loss between the DNN output and the observed action, $e_i \in \mathbb{R}^{|\mathcal{A}|}$. The mini-batch size is $32$ and the optimizer is SGD with step size $\alpha_t = 0.1\cdot \frac{1}{1+10^{-7}t}$.
\\
\noindent
Each method trained and evaluated over five seeds, the results are presented in \cref{table:real_world_res}. We can see that COIRL with a non-linear mapping attains the best performance, while the linear mapping achieves poor accuracy. BC performs well overall, but not as good as COIRL. In \cite{lee2019truly} the authors use similar data set and action space. Their methods, TRIL and DSFN, achieve $80 \pm 2\%$ and $79 \pm 5\%$ respectively, which is lower then COIRL and with higher variance. These results suggest that the contextual hypothesis better represents the real world, i.e., that physicians indeed use personalized treatments based on context. 

\vspace{-0.2cm}
\section{Discussion}\label{sec:dis}

\noindent
Motivated by current trends in personalized medicine \citep{mit_technology_review_2020}, we proposed the Contextual Inverse Reinforcement Learning framework. While most works in RL assume the agent is provided with a reward signal, we focused on a more realistic setting, where the reward is unknown to the agent and, instead, it observes and receives feedback from an expert. As opposed to the standard IRL setting, in the contextual case, each context defines a new MDP. This leads to a new form of generalization in RL, where the agent is trained and learns how to act optimally on a set of contexts, followed by an evaluation procedure on a set to which the agent was not exposed during training.\\

\noindent
We show that solving the COIRL objective can be performed by minimizing a convex optimization task. As this objective is not differentiable, we proposed two schemes based on subgradient descent (MDA and ES) and an adaptation of cutting plane methods (ellipsoid). We analyzed the convergence properties of each algorithm and summarized the results in \cref{table:discussion}.\\

\noindent
All of the proposed methods assume that the dynamics are known, but in many applications the dynamics and even the state space are unknown. Following the description in \cref{sec:prelim}, any method that learns the dynamics efficiently can be used prior to COIRL. For example, in online frameworks, where the expert provides demonstrations in an online manner, the dynamics can be learned as proposed in \citet{10.1145/1102351.1102352}. In this case, the dynamics estimation and COIRL should run iteratively, such that every change in the estimation of the dynamics introduces a new COIRL problem that should be solved. In offline frameworks the dynamics can be estimated prior to COIRL, similarly to \cref{subsec:med_real}. \\

\noindent
In addition to the theoretical analysis, we performed extensive empirical evaluation between all proposed algorithms, including baseline approaches. Here, we see a mixed correlation between theoretical and practical results. Regarding the ellipsoid schemes, we observe that indeed as shown theoretically, they are sub-optimal compared to the other methods. However, comparing MDA to ES, we see that ES matches and sometimes outperforms MDA even though the theoretical upper-bounds are tighter for MDA. These results correlate with observations seen by \cite{nemirovsky1983problem}, where ES often provides better empirical results.\\

\noindent
Aside from comparing between our proposed methods, we also compared to a common learning scheme -- behavioral cloning. While IRL aims to find a reward function which explains the experts behavior, behavioral cloning (log-likelihood action matching) simply converts the RL task into a supervised learning problem. Previous works \citep{abbeel2004apprenticeship} talk about the importance of IRL, compared to BC. In our experiments we see this clearly. While the reward/value is smooth (Lipschitz) w.r.t. the context, the policy is not. As a small change in the context may lead to a large switch in the policy (the optimal actions change in certain states), we observe that BC struggles. This can also be seen in the fact that COIRL often reaches imperfect action-matching (accuracy) yet attains the optimal return.\\

\noindent
We demonstrated how existing policies can be transferred to new contexts, avoiding planning in test-time. This is important, as planning complexity is a function of the size of the MDP, thus this form of transfer may be crucial for real-world scenarios. Our experiments illustrate how combining offline COIRL with GPI eases the computational load on the agent while maintaining strong performance with few training examples.\\

\noindent
Finally, COIRL achieved the highest accuracy in the challenging task of predicting the clinicians treatment in the real world sepsis treatment data set. This suggests that sepsis treatment can be modeled as a contextual MDP; we hope that these findings will motivate future work in using contextual MDPs to model real-world decision making.\\

\noindent
To conclude, we proposed the COIRL framework and analyzed it under a linear mapping assumption. In real-world cases, where the linear assumption holds, COIRL can be used effectively. Future work may combine COIRL with schemes such as meta-learning \citep{finn2017model} in order to cope with infinitely large MDPs and non-linear mappings.

\bibliography{bibliography.bib}

\begin{thebibliography}{47}
\providecommand{\natexlab}[1]{#1}
\providecommand{\url}[1]{\texttt{#1}}
\expandafter\ifx\csname urlstyle\endcsname\relax
  \providecommand{\doi}[1]{doi: #1}\else
  \providecommand{\doi}{doi: \begingroup \urlstyle{rm}\Url}\fi

\bibitem[Abbeel \& Ng(2004)Abbeel and Ng]{abbeel2004apprenticeship}
Abbeel, P. and Ng, A.~Y.
\newblock Apprenticeship learning via inverse reinforcement learning.
\newblock In \emph{Proceedings of the twenty-first international conference on
  Machine learning}, pp.\ ~1. ACM, 2004.

\bibitem[Abbeel \& Ng(2005)Abbeel and Ng]{10.1145/1102351.1102352}
Abbeel, P. and Ng, A.~Y.
\newblock Exploration and apprenticeship learning in reinforcement learning.
\newblock In \emph{Proceedings of the 22nd International Conference on Machine
  Learning}, ICML '05, pp.\  1–8, New York, NY, USA, 2005. Association for
  Computing Machinery.
\newblock ISBN 1595931805.
\newblock \doi{10.1145/1102351.1102352}.
\newblock URL \url{https://doi.org/10.1145/1102351.1102352}.

\bibitem[Amin et~al.(2017)Amin, Jiang, and Singh]{amin2017repeated}
Amin, K., Jiang, N., and Singh, S.
\newblock Repeated inverse reinforcement learning.
\newblock In \emph{Advances in Neural Information Processing Systems}, pp.\
  1815--1824, 2017.

\bibitem[Barreto et~al.(2017)Barreto, Dabney, Munos, Hunt, Schaul, van Hasselt,
  and Silver]{barreto2017successor}
Barreto, A., Dabney, W., Munos, R., Hunt, J.~J., Schaul, T., van Hasselt,
  H.~P., and Silver, D.
\newblock Successor features for transfer in reinforcement learning.
\newblock In \emph{Advances in neural information processing systems}, pp.\
  4055--4065, 2017.

\bibitem[Beck \& Teboulle(2003)Beck and Teboulle]{beck2003mirror}
Beck, A. and Teboulle, M.
\newblock Mirror descent and nonlinear projected subgradient methods for convex
  optimization.
\newblock \emph{Operations Research Letters}, 31:\penalty0 167--175, 2003.

\bibitem[Berngard et~al.(2016)Berngard, Beitler, and
  Malhotra]{berngard2016personalizing}
Berngard, S.~C., Beitler, J.~R., and Malhotra, A.
\newblock Personalizing mechanical ventilation for acute respiratory distress
  syndrome.
\newblock \emph{Journal of thoracic disease}, 8\penalty0 (3):\penalty0 E172,
  2016.

\bibitem[Bertsekas(1997)]{bertsekas1997nonlinear}
Bertsekas, D.~P.
\newblock Nonlinear programming.
\newblock \emph{Journal of the Operational Research Society}, 48\penalty0
  (3):\penalty0 334--334, 1997.

\bibitem[Bojarski et~al.(2016)Bojarski, Del~Testa, Dworakowski, Firner, Flepp,
  Goyal, Jackel, Monfort, Muller, Zhang, et~al.]{bojarski2016end}
Bojarski, M., Del~Testa, D., Dworakowski, D., Firner, B., Flepp, B., Goyal, P.,
  Jackel, L.~D., Monfort, M., Muller, U., Zhang, J., et~al.
\newblock End to end learning for self-driving cars.
\newblock \emph{arXiv preprint arXiv:1604.07316}, 2016.

\bibitem[Boyd \& Barratt(1991)Boyd and Barratt]{boyd1991linear}
Boyd, S.~P. and Barratt, C.~H.
\newblock \emph{Linear controller design: limits of performance}.
\newblock Prentice Hall Englewood Cliffs, NJ, 1991.

\bibitem[Bubeck(2015)]{bubeck2015convex}
Bubeck, S.
\newblock Convex optimization: Algorithms and complexity.
\newblock \emph{Foundations and Trends{\textregistered} in Machine Learning},
  8\penalty0 (3-4):\penalty0 231--357, 2015.

\bibitem[Chakraborty \& Murphy(2014)Chakraborty and
  Murphy]{chakraborty2014dynamic}
Chakraborty, B. and Murphy, S.~A.
\newblock Dynamic treatment regimes.
\newblock \emph{Annual review of statistics and its application}, 1:\penalty0
  447--464, 2014.

\bibitem[Finn et~al.(2017)Finn, Abbeel, and Levine]{finn2017model}
Finn, C., Abbeel, P., and Levine, S.
\newblock Model-agnostic meta-learning for fast adaptation of deep networks.
\newblock In \emph{Proceedings of the 34th International Conference on Machine
  Learning-Volume 70}, pp.\  1126--1135. JMLR. org, 2017.

\bibitem[Garber \& Hazan(2016)Garber and Hazan]{garber2016linearly}
Garber, D. and Hazan, E.
\newblock A linearly convergent variant of the conditional gradient algorithm
  under strong convexity, with applications to online and stochastic
  optimization.
\newblock \emph{SIAM Journal on Optimization}, 26\penalty0 (3):\penalty0
  1493--1528, 2016.

\bibitem[Ghasemipour et~al.(2019)Ghasemipour, Gu, and
  Zemel]{ghasemipour2019smile}
Ghasemipour, S. K.~S., Gu, S.~S., and Zemel, R.
\newblock Smile: Scalable meta inverse reinforcement learning through
  context-conditional policies.
\newblock In \emph{Advances in Neural Information Processing Systems}, pp.\
  7879--7889, 2019.

\bibitem[Hallak et~al.(2015)Hallak, Di~Castro, and
  Mannor]{hallak2015contextual}
Hallak, A., Di~Castro, D., and Mannor, S.
\newblock Contextual markov decision processes.
\newblock \emph{arXiv preprint arXiv:1502.02259}, 2015.

\bibitem[Hazan(2016)]{hazan2016introduction}
Hazan, E.
\newblock Introduction to online convex optimization.
\newblock \emph{Foundations and Trends{\textregistered} in Optimization},
  2\penalty0 (3-4):\penalty0 157--325, 2016.

\bibitem[Ho \& Ermon(2016)Ho and Ermon]{ho2016generative}
Ho, J. and Ermon, S.
\newblock Generative adversarial imitation learning.
\newblock In \emph{Advances in Neural Information Processing Systems}, pp.\
  4565--4573, 2016.

\bibitem[Itenov et~al.(2018)Itenov, Murray, and Jensen]{itenov2018sepsis}
Itenov, T., Murray, D., and Jensen, J.
\newblock Sepsis: Personalized medicine utilizing ‘omic’technologies—a
  paradigm shift?
\newblock In \emph{Healthcare}, pp.\  111. Multidisciplinary Digital Publishing
  Institute, 2018.

\bibitem[Jaggi(2013)]{jaggi2013revisiting}
Jaggi, M.
\newblock Revisiting frank-wolfe: Projection-free sparse convex optimization.
\newblock 2013.

\bibitem[Jeter et~al.(2019)Jeter, Josef, Shashikumar, and Nemati]{1902.03271}
Jeter, R., Josef, C., Shashikumar, S., and Nemati, S.
\newblock Does the "artificial intelligence clinician" learn optimal treatment
  strategies for sepsis in intensive care?, 2019.
\newblock URL
  \url{https://github.com/point85AI/Policy-Iteration-AI-Clinician.git}.

\bibitem[Johnson et~al.(2016)Johnson, Pollard, Shen, Lehman, Feng, Ghassemi,
  Moody, Szolovits, Anthony~Celi, and Mark]{Johnson_2016}
Johnson, A.~E., Pollard, T.~J., Shen, L., Lehman, L.-w.~H., Feng, M., Ghassemi,
  M., Moody, B., Szolovits, P., Anthony~Celi, L., and Mark, R.~G.
\newblock Mimic-iii, a freely accessible critical care database.
\newblock \emph{Scientific Data}, 3:\penalty0 160035, May 2016.
\newblock ISSN 2052-4463.
\newblock \doi{10.1038/sdata.2016.35}.
\newblock URL \url{http://dx.doi.org/10.1038/sdata.2016.35}.

\bibitem[Juskalian et~al.(2020)Juskalian, Regalado, Orcutt, Piore, Rotman,
  Patel, Lichfield, Hao, Chen, and Temple]{mit_technology_review_2020}
Juskalian, R., Regalado, A., Orcutt, M., Piore, A., Rotman, D., Patel, N.~V.,
  Lichfield, G., Hao, K., Chen, A., and Temple, J.
\newblock Mit technology review, February 2020.
\newblock URL \url{https://www.technologyreview.com/lists/technologies/2020/}.

\bibitem[Kakade \& Langford(2002)Kakade and Langford]{kakade2002approximately}
Kakade, S. and Langford, J.
\newblock Approximately optimal approximate reinforcement learning.
\newblock In \emph{International conference on Machine learning}, pp.\
  267--274, 2002.

\bibitem[Kearns \& Singh(2002)Kearns and Singh]{kearns2002near}
Kearns, M. and Singh, S.
\newblock Near-optimal reinforcement learning in polynomial time.
\newblock \emph{Machine learning}, 49\penalty0 (2-3):\penalty0 209--232, 2002.

\bibitem[Komorowski et~al.(2018)Komorowski, Celi, Badawi, Gordon, and
  Faisal]{komorowski2018artificial}
Komorowski, M., Celi, L.~A., Badawi, O., Gordon, A.~C., and Faisal, A.~A.
\newblock The artificial intelligence clinician learns optimal treatment
  strategies for sepsis in intensive care.
\newblock \emph{Nature Medicine}, 24\penalty0 (11):\penalty0 1716, 2018.

\bibitem[Laskey et~al.(2017)Laskey, Lee, Hsieh, Liaw, Mahler, Fox, and
  Goldberg]{laskey2017iterative}
Laskey, M., Lee, J., Hsieh, W., Liaw, R., Mahler, J., Fox, R., and Goldberg, K.
\newblock Iterative noise injection for scalable imitation learning.
\newblock \emph{arXiv preprint arXiv:1703.09327}, 2017.

\bibitem[Lee et~al.(2019)Lee, Srinivasan, and Doshi-Velez]{lee2019truly}
Lee, D., Srinivasan, S., and Doshi-Velez, F.
\newblock Truly batch apprenticeship learning with deep successor features.
\newblock \emph{arXiv preprint arXiv:1903.10077}, 2019.

\bibitem[MacQueen et~al.(1967)]{macqueen1967some}
MacQueen, J. et~al.
\newblock Some methods for classification and analysis of multivariate
  observations.
\newblock In \emph{Proceedings of the fifth Berkeley symposium on mathematical
  statistics and probability}, pp.\  281--297. Oakland, CA, USA, 1967.

\bibitem[Modi \& Tewari(2019)Modi and Tewari]{modi2019contextual}
Modi, A. and Tewari, A.
\newblock Contextual markov decision processes using generalized linear models.
\newblock \emph{arXiv preprint arXiv:1903.06187}, 2019.

\bibitem[Modi et~al.(2018)Modi, Jiang, Singh, and Tewari]{modi2018markov}
Modi, A., Jiang, N., Singh, S., and Tewari, A.
\newblock Markov decision processes with continuous side information.
\newblock In \emph{Algorithmic Learning Theory}, pp.\  597--618, 2018.

\bibitem[Nemirovsky \& Yudin(1983)Nemirovsky and Yudin]{nemirovsky1983problem}
Nemirovsky, A.~S. and Yudin, D.~B.
\newblock In \emph{Problem complexity and method efficiency in optimization}.
  Wiley, New York, 1983.

\bibitem[Nesterov \& Spokoiny(2017)Nesterov and Spokoiny]{nesterov2017random}
Nesterov, Y. and Spokoiny, V.
\newblock Random gradient-free minimization of convex functions.
\newblock \emph{Foundations of Computational Mathematics}, 17\penalty0
  (2):\penalty0 527--566, 2017.

\bibitem[Ng \& Russell(2000)Ng and Russell]{ng2000algorithms}
Ng, A.~Y. and Russell, S.~J.
\newblock Algorithms for inverse reinforcement learning.
\newblock In \emph{Icml}, volume~1, pp.\ ~2, 2000.

\bibitem[Pomerleau(1989)]{pomerleau1989alvinn}
Pomerleau, D.~A.
\newblock Alvinn: An autonomous land vehicle in a neural network.
\newblock In \emph{Advances in neural information processing systems}, pp.\
  305--313, 1989.

\bibitem[Prasad et~al.(2017)Prasad, Cheng, Chivers, Draugelis, and
  Engelhardt]{prasad2017reinforcement}
Prasad, N., Cheng, L.-F., Chivers, C., Draugelis, M., and Engelhardt, B.~E.
\newblock A reinforcement learning approach to weaning of mechanical
  ventilation in intensive care units.
\newblock \emph{UAI}, 2017.

\bibitem[Puterman(1994)]{puterman1994markov}
Puterman, M.~L.
\newblock \emph{Markov decision processes: discrete stochastic dynamic
  programming}.
\newblock John Wiley \& Sons, 1994.

\bibitem[Ratliff et~al.(2007)Ratliff, Bagnell, and
  Srinivasa]{ratliff2007imitation}
Ratliff, N., Bagnell, J.~A., and Srinivasa, S.~S.
\newblock Imitation learning for locomotion and manipulation.
\newblock In \emph{2007 7th IEEE-RAS International Conference on Humanoid
  Robots}, pp.\  392--397. IEEE, 2007.

\bibitem[Robbins \& Monro(1951)Robbins and Monro]{robbins1951stochastic}
Robbins, H. and Monro, S.
\newblock A stochastic approximation method.
\newblock \emph{The annals of mathematical statistics}, pp.\  400--407, 1951.

\bibitem[Ross \& Bagnell(2010)Ross and Bagnell]{ross2010efficient}
Ross, S. and Bagnell, D.
\newblock Efficient reductions for imitation learning.
\newblock In \emph{Proceedings of the thirteenth international conference on
  artificial intelligence and statistics}, pp.\  661--668, 2010.

\bibitem[Ross et~al.(2011)Ross, Gordon, and Bagnell]{ross2011reduction}
Ross, S., Gordon, G., and Bagnell, D.
\newblock A reduction of imitation learning and structured prediction to
  no-regret online learning.
\newblock In \emph{Proceedings of the fourteenth international conference on
  artificial intelligence and statistics}, pp.\  627--635, 2011.

\bibitem[Salimans et~al.(2017)Salimans, Ho, Chen, Sidor, and
  Sutskever]{salimans2017evolution}
Salimans, T., Ho, J., Chen, X., Sidor, S., and Sutskever, I.
\newblock Evolution strategies as a scalable alternative to reinforcement
  learning.
\newblock \emph{arXiv preprint arXiv:1703.03864}, 2017.

\bibitem[Syed \& Schapire(2008)Syed and Schapire]{syed2008game}
Syed, U. and Schapire, R.~E.
\newblock A game-theoretic approach to apprenticeship learning.
\newblock In \emph{Advances in neural information processing systems}, pp.\
  1449--1456, 2008.

\bibitem[Wesselink et~al.(2018)Wesselink, Kappen, Torn, Slooter, and van
  Klei]{wesselink2018intraoperative}
Wesselink, E., Kappen, T., Torn, H., Slooter, A., and van Klei, W.
\newblock Intraoperative hypotension and the risk of postoperative adverse
  outcomes: a systematic review.
\newblock \emph{British journal of anaesthesia}, 2018.

\bibitem[Xu et~al.(2018)Xu, Ratner, Dragan, Levine, and Finn]{xu2018learning}
Xu, K., Ratner, E., Dragan, A., Levine, S., and Finn, C.
\newblock Learning a prior over intent via meta-inverse reinforcement learning.
\newblock \emph{arXiv preprint arXiv:1805.12573}, 2018.

\bibitem[Zahavy et~al.(2020{\natexlab{a}})Zahavy, Cohen, Kaplan, and
  Mansour]{zahavy2019apprenticeship}
Zahavy, T., Cohen, A., Kaplan, H., and Mansour, Y.
\newblock Apprenticeship learning via frank-wolfe.
\newblock 2020{\natexlab{a}}.

\bibitem[Zahavy et~al.(2020{\natexlab{b}})Zahavy, Cohen, Kaplan, and
  Mansour]{zahavy2019average}
Zahavy, T., Cohen, A., Kaplan, H., and Mansour, Y.
\newblock Average reward reinforcement learning with unknown mixing times.
\newblock \emph{Proceedings of the Thirty-Sixth Conference on Uncertainty in
  Artificial Intelligence}, 2020{\natexlab{b}}.

\bibitem[Zinkevich(2003)]{zinkevich2003online}
Zinkevich, M.
\newblock Online convex programming and generalized infinitesimal gradient
  ascent.
\newblock In \emph{Proceedings of the 20th International Conference on Machine
  Learning (ICML-03)}, pp.\  928--936, 2003.

\end{thebibliography}
\bibliographystyle{icml2020}      

\appendix
\appendixpage

\section{Proofs for Section 3}
\label{app:4}
\begin{definition}[Bregman distance]
\label{def:Bregman}
    Let $\psi: \mathcal{W} \rightarrow R$ be strongly convex and continuously differential in the interior of $\mathcal{W}$. The Bregman distance is defined by 
     $D_\psi(x,y)  = \psi(x) - \psi(y) - ( x -y)\cdot \nabla \psi(y),
    $
    where $\psi$ is strongly convex with parameter $\sigma$.
\end{definition}

\begin{definition}[Conjugate function]
\label{def:conj}
    The conjugate of a function $\psi(y)$, denoted by  $\psi^*(y)$ is $$\max_{x\in\mathcal{W}}\left\{ x \cdot y - \psi(x)\right\}.$$
\end{definition}

\noindent
\textbf{Example}: let $\|\cdot \|$ be a norm on $\mathbb {R} ^{n}.$ The associated dual norm, denoted $\|\cdot \|_{*},$ is defined as $\|z\|_{*}=\sup\{z^{\intercal }x\;|\;\|x\|\leq 1\}.$ The dual norm of $\|\cdot \|_2$ is $\|\cdot \|_2$, and the dual norm of $\|\cdot \|_1$ is $\|\cdot \|_\infty$.\\
\noindent
Before we begin with the proof of \cref{thm:covexloss}, we make the following observation. By definition, $\hat \pi_c (W)$ is the optimal policy w.r.t.~$c^TW;$ i.e., for any policy $\pi$ we have that
\begin{equation}
    \label{eq:optimallity}
    c^T W \cdot \mu^{\hat \pi_c (W)}_c \ge c^TW \cdot \mu^\pi_c.
\end{equation}



\begin{proof}[Proof of \cref{thm:covexloss}]~\\
\noindent
\textbf{1.} We need to show that $\forall W_1,W_2 \in \mathcal{W},\forall \lambda\in [0,1], $ we have that $$L_{\text{lin}}(\lambda W_1 + (1-\lambda)W_2) \le  \lambda L_{\text{lin}}(W_1)+(1-\lambda)L_{\text{lin}}(W_2)$$ 
\begin{align*}
  &L_{\text{lin}}(\lambda W_1 + (1-\lambda)W_2) \\ &= \mathbb{E}_c \left[ c^T(\lambda W_1 + (1-\lambda)W_2) \cdot  \Big(\mu^{\hat{\pi}_{c}\big(
     \lambda W_1 + (1-\lambda)W_2\big)}_c - \mu^*_{c}\Big)\right]\\
  &= \lambda \mathbb{E}_c \left[ c^TW_1 \cdot \Big(\mu^{\hat{\pi}_{c}\big(
     \lambda W_1 + (1-\lambda)W_2\big)}_c - \mu^*_{c}\Big) \right]\\
  & \;\;\;\; + (1-\lambda) \mathbb{E}_c \left[ c^TW_2 \cdot \Big(\mu^{\hat{\pi}_{c}\big(
     \lambda W_1 + (1-\lambda)W_2\big)}_c - \mu^*_{c}\Big)\right] \\
  &\le \lambda \mathbb{E}_c \left[ c^TW_1 \cdot \Big(\mu^{\hat{\pi}_{c}\big( W_1 \big)}_c - \mu^*_{c}\Big) \right]
  + (1-\lambda) \mathbb{E}_c \left[ c^TW_2 \cdot \Big(\mu^{\hat{\pi}_{c}\big(W_2\big)}_c - \mu^*_{c}\Big)\right] \\
  &= \lambda L_{\text{lin}}(W_1)+(1-\lambda)L_{\text{lin}}(W_2),  
\end{align*}
where the inequality follows from \cref{eq:optimallity}.

\noindent
\textbf{2.} 
Fix $z \in \mathcal{W}.$ We have that  
\begin{align*}
L_{\text{lin}}(z) &= \mathbb{E}_c \left[ c^Tz \cdot  \Big(\mu^{\hat{\pi}_{c}(
     z)}_c - \mu^*_{c}\Big)\right] \\
& \ge \mathbb{E}_c \left[ c^Tz \cdot  \Big(\mu^{\hat{\pi}_{c}(
     W)}_c - \mu^*_{c}\Big)\right]\\ 
&= L_{\text{lin}}(W) + (z - W) \cdot \mathbb{E}_c \left[c\odot \Big(\mu^{\hat{\pi}_{c}(
     W)}_c - \mu^*_{c}\Big) \right],
\end{align*}
where the inequality follows from \cref{eq:optimallity}.

\noindent
\textbf{3.}
Recall that a bound on the dual norm of the subgradients implies Lipschitz continuity for convex functions. Thus it is enough to show that $\forall W \in \mathcal{W}, \| g(W)\|_p =  \| \mathbb{E}_c \left[c\odot \Big(\mu^{\hat{\pi}_{c}(W)}_c - \mu^*_{c}\Big) \right] \|_p \le L.$ Let $p=\infty$, we have that
\begin{align}
\| g(W)\|_\infty & = \left\| \mathbb{E}_c c\odot \Big(\mu^{\hat{\pi}_{c}(W)}_c - \mu^*_{c}\Big)  \right\|_\infty  \nonumber \\
& \le \mathbb{E}_c \| c\odot \Big(\mu^{\hat{\pi}_{c}(W)}_c - \mu^*_{c}\Big)  \|_\infty \tag{Jensen inequality} \\
& \le \mathbb{E}_c \| c \|_\infty \norm{\mu^{\pi_i}_c-\mu^{\pi_j}_c}_\infty \le \frac{2}{1-\gamma} \label{eq:L}.
\end{align}
where in \cref{eq:L} we used the fact that $\forall \pi$ we have that $\norm{\mu^\pi_c}_\infty \le \frac{1}{1-\gamma}, $ 
thus, for any $\pi_i,\pi_j,$ $$\norm{\mu^{\pi_i}_c-\mu^{\pi_j}_c}_\infty \le \frac{2}{1-\gamma}.$$ 
\noindent
Therefore, $L = \frac{2}{1-\gamma}$ w.r.t.~$\norm{\cdot}_\infty$. Since $\norm{\cdot}_2 \le \sqrt{dk}\norm{\cdot}_\infty $  we get that $L = \frac{2\sqrt{dk}}{1-\gamma}$ w.r.t.~$\norm{\cdot}_2$ . 
\end{proof}

\section{Proofs for Section 3.2}\label{sec:ellipsoid_proofs}

\subsection{Proof of Theorem~\ref{thm: upper bound}}
\begin{proof}[Proof of Theorem ~\ref{thm: upper bound}]
We prove the theorem by showing that the volume of the ellipsoids $\Theta_t$ for $t=1,2,...$ is bounded from below. 
In conjunction with Lemma~\ref{lemma: ellipsoid}, which claims there is a minimal rate of decay in the ellipsoid volume, this shows that the number of times the ellipsoid is updated is polynomially bounded.\\

\noindent
We begin by showing that $\underline{W}^*$ always remains in the ellipsoid. We note that in rounds where $V^*_{c_t} - V^{\hat\pi_t}_{c_t} > \epsilon$, we have 
$\underbar{$W^*$}^T \Big(c_t\odot\Big(\mu^*_{c_t}-\mu^{\hat\pi_t}_{c_t}\Big)\Big) > \epsilon$. In addition, as the agent acts optimally w.r.t. the reward $r_t = c_t^TW_t,$ we have that
 $ \underline{W}_t^T \Big(c_t\odot\Big(\mu^*_{c_t}-\mu^{\hat\pi_t}_{c_t}\Big)\Big) \leq 0$ . Combining these observations yields:
\begin{align}
\label{eq:1}
\left( \underline{W}^*-\underline{W}_t\right)^T \cdot \Big(c_t\odot\Big(\mu^*_{c_t}-\mu^{\hat\pi_t}_{c_t}\Big)\Big) > \epsilon > 0 \enspace .
\end{align}
This shows that $\underline{W}^*$ is never disqualified when updating $\Theta_t$. Since $\underline{W}^* \in \Theta_0$ this implies that $\forall t: \underline{W}^* \in \Theta_t$.
Now we show that not only $\underline{W}^*$ remains in the ellipsoid, but also a small ball surrounding it. If $\theta$ is disqualified by the algorithm: $ (\theta-\underline{W}_t)^T \cdot \Big(c_t\odot\Big(\mu^*_{c_t}-\mu^{\hat\pi_t}_{c_t}\Big)\Big) < 0 $ . Multiplying this inequality by -1 and adding it to \eqref{eq:1} yields: $(\underline{W}^*-\theta)^T \cdot \Big(c_t\odot\Big(\mu^*_{c_t}-\mu^{\hat\pi_t}_{c_t}\Big)\Big) > \epsilon.$ We apply H\"older inequality to LHS: 
\begin{align*}
    \epsilon & < \text{LHS} \\
    & \leq ||\underline{W}^*-\theta||_\infty \cdot || \Big(c_t\odot\Big(\mu^*_{c_t}-\mu^{\hat\pi_t}_{c_t}\Big)\Big) ||_1 \\
    & \leq \frac{2k}{1-\gamma}||\underline{W}^*-\theta||_\infty
\end{align*}
\noindent
Therefore for any disqualified $\theta$: $||\underline{W}^* - \theta||_\infty > \frac{(1-\gamma)\epsilon}{2k}$, thus $B_\infty\left(\underline{W}^*,\frac{(1-\gamma)\epsilon}{2k}\right)$ is never disqualified. This implies that: $$ \forall t: \text{vol}(\Theta_t) \geq \text{vol}(\Theta_0 \cap B_\infty(\underbar{$W^*$},\frac{(1-\gamma)\epsilon}{2k})) \geq \text{vol}(B_\infty(\underbar{$W^*$},\frac{(1-\gamma)\epsilon}{4k})) .$$ Finally, let $M_T$ be the number of rounds by $T$ in which $V^*_{c_t} - V^{\hat\pi_t}_{c_t} > \epsilon$. Using Lemma~\ref{lemma: ellipsoid} we get that:
\begin{align*}
\frac{M_T}{2(d k + 1)} \leq & \log \big(\text{vol}(\Theta_1)\big) - \log \big(\text{vol}(\Theta_{T+1})\big)\\
\leq & \log \big(\text{vol}\big(\text{MVEE}(B_\infty(0,1))\big)\big) - \log \big(\text{vol}(B_\infty(0,\frac{(1-\gamma)\epsilon}{4k}))\big) \\
\leq & \log \big(\text{vol}\big(\text{MVEE}(B_2(0,\sqrt{dk}))\big)\big) - \log \big(\text{vol}(B_2(0,\frac{(1-\gamma)\epsilon}{4k}))\big) \\
\leq & \log \big( ( \frac{4k\sqrt{dk}}{(1-\gamma)\epsilon} ) ^ {dk} \big)\\
\leq & d k\log\frac{4k\sqrt{d k}}{(1-\gamma)\epsilon} \enspace .
\end{align*}

\noindent 
Therefore $M_T \leq 2dk(dk+1)\log\frac{4k\sqrt{dk}}{(1-\gamma)\epsilon} = \mathcal{O}(d^2k^2\log(\frac{d k}{(1-\gamma)\epsilon}))$ .
\end{proof}

\subsection{Proof of Theorem~\ref{thm: trajectories}}
\begin{lemma}[Azuma's inequality]\label{lemma: azuma}
For a martingale $\{S_i\}_{i=0}^{n}$, if $|S_i-S_{i-1}|\le b$ a.s. for $i=1,...,n$:\\
$P\bigg(|S_n-S_0|>b\sqrt{2n\log(\frac{2}{\delta})}\bigg)<\delta$
\end{lemma}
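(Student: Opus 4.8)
The plan is to prove this by the exponential moment (Chernoff) method, the standard route to martingale concentration. Writing $X_i = S_i - S_{i-1}$ for the martingale differences, the hypotheses give $|X_i| \le b$ almost surely and, by the martingale property, $\mathbb{E}[X_i \mid \mathcal{F}_{i-1}] = 0$, where $\mathcal{F}_{i-1} = \sigma(S_0,\dots,S_{i-1})$. Since $S_n - S_0 = \sum_{i=1}^n X_i$, I would first reduce the two-sided statement to a one-sided tail bound: once I show $P(S_n - S_0 > t) \le \delta/2$ for $t = b\sqrt{2n\log(2/\delta)}$, applying the same argument to the martingale $\{-S_i\}$ (whose differences $-X_i$ again satisfy $|-X_i|\le b$ and have zero conditional mean) bounds $P(S_0 - S_n > t) \le \delta/2$, and a union bound yields $P(|S_n - S_0| > t) \le \delta$.

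For the one-sided bound, fix $\lambda > 0$ and apply Markov's inequality to $e^{\lambda(S_n - S_0)}$:
\[
P(S_n - S_0 > t) \le e^{-\lambda t}\, \mathbb{E}\!\left[e^{\lambda \sum_{i=1}^n X_i}\right].
\]
The key step is to control this moment generating function by peeling off one factor at a time. Conditioning on $\mathcal{F}_{n-1}$ and using the tower rule,
\[
\mathbb{E}\!\left[e^{\lambda \sum_{i=1}^n X_i}\right] = \mathbb{E}\!\left[e^{\lambda \sum_{i=1}^{n-1} X_i}\, \mathbb{E}\!\left[e^{\lambda X_n}\mid \mathcal{F}_{n-1}\right]\right],
\]
so everything hinges on bounding the conditional MGF $\mathbb{E}[e^{\lambda X_n}\mid \mathcal{F}_{n-1}]$ of a single bounded, conditionally mean-zero increment.

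Here I would invoke \emph{Hoeffding's lemma}: if $Y$ satisfies $\mathbb{E}[Y]=0$ and $a \le Y \le c$ almost surely, then $\mathbb{E}[e^{\lambda Y}] \le e^{\lambda^2 (c-a)^2/8}$. With $a=-b$ and $c=b$ this gives $\mathbb{E}[e^{\lambda X_i}\mid \mathcal{F}_{i-1}] \le e^{\lambda^2 b^2/2}$. I expect this lemma to be the main obstacle, since it is the only nonroutine ingredient: its proof uses convexity of $x \mapsto e^{\lambda x}$ to dominate $e^{\lambda Y}$ by the chord through the endpoints $(-b,e^{-\lambda b})$ and $(b,e^{\lambda b})$, takes expectations using $\mathbb{E}[Y]=0$, and then shows the resulting function of $\lambda$ is at most $e^{\lambda^2 b^2/2}$ via a second-order Taylor expansion of its logarithm. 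Iterating the conditioning $n$ times gives $\mathbb{E}[e^{\lambda(S_n - S_0)}] \le e^{n\lambda^2 b^2/2}$, and hence $P(S_n - S_0 > t) \le e^{-\lambda t + n \lambda^2 b^2/2}$.

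Finally I would optimize the free parameter $\lambda$. Minimizing the exponent yields $\lambda = t/(nb^2)$ and the bound $P(S_n - S_0 > t) \le e^{-t^2/(2nb^2)}$. Substituting $t = b\sqrt{2n\log(2/\delta)}$ makes the exponent exactly $-\log(2/\delta)$, so $P(S_n - S_0 > t) \le \delta/2$. Combining this with the symmetric bound through the union bound described above gives $P(|S_n - S_0| > t) \le \delta$, which is the claim; the strict inequality is immaterial, reflecting only the slack inherent in the Chernoff step.
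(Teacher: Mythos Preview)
Your argument is correct and is the standard Chernoff--Hoeffding route to Azuma's inequality. Note, however, that the paper does not actually prove this lemma: it is merely stated as a classical result and then invoked as a tool within the proof of Theorem~\ref{thm: trajectories}, so there is no paper proof to compare against.
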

\begin{proof}[Proof of Theorem~\ref{thm: trajectories}]
We first note that we may assume that for any $t$: $||W^*-W_t||_\infty \le 2$. If $\underbar{W}_t\not\in\Theta_0$, we update the ellipsoid by $\Theta_{t} \leftarrow \text{MVEE} \bigg(\bigg\{\theta\in\Theta_t:\Big(\theta-\underline{W}_t\Big)^T \cdot e_j \lessgtr 0 \bigg\}\bigg)$ where $e_j$ is the indicator vector of coordinate $j$ in which $\underbar{W}_t$ exceeds 1, and the inequality direction depends on the sign of $(\underbar{W}_t)_j$. If $\underbar{W}_t\not\in\Theta_0$ still, this process can be repeated for a finite number of steps until $\underbar{W}_t\in\Theta_0$, as the volume of the ellipsoid is bounded from below and each update reduces the volume (Lemma~\ref{lemma: ellipsoid}). Now we have $\underbar{W}_t\in\Theta_0$, implying $||W^*-W_t||_\infty \le 2$. As no points of $\Theta_0$ are removed this way, this does not affect the correctness of the proof. Similarly, we may assume $||W_t^*-W_t||_\infty \le 2$ as $W_t^* \in \Theta_0$.\\

\noindent
We denote $W_t$ which remains constant for each update in the batch by $W$. 
We define $t(i)$ the time-steps corresponding to the demonstrations in the batch for $i=1,...,n$.
We define $z_i^{*,H}$ to be the expected value of $\hat{z}_i^{*,H}$, and $z_i^*$ to be the outer product of $c_{t(i)}$ and the feature expectations of the expert policy for $W^*_{t(i)},c_{t(i)},\xi_{t(i)}'$ . We also denote $W^*_{t(i)}$ by $W_i^*$. We bound the following term from below, as in Theorem~\ref{thm: upper bound}:
\begin{align*}
& ( \underline{W}^*-\underline{W})^T \cdot (\frac{\Bar{Z}^*}{n}-\frac{\Bar{Z}}{n}) \\
& = \frac{1}{n}\sum_{i=1}^{n} (\underline{W}^*-\underline{W})^T \cdot ( \hat{z}_i^{*,H} - z_i) \\
& = \frac{1}{n}\sum_{i=1}^{n} (\underline{W}^*-\underline{W})^T \cdot ( z_i^* - z_i) + \frac{1}{n}\sum_{i=1}^{n} (\underline{W}^*-\underline{W})^T \cdot ( z_i^{*,H} - z_i^*) + \\
& \;\;\;\;\; \frac{1}{n}\sum_{i=1}^{n} (\underline{W}^*-\underline{W})^T \cdot ( \hat{z}_i^{*,H} - z_i^{*,H}) \\
& = \underbrace{\frac{1}{n}\sum_{i=1}^{n} (\underline{W}_i^*-\underline{W})^T \cdot ( z_i^* - z_i)}_{(1)} + \underbrace{\frac{1}{n}\sum_{i=1}^{n} (\underline{W}^*-\underline{W}_i^*)^T \cdot ( z_i^* - z_i)}_{(2)} + \\
& \;\;\;\; \underbrace{\frac{1}{n}\sum_{i=1}^{n} (\underline{W}^*-\underline{W})^T \cdot ( z_i^{*,H} - z_i^*)}_{(3)} + \underbrace{\frac{1}{n}\sum_{i=1}^{n} (\underline{W}^*-\underline{W})^T \cdot ( \hat{z}_i^{*,H} - z_i^{*,H})}_{(4)}
\end{align*}

\noindent
\textbf{(1)}: Since the sub-optimality criterion implies a difference in value of at least $\epsilon$ for the initial distribution which assigns $1$ to the state where the agent errs, we may use identical arguments to the previous proof. Therefore, the term is bounded from below by $\epsilon$.\\

\noindent
\textbf{(2)}: By assumption $|| \underline{W}^*-\underline{W}_i^*||_\infty \le \frac{(1-\gamma)\epsilon}{8k}$ thus since $||( z_i^* - z_i)||_1 \le \frac{2k}{1-\gamma}$ by H\"older's inequality the term is bounded by $\frac{\epsilon}{4}$.\\

\noindent
\textbf{(3)}: We have $||x_i^{*,H}-x_i^*||_1 \le \frac{k\gamma^H}{1-\gamma}$ from definitions, thus $||z_i^{*,H}-z_i^*||_1 \le \frac{k\gamma^H}{1-\gamma}$ since $c \in \Delta_{d-1}$. As mentioned previously we may assume $||W^*-W_t||_\infty \le 2$, therefore by H\"older's inequality the term is bounded by $\frac{\epsilon}{4}$ due to our choice of $H$:
\begin{align*}
\gamma^H & = (1-(1-\gamma))^H \\
& = \big((1-(1-\gamma))^{\frac{1}{1-\gamma}}\big)^{(1-\gamma)H}\\
& = \big((1-(1-\gamma))^{\frac{1}{1-\gamma}}\big)^{\log(\frac{8k}{(1-\gamma)\epsilon})}\\
& \le e^{-\log(\frac{8k}{(1-\gamma)\epsilon})}\\
& = \frac{(1-\gamma)\epsilon}{8k}.
\end{align*}

\noindent
\textbf{(4)}: 
The partial sums $\sum_{i=1}^N(\underline{W}^*-\underline{W})^T \cdot (z_i^{*,H} - \hat{z}_i^{*,H})$ for $N=0,...,n$ form a martingale sequence. Note that: $$||z_i^{*,H}||_1 \le \frac{k}{1-\gamma},\;\; ||\hat{z}_i^{*,H}||_1 \le \frac{k}{1-\gamma},\;\; ||W^*-W_t||_\infty \le 2,$$ thus, we can apply Azuma's inequality (Lemma~\ref{lemma: azuma}) with $b=\frac{4k}{(1-\gamma)}$ and with our chosen n this yields: $\sum_{i=1}^n(\underline{W}^*-\underline{W})^T \cdot (z_i^{*,H} - \hat{z}_i^{*,H}) \le \frac{n\epsilon}{4}$ with probability of at least $1-\frac{\delta}{2dk(dk+1)\log(\frac{16k\sqrt{dk}}{(1-\gamma)\epsilon})}$.\\

\noindent
Thus $(\underline{W}^*-\underline{W})^T \cdot (\frac{\Bar{Z}^*}{n}-\frac{\Bar{Z}}{n}) > \frac {\epsilon}{4}$ and as in Theorem~\ref{thm: upper bound} this shows $B_\infty(\underline{W}^*,\frac{(1-\gamma)\epsilon}{8k})$ is never disqualified, and the number of updates is bounded by $2dk(dk+1)\log(\frac{16k\sqrt{dk}}{(1-\gamma)\epsilon})$, and multiplied by n this yields the upper bound on the number of rounds in which a sub-optimal action is chosen. By union-bound, the required bound for term \textbf{(4)} holds in all updates with probability of at least $1-\delta$.
\end{proof}

\newpage
\section{Proofs for Section 3.4}
\begin{proof}[Proof of Theorem ~\ref{thm: transfer}]
This proof follows the proof for Lemma 1 in \citep{barreto2017successor}, with additional arguments taken from proofs of the simulation lemma. We define $\epsilon_P = \max_{s,a}||P_c(\cdot|s,a)-P_{c_j}(\cdot|s,a)||_1, \epsilon_R = \max_{s} |R^*_c(s) - R^*_{c_j}(s)|$. \\
We first note that:
\begin{equation*}
\begin{split}
Q^*_{c}(s,a) - Q^{\pi^*_{c_j}}_{c}(s,a) & \le |Q^*_{c}(s,a) - Q^*_{c_j}(s,a)| + |Q^*_{c_j}(s,a) - Q^{\pi^*_{c_j}}_{c}(s,a)| \\
\end{split}
\end{equation*}
and bound each of these terms:
\begin{equation*}
\begin{split}
& |Q^*_{c}(s,a) - Q^*_{c_j}(s,a)| \\
& = \big|R^*_c(s) - R^*_{c_j}(s) + \gamma \big( \sum_{s'} P_c(s'|s,a) \max_b Q^*_{c}(s',b) -  \sum_{s'} P_{c_j}(s'|s,a) \max_b Q^*_{c_j}(s',b) \big) \big| \\
& \le |R^*_c(s) - R^*_{c_j}(s)| + \gamma \big| \sum_{s'} P_c(s'|s,a) \big( \max_b Q^*_{c}(s',b) -  \max_b Q^*_{c_j}(s',b) \big) \big| + \\
& \;\;\;\;\; \gamma \big| \sum_{s'} \big(P_c(s'|s,a) - P_{c_j}(s'|s,a) \big) \max_b Q^*_{c_j}(s',b) \big| \\
& \le \epsilon_R + \gamma \sum_{s'} P_c(s'|s,a) \big| \max_b Q^*_{c}(s',b) -  \max_b Q^*_{c_j}(s',b) \big| +  \gamma V_{max} ||P_c(\cdot|s,a)-P_{c_j}(\cdot|s,a)||_1\\
& \le \epsilon_R + \gamma V_{max} \epsilon_P + \gamma \sum_{s'} P_c(s'|s,a) \max_b \big| Q^*_{c}(s',b) - Q^*_{c_j}(s',b) \big|\\
& \le \epsilon_R + \gamma V_{max} \epsilon_P + \gamma \max_{s',b} | Q^*_{c}(s',b) - Q^*_{c_j}(s',b) |
\end{split}
\end{equation*}
taking $\max_{s,a}$ of the resulting inequality and solving for LHS yields:
$$ \max_{s,a}|Q^*_{c}(s,a) - Q^*_{c_j}(s,a)| \le \frac{\epsilon_R + \gamma V_{max} \epsilon_P}{1-\gamma}. $$
For the second term we follow similar steps:
\begin{equation*}
\begin{split}
& |Q^*_{c_j}(s,a) - Q^{\pi^*_{c_j}}_{c}(s,a)| \\
& = \epsilon_R + \gamma \big| \sum_{s'} P_c(s'|s,a) Q^*_{c_j}(s',\pi^*_{c_j}(s')) - \sum_{s'} P_{c_j}(s'|s,a)  Q^{\pi^*_{c_j}}_{c}(s',\pi^*_{c_j}(s')) \big|\\
& \le \epsilon_R + \gamma V_{max} \epsilon_P + \gamma \sum_{s'} P_c(s'|s,a) \big| Q^*_{c_j}(s',\pi^*_{c_j}(s')) - Q^{\pi^*_{c_j}}_{c}(s',\pi^*_{c_j}(s')) \big| \\
& \le \epsilon_R + \gamma V_{max} \epsilon_P + \gamma max_{s',b} | Q^*_{c_j}(s',b) - Q^{\pi^*_{c_j}}_{c}(s',b) |
\end{split}
\end{equation*}
taking $\max_{s,a}$ of the resulting inequality and solving for LHS yields:
$$ \max_{s,a}|Q^*_{c_j}(s,a) - Q^{\pi^*_{c_j}}_{c}(s,a)| \le \frac{\epsilon_R + \gamma V_{max} \epsilon_P}{1-\gamma}. $$
Plugging these bounds into the first inequality yields:
$$ Q^*_{c}(s,a) - Q^{\pi^*_{c_j}}_{c}(s,a) \le 2 \frac{\epsilon_R + \gamma V_{max} \epsilon_P}{1-\gamma}. $$
Now, we express $\epsilon_R, \epsilon_P$ in terms of the distance between the contexts:
\begin{align*}
\epsilon_P & = \max_{s,a} \sum_{s'} |P_c(s'|s,a)-P_{c_j}(s'|s,a)| \\
& = \max_{s,a} \sum_{s'} |(c-c_j)^T \begin{bmatrix} P_1(s'|s,a) \\ \vdots \\ P_d(s'|s,a) \end{bmatrix} | \\
& \le ||c-c_j||_\infty \sum_{s'}|| \begin{bmatrix} P_1(s'|s,a) \\ \vdots \\ P_d(s'|s,a) \end{bmatrix} ||_1\\
& = d||c-c_j||_\infty ,\\
\epsilon_R & = \max_s|R^*_c(s) -R^*_{c_j}(s)|\\
& = \max_s|(c-c_j)^T(W^*\phi(s)| \\
& \le ||c-c_j||_\infty \max_s ||W^*\phi(s)||_1 \\
& = \phi_{max}||c-c_j||_\infty
\end{align*}
\noindent
which, plugged into our inequality, yields:
$$ Q^*_{c}(s,a) - Q^{\pi^*_{c_j}}_{c}(s,a) \le 2 \frac{\phi_{max} + \gamma d V_{max}}{1-\gamma}||c-c_j||_\infty. $$
Note that as a special case, if the dynamics are identical for all contexts, $\epsilon_P=0$, therefore:
$$ Q^*_{c}(s,a) - Q^{\pi^*_{c_j}}_{c}(s,a) \le 2 \frac{\phi_{max}}{1-\gamma}||c-c_j||_\infty. $$
To convert the bound to the value function, we add a dummy initial state $s_0$, with $\phi(s_0)=0$ and $\forall a: P(\cdot|s_0,a)=\xi$. In this case, applying the above inequality for the initial state yields:
$$ V^*_{c} - V^{\pi^*_{c_j}}_{c} = \frac{1}{\gamma}\big( Q^*_{c}(s_0,a) - Q^{\pi^*_{c_j}}_{c}(s_0,a) \big) \le 2 \frac{\phi_{max} + \gamma d V_{max}}{\gamma(1-\gamma)}||c-c_j||_\infty $$
\end{proof}
\begin{proof}[Proof of Theorem ~\ref{thm: gpi}]
We denote the maximizing index and action by $$a_i, i \in \argmax_a \argmax_i Q^{\pi^*_i}_{c}(s,a).$$ We have that
\begin{align*}
V^*_{c}(s) - V^\pi_{c}(s) & = \max_a Q^*_{c}(s,a) - Q^{\pi}_{c}(s,a_i)\\
& \le \max_a Q^*_{c}(s,a) - Q^{\pi_{c_i}^*}_{c}(s,a_i)\\
& \le \max_a Q^*_{c}(s,a) - \max_a Q^{\pi_{c_j}^*}_{c}(s,a) \\
& \le \max_a  \big(Q^*_{c}(s,a) - Q^{\pi_{c_j}^*}_{c}(s,a)\big) \\
& \le 2 \frac{\phi_{max}}{1-\gamma}||c-c_j||_\infty
\end{align*}
where the first inequality is due to the Generalized Policy Improvement Theorem, Theorem 1 in \citep{barreto2017successor}, which claims: $Q^\pi_{c}(s,a) \ge Q^{\pi_{c_j}^*}_{c}(s,a)$, the second inequality is from the definition of $i,a_i$, and the last inequality is the second to last inequality from the previous proof. Taking $\min_j$ then expectation w.r.t. $s \sim \xi$ finishes this proof.
\end{proof}

\section{Experiments}\label{app:exp}
In this section, we describe the technical details of our experiments, including the hyper-parameters used. To solve MDPs, we use value iteration. Our implementation is based on a stopping condition with a tolerance threshold, $\tau$, such that the algorithm stops if $|V_t-V_{t-1}| < \tau.$ In the autonomous driving simulation and grid world domains we use $\tau = 10^{-4}$ and in the dynamic treatment regime we use $\tau = 10^{-3}$. 
\\


\subsection{Grid world}

The grid world domain, presented at \cref{subsec:largemdp} is constructed for computational comparisons between methods. The test data includes $100$ contexts. Here we include more results in this domain, all measured on the same setup as in \cref{subsec:largemdp}.
\\

\begin{figure*}[h]
    \centering
    \subfigure[Value]{\includegraphics[width=0.472\linewidth]{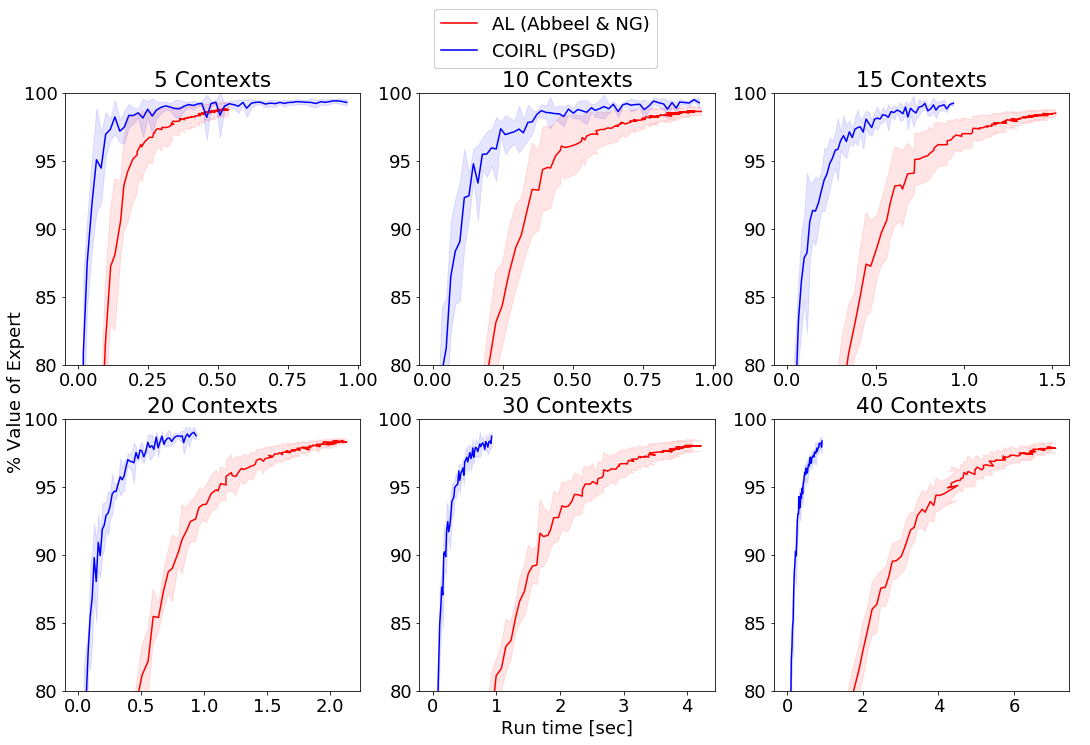}
    \label{fig: AL_valueXruntime}}
    \subfigure[Accuracy]{\includegraphics[width=0.472\linewidth]{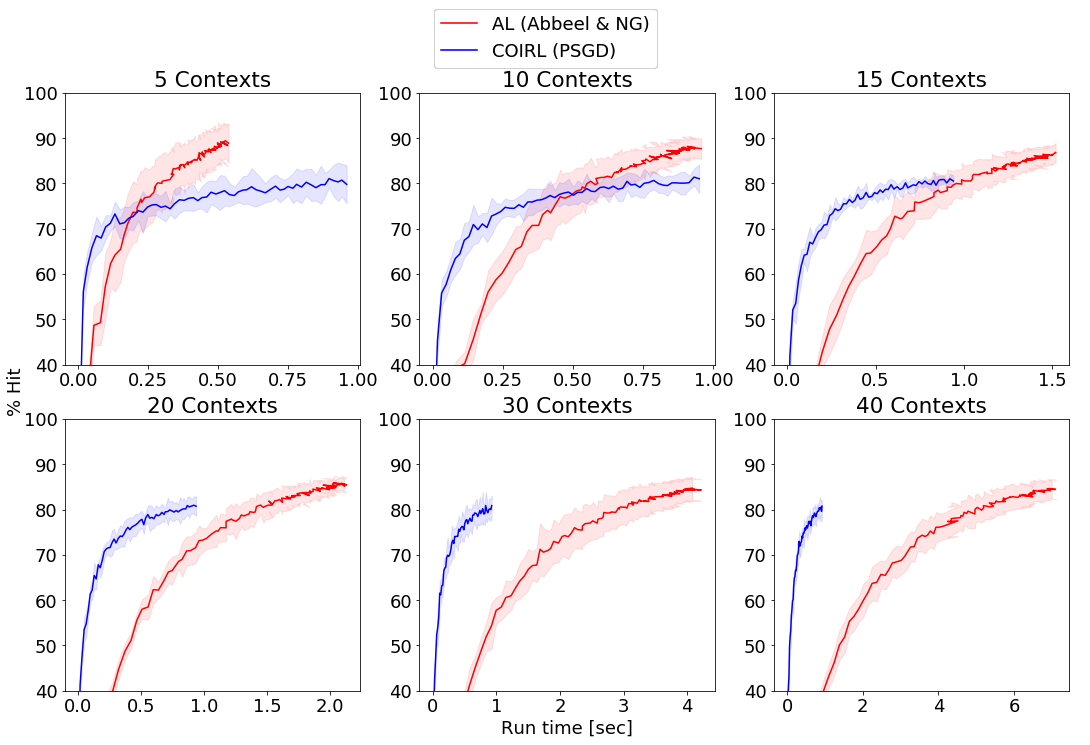}
    \label{fig: AL_accuracyXruntime}}
    \caption{\textbf{value and accuracy as a function of run-time in various context space size}. The advantage of COIRL grows as the context space grows.}
\end{figure*}

\noindent
The results shown in \cref{fig: AL_valueXruntime} present the value as a function of run-time for each context space size $|\mathcal{C}|$. In \cref{fig: AL_accuracyXruntime} we show the accuracy w.r.t. the expert policy in the same manner. We observe that COIRL achieves better value for any size of the train data and that AL achieves better accuracy after convergence. The accuracy over run-time show that the convergence of AL in a large context space takes more time and that the accuracy gap between the methods after convergence is reduced.

\subsection{Autonomous driving simulation}

The environment is modeled as a tabular MDP that consists of $1531$ states. The speed is selected once, at the initial state, and is kept constant afterward. The other $1530$ states are generated by 17 X-axis positions for the agent's car, 3 available speed values, 3 lanes and 10 Y-axis positions in which car B may reside. During the simulation, the agent controls the steering direction of the car, moving left or right, i.e., two actions. The feature vector $\phi(s)$ is composed of 3 features: (1) speed, (2) ``collision", which is set to $0$ in case of a collision and $0.5$ otherwise, and (3) ``off-road", which is $0.5$ if the car is on the road and $0$ otherwise.
\\

\noindent
In these experiments, we define our mappings in a way that induces different behaviors for different contexts, making generalization a more challenging task. We evaluate all algorithms on the same sequences of contexts, and average the results over $5$ such sequences. The test data in this domain contains $80$ unobserved contexts.

\subsubsection{Ellipsoid setting}\label{section:ellipsoid_experiments}

This section describe the technical details about the experiments in \cref{subsec:ellipsoid_setting}. Here, the real mapping between the context to the reward is \textbf{linear}. We define $W^* = \Big( \begin{smallmatrix} -1 & 0.75 &  0.75 \\  0.5 & -1 & 1 \\ 0.75 & 1 & -0.75 \end{smallmatrix} \Big)$, before normalization. The contexts are sampled uniformly in the 2-dimensional simplex.
\\

\noindent
\textbf{Hyper-parameter selection and adjustments}: 
The algorithms maintained a $3\times3$ matrix to estimate $W^*$.

\noindent
\emph{Ellipsoid:} By definition, the ellipsoid algorithm is hyper-parameter free and does not require tuning.

\noindent
\emph{PSGD:} The algorithm was executed with with the parameters: $\alpha_0=0.3,\alpha_t=0.9^t\alpha_{t-1}$, and iterated for 40 epochs. An outer decay on the step size was added for faster convergence, the initial $\alpha_0$ becomes $0.94\cdot \alpha_0$ every time a demonstration is presented. The gradient, $g_t$ is normalized to be $g_t=g_t\frac{g_t}{||g_t||_\infty}$ and the calculated step is taken if: $cW_t\big(\mu(\hat{\pi}^t_c)-\mu(\pi^*_c)\big) > cW_{t+1}\big(\mu(\hat{\pi}^{t+1}_c)-\mu(\pi^*_c)\big) $, where $\hat{\pi}^t_c$ denotes the optimal policy for a context $c$ according to $W_t$.

\noindent
\emph{ES:} The algorithm was executed with the parameters: $\sigma=10^{-3}, m=250, \alpha = 0.1$ with decay rate of $0.95$, for $50$ iterations which didn't iterate randomly over one the contexts, but rather used the entire training set (all of the observed contexts and expert demonstrations up to the current time-step) for each step.
The matrix was normalized according to $||\cdot||_2$, and so was the step calculated by the ES algorithm, before it was multiplied by $\alpha$ and applied.

\subsubsection{Online setting}
This section describes the experiments of the online setting (\cref{subsub:car_online}).
All of the compared methods minimize the same objective, where the subgradients for the descent direction are computed using either the feature expectations (\textbf{feature expectations} setup) or expert trajectories of length $40$ (\textbf{trajectories} setup). In this framework at every iteration we sample one context and its corresponding feature expectations (or trajectory, sampled from the expert policy), and take one descent step according to it. The mapping from context to reward $W$ is linear, and projected to the EW algorithm requirement to be in the $dk-1$ simplex. In the autonomous driving simulation: $W^* = \Big( \begin{smallmatrix} 0.043 & 0 &  0.043 \\  0 & 0.434 & 0 \\ 0.043 & 0.434 & 0 \end{smallmatrix} \Big)$.
\\

\noindent
\textbf{Hyper-parameter selection and adjustments}:
The PSGD and EW algorithms are configured as the theory specifies, where each descent step is calculated from the one sample. The ES algorithm is applied with the parameters $\sigma = 10^{-3}, m=500, \alpha = 0.1$ with decay rate $0.95$, for every iteration. The ES implementation include a special enhancement; a descent step is taken if the objective function value decreases (after the descent step).


\subsection{Sepsis treatment}

For the sepsis treatment we construct two environments, one for simulation purposes - simulator, and another for evaluation on real-life data. The experiments with the simulator presented in \cref{subsec:med_rew}, the evaluation on real-life data presented in \cref{subsec:med_real}.
\\

\subsubsection{Sepsis treatment simulator}
As described in \cref{subsec:med_rew} we use the processed data from \cite{1902.03271}. It consists of $5366$ trajectories, each representing the sequential treatment provided by a clinician to a patient. At each time-step, the available information for each patient consists of $8$ static measurements and $41$ dynamic measurements. In addition, each trajectory contains the reported actions performed by the clinician (the number of fluids and vasopressors given to a patient at each time-step and binned to 25 different values), and there is a mortality signal which indicates whether the patient was alive 90 days after his hospital admission.
\\

\noindent
In order to create a tabular CMDP from the processed data, we separate the static measurements of each patient and keep them as the context. We cluster the dynamic measurements using K-means \citep{macqueen1967some}. Each cluster is considered a state and the coordinates of the cluster centroids are taken as its features $\phi(s)$. We construct the transition kernel between the clusters using the empirical transitions in the data given the state and the performed actions. Two states are added to the MDP and the feature vector is extended by $1$ element, corresponding to whether or not the patient died within the $90$ days following hospital release. This added feature receives a value of $0$ on all non-terminal states, a value of $-0.5$ for the state representing the patient's death and $0.5$ for the one representing survival. In addition, as the number of trajectories is limited, not all state-action pairs are represented in the data. In order to ensure the agent does not attempt to perform an action for which the outcome is unknown, we add an additional terminal state. At this state, all features are set to $-1$ to make it clearly distinguishable from all other states in the CMDP.
\\

\noindent
In our simulator, we used the same structure as the raw data, i.e., we used the same contexts found in the data and the same initial state distribution. Each context is projected onto the simplex and the expert's feature expectations for each context are attained by solving the CMDP. While we focus on a simulator, as it allows us to analyze the performance of the algorithms, our goal is to have a reward structure which is influenced by the data. Hence, we produce $W^*$ by running the ellipsoid algorithm on trajectories obtained from the data.
As done in the autonomous driving simulation, we average our results over $5$ different seeds. The test data size in this domain is $300$.
\\


\subsubsection{Online setting}
Similarly to the autonomous driving simulation, there are two setups. For the \textbf{trajectories} setup we use expert trajectories of length $40$. Again, the mapping from context to reward $W$ is linear, and projected to the EW algorithm requirement to be in the $dk-1$ simplex (the true mapping can be found in the supplementary code).
\\

\noindent
\textbf{Hyper-parameter selection and adjustments}:
The PSGD method is configured as specified by the theory, where each descent step is calculated from one sample.
\\

\subsubsection{Offline setting}
The offline setting evaluates the methods' performance on a limited train data set. In this framework, at every iteration we sample a mini-batch of contexts (from a finite set) and their corresponding trajectories (sampled from the expert policy) then taking one descent step according to them. 
We conduct two experiments that evaluate the performance on a fixed-size data set. First, we consider a \textbf{linear} mapping, followed by an analysis of the convergence when a DNN estimator of the reward is used, when the mapping is \textbf{non-linear}. Of the various COIRL methods, for these experiments, we focus on PSGD, as it is less restrictive on $\mathcal{W}$. In all experiments the train data consists of pairs of context and feature expectations of a trajectory of length $40$.
\\

\noindent
We evaluate the PSGD method and the GPI method (using the mapping calculated by PSGD) along with BC. The evaluation is done after convergence on a changing train data size, measured as 'contexts', which refer to the number of expert trajectories given (one per context). In the non-linear setting The non-linear model of PSGD implemented by a DNN with the context as its input, three layers with a leakyReLU activation and batch-normalization, each one of size $336$. BC in both environments implemented by a DNN that has three layers of sizes $250$,$125$,$25$ respectively, a leakyReLU activation between the first and second layers and a Softmax activation on the output to ensure a probability vector.
\\

\noindent
\textbf{Hyper-parameter selection and adjustments}: 
In the \textbf{linear} setting the PSGD algorithm is configured with step-size $\alpha_t = 0.25\cdot 0.95^t$, the mini-batch size is $10$, similarly to the autonomous driving simulation. In this domain the stopping criteria is $60$ iterations.
\\

\noindent
The \textbf{non-linear} setting computes the descent direction by backpropagation of the subgradient. Each descent step is calculated over a mini-batch of size $32$, where the step-size is $\alpha_t = 0.3\cdot 0.96^t$. We measure the results for 200 iterations. The train data consists $4000$ contexts. The true mapping is defined by $f^*(c) = r_1 \text{ if} \enspace \text{age} > 0.1, \enspace \text{and} \enspace r_2 \text{ otherwise}$, where \textbf{age} refers to the normalized age of the patient, an element of the context vector.

\end{document}